\documentclass[12pt]{article}

\usepackage[utf8]{inputenc}
\usepackage[T1]{fontenc}
\usepackage{lmodern}
\usepackage[margin=1in]{geometry}
\usepackage{amsmath,amssymb,amsthm}
\usepackage{mathtools}
\usepackage{booktabs}
\usepackage{array}
\usepackage[colorlinks=true,linkcolor=black,citecolor=black,urlcolor=blue]{hyperref}
\usepackage{cleveref}
\usepackage{natbib}
\usepackage{graphicx}
\usepackage{xcolor}
\usepackage{enumitem}
\usepackage{placeins}
\usepackage{float}
\usepackage{multirow}

\newtheorem{theorem}{Theorem}
\newtheorem{proposition}{Proposition}
\newtheorem{lemma}{Lemma}
\newtheorem{corollary}{Corollary}
\newtheorem{fact}{Fact}
\newtheorem{conjecture}{Conjecture}
\theoremstyle{definition}
\newtheorem{definition}{Definition}
\theoremstyle{remark}
\newtheorem{remark}{Remark}

\newcommand{\R}{\mathbb{R}}
\newcommand{\E}{\mathbb{E}}
\newcommand{\Var}{\operatorname{Var}}
\newcommand{\Cov}{\operatorname{Cov}}
\newcommand{\Corr}{\operatorname{Corr}}
\newcommand{\CV}{\operatorname{CV}}
\newcommand{\tr}{\operatorname{tr}}
\newcommand{\sign}{\operatorname{sign}}
\newcommand{\ReLU}{\operatorname{ReLU}}
\newcommand{\RMSNorm}{\operatorname{RMSNorm}}
\newcommand{\diag}{\operatorname{diag}}
\newcommand{\Unif}{\operatorname{Unif}}
\newcommand{\Dir}{\operatorname{Dir}}
\newcommand{\xhat}{\hat{x}}
\newcommand{\yhat}{\hat{y}}

\title{\textbf{A Geometric Analysis of Sign-Magnitude Asymmetry\\
in a ReLU + RMSNorm Block under Ternary Quantization}}
\author{Dong Lei \\
\textit{Independent Researcher} \\
\texttt{dongleiit@hotmail.com}}
\date{}

\begin{document}
\maketitle

\begin{abstract}
Pre-norm Transformers with RMSNorm tolerate ternary $\{-1,0,+1\}$ weight
quantization with surprisingly small loss \citep{ma2024bitnet}. We give a
geometric explanation via sign-magnitude decomposition of weight perturbations.
In a two-layer ReLU + RMSNorm model with i.i.d.\ Gaussian weights, sign-flips
produce $\pi/(\pi-2) \approx 2.75$ times more transverse output energy than
sign-preserving magnitude perturbations of equal Frobenius norm, as the flip
rate $p \to 0$ (Theorem~3). The mechanism: ReLU creates a hidden-space
directional asymmetry between the two perturbation types, which RMSNorm's
transverse-projection Fr\'{e}chet derivative selectively exposes.
Sign-quantization error is itself a sign-preserving perturbation with angular
alignment $\cos^2 \to 2/\pi$ (Theorem~4); its post-ReLU radial fraction
($0.365$) matches the pre-ReLU value $1-2/\pi$ within $0.4\%$, so ReLU is
approximately transparent to ternary error. Multi-layer compounding of the
$2.75\times$ factor is not experimentally supported; the gap to real-model sign
sensitivity arises from outlier features violating delocalization. For an
input dimension with amplitude $\alpha$, a single sign-flip produces post-ReLU
energy amplified by $R \approx n\alpha^2$ relative to a delocalized entry.
On TinyLlama-1.1B, at linear response ($p \leq 0.5\%$), count-matched NLL
leverage stabilizes at ${\sim}10\times \approx n\mathbb{E}[\alpha^2]$, matching
the per-entry theory; the all-column NLL ratio of $5.0\times$ falls within
$R_{\mathrm{col}} \leq 19$ ($67\times$ PPL gap reflects metric nonlinearity). Measured
outlier $\alpha$ at layer~12 (median $0.024$, max $0.26$) confirms heavy-tailed
concentration. The
Bussgang constant $2/\pi$, RMSNorm geometry, and ReLU half-space structure
together explain sign-magnitude asymmetry in pre-norm models, with
$R \propto n\alpha^2$ accounting for real-model deviations.
\end{abstract}

\section{Introduction}

\subsection{Problem Statement}

Recent work on extremely low-bit large language models---notably BitNet b1.58 \citep{ma2024bitnet}---demonstrates that pre-norm Transformer architectures \citep{xiong2020layer} can operate with weights constrained to the ternary set $\{-1, 0, +1\}$ while retaining most of their linguistic capability. This is remarkable: ternary quantization discards all magnitude information, preserving only the sign pattern and sparsity structure. Why should this suffice?

Empirically, the importance of weight signs over magnitudes has been documented since the Lottery Ticket Hypothesis era. \citet{zhou2019signs} showed that randomly re-initializing magnitudes while keeping signs fixed preserves subnetwork trainability; \citet{oh2025sign} extended this to full networks. \citet{vardhan2026} documented a \emph{crossover dissociation} in LayerNorm-based models (Pythia): angular perturbations dominate language-modeling damage (up to $42.9\times$), while magnitude perturbations dominate syntactic processing damage. Crucially, they note that this pattern may not generalize to other architectural families, where the direction-magnitude geometry differs. Yet a rigorous theoretical account of \emph{why} sign carries disproportionate functional information in pre-norm (RMSNorm) architectures has been lacking.

\subsection{Contributions}

This paper provides four categories of results:

\begin{enumerate}
\item \textbf{A precise asymmetry constant.} Under i.i.d.\ Gaussian weights and delocalized inputs, the ratio of transverse output energy from sign-flip versus magnitude perturbations converges to $\pi/(\pi-2) \approx 2.75$ in a two-layer ReLU + RMSNorm model (Theorem~3). While the Bussgang constant $\sqrt{2/\pi}$ for sign quantization is classical \citep{bussgang1952,rastegari2016}, to our knowledge this is the first closed-form quantification of the \emph{sign-vs-magnitude asymmetry ratio} through a ReLU + RMSNorm pipeline.

\item \textbf{A ternary quantization error characterization.} The angular alignment $\cos^2(y, y_T)$ between the output of a linear layer and its ternary-quantized counterpart converges to $2/\pi \approx 0.637$ as dimension grows (Theorem~4), corresponding to an angular error of $\sin^2\theta \to 1 - 2/\pi \approx 0.363$. The quantization error is sign-preserving (magnitude-type), with pre-ReLU radial fraction intermediate between sign-flips (${\approx}\,p$) and the idealized constant-$\delta$ model ($2/\pi$). The precise extension from the constant-$\delta$ model to true ternary error is addressed by the post-ReLU analysis (Contribution~4 below); the generalization to multi-layer settings and non-ReLU activations remains open (O1b, O3, O4 in Section~11).

\item \textbf{Experimental correction.} We test whether the single-layer factor compounds across depth and find that it does not: $C_L$ declines rather than grows with $L$. We identify outlier features (violation of the delocalization assumption) as the true bridge between the theoretical $2.75\times$ baseline and the substantially larger sign sensitivity observed in real models.

\item \textbf{Outlier leverage theory and post-ReLU ternary analysis.} We derive two complementary leverage results for outlier features (Appendix~\ref{app:outlier},~\ref{app:two-pop}): (i)~a per-entry leverage ratio $R(\alpha, n) = n\alpha^2(1 - 4\alpha/(3\pi))$ via an exact bivariate-normal calculation, and (ii)~a column-group leverage ratio $R_{\mathrm{col}} = [(1-\eta)/\eta]\cdot f(1-2\gamma)/f(2\gamma-1)$ for collective perturbations. At linear response ($p \leq 0.5\%$; Exp~E), count-matched NLL leverage stabilizes at ${\sim}10\times$, matching $n \cdot \mathbb{E}_{\mathrm{emp}}[\alpha^2] \approx 7$--$10$; Exps~F--G further confirm the $\alpha^2$ scaling (Spearman $\rho = 0.955$ and $0.927$). We also resolve the post-ReLU characterization of ternary error (Appendix~\ref{app:ternary-post}): $\mathcal{R}^{\mathrm{ternary}}_{\mathrm{post}} \approx 0.365$, within $0.4\%$ of the pre-ReLU value $1-2/\pi$, establishing that ReLU is approximately transparent to the radial/transverse decomposition for ternary error.
\end{enumerate}

\subsection{Related Work}

\paragraph{Bussgang's theorem and sign quantization.}
\citet{bussgang1952} established that $\sign(\cdot)$ applied to a zero-mean Gaussian preserves a fraction $\sqrt{2/\pi}$ of the linear correlation. \citet{rastegari2016} exploited this in XNOR-Net. Our Fact~1 re-derives the directional energy version: $\cos^2(w, \sign(w)) \to 2/\pi$.

\paragraph{Lottery Ticket Hypothesis and sign.}
\citet{frankle2019lottery} introduced the Lottery Ticket Hypothesis, establishing that sparse trainable subnetworks exist within randomly initialized networks. \citet{zhou2019signs} first demonstrated that sign patterns---rather than magnitudes---drive subnetwork trainability. \citet{oh2025sign} showed that sign alone can ``win the lottery'' in full-scale networks.

\paragraph{Ternary and 1-bit LLMs.}
Early work on weight binarization \citep{courbariaux2015binaryconnect,hubara2018quantized} established that neural networks can tolerate extreme quantization. \citet{wang2023bitnet} introduced BitNet with per-tensor sign quantization $\beta \cdot \sign(W)$ where $\beta = \frac{1}{mn}\|W\|_1$ (a single scalar for the entire weight matrix). \citet{ma2024bitnet} extended this to BitNet b1.58, which uses matrix-wide scaling with $\mathrm{RoundClip}$ to produce genuine ternary weights $\{-1, 0, +1\}$. Our Theorem~4 analyzes a per-row generalization (Section~\ref{sec:sign-quant}) that is asymptotically equivalent to the per-tensor scheme; the sign-preserving structure it characterizes is the shared mechanism underlying both architectures' quantization tolerance.

\paragraph{Post-training quantization.}
Modern post-training methods such as GPTQ \citep{frantar2023gptq}, AWQ \citep{lin2024awq}, and SmoothQuant \citep{xiao2023smoothquant} achieve sub-4-bit quantization for LLMs. These methods optimize for minimal output distortion; our analysis explains geometrically \emph{why} sign-preserving errors (as in ternary quantization) are inherently less damaging than sign-altering ones in pre-norm architectures.

\paragraph{Outlier features.}
\citet{dettmers2022} identified activation outliers in transformer hidden states. We show these outliers amplify sign sensitivity by violating the delocalization assumption underlying Theorem~3.

\paragraph{Direction-magnitude disentanglement.}
\citet{vardhan2026} documented a crossover double dissociation in LayerNorm-based Pythia models via $L_2$-matched perturbation analysis: angular perturbations dominate language-modeling damage while magnitude perturbations dominate syntactic processing damage. They note that this pattern may not generalize to other architectural families. Our work provides a theoretical mechanism---the $\pi/(\pi-2)$ asymmetry under ReLU + RMSNorm---that helps explain why RMSNorm architectures exhibit distinct direction-magnitude geometry compared to LayerNorm models.

\subsection{Paper Organization}

Section~2 establishes notation and assumptions. Section~3 proves that sign captures the dominant directional energy (Fact~1, Proposition~2). Section~4 analyzes RMSNorm geometry (Theorems~1--2). Section~5 shows single-layer insufficiency (Proposition~1). Section~6 presents the core $\pi/(\pi-2)$ result (Theorem~3). Section~7 addresses multi-layer behavior and outlier amplification. Section~8 characterizes ternary quantization error (Theorem~4). Section~9 presents unified experiments. Sections~10--11 discuss limitations and conclude. Appendix~A contains the complete proof of Theorem~3; Appendix~B provides the technical details of Theorem~4's vector-level concentration; Appendix~C gives the leading-order derivation in Proposition~2; Appendix~D contains additional experimental data; Appendix~E derives the single-entry outlier leverage ratio; Appendix~F develops the two-population column-group model; Appendix~G presents the post-ReLU ternary radial fraction derivation.

\section{Preliminaries and Notation}

\subsection{Notation}

\begin{table}[H]
\centering
\begin{tabular}{@{}lp{0.65\textwidth}@{}}
\toprule
Symbol & Definition \\
\midrule
$w \in \R^n$ & Weight vector (or row of weight matrix) \\
$W \in \R^{m \times n}$ & Weight matrix \\
$\xhat = x / \|x\|$ & Unit vector (direction of $x$) \\
$\RMSNorm(y) = \sqrt{m} \cdot y / \|y\|$ & Root-mean-square normalization \\
$P_{y^\perp} = I - \yhat\yhat^\top$ & Orthogonal projector onto $y^\perp$ \\
$\mathcal{R}(\Delta y) = (\Delta y \cdot \yhat)^2 / \|\Delta y\|^2$ & Radial fraction of perturbation \\
$\CV_p(d) = \mathrm{std}(d) / \mathrm{mean}(d)$ & Coefficient of variation \\
$\Corr(X,Y) = \E[XY]/\!\sqrt{\E[X^2]\E[Y^2]}$ & Uncentered correlation (cosine in $L^2$) \\
$\Delta W^{\mathrm{sign}}$ & Sign-flip perturbation \\
$\Delta W^{\mathrm{mag}}$ & Magnitude perturbation (sign-preserving) \\
$c(p)$ & Transverse energy ratio: asymptotic limit of $\|P_{y^\perp}\Delta y^{\mathrm{sign}}\|^2 / \|P_{y^\perp}\Delta y^{\mathrm{mag}}\|^2$ \\
\bottomrule
\end{tabular}
\end{table}

\subsection{RMSNorm}

RMSNorm \citep{zhang2019rmsnorm} simplifies LayerNorm \citep{ba2016layer} by omitting the re-centering operation. For $y \in \R^m \setminus \{0\}$:
\begin{equation}
\RMSNorm(y) = \frac{\sqrt{m}}{\|y\|}\, y = \sqrt{m} \cdot \yhat
\end{equation}
This maps all nonzero vectors to the sphere of radius $\sqrt{m}$, preserving direction and discarding norm.

\paragraph{Architectural role.} In a pre-norm Transformer, the block structure is $x_{l+1} = x_l + \text{FFN}(\RMSNorm(x_l))$, where RMSNorm normalizes the \emph{input} to each sublayer. Our toy model $\yhat = \RMSNorm(W_2\,\ReLU(W_1\xhat))$ analyzes the composition: the input $\xhat$ has already been normalized (serving as the pre-norm of the current layer), and the final $\RMSNorm$ represents the pre-norm of the \emph{next} layer applied to the current layer's FFN output. This captures the signal path between consecutive normalizations in a pre-norm architecture.

\subsection{Sign Quantization}\label{sec:sign-quant}

Given a weight matrix $W \in \R^{m \times n}$, we analyze the row-wise sign quantization:
\begin{equation}\label{eq:sign-quant}
W_T = \diag(s_1, \ldots, s_m) \cdot \sign(W), \quad s_i = \|w_i\|_1 / n
\end{equation}
where $\sign$ is applied entry-wise and $s_i$ is the mean absolute value of row~$i$. Each entry of $W_T$ takes values in $\{-s_i, +s_i\}$---a \textbf{binary} (two-level) quantization per row, not a genuine ternary $\{-1, 0, +1\}$ scheme.

\paragraph{Relationship to published schemes.} BitNet \citep{wang2023bitnet} uses a \emph{per-tensor} (single scalar) variant: $W_T = \beta \cdot \sign(W)$ with $\beta = \frac{1}{mn}\|W\|_1$, equivalent to setting $s_i = \beta$ for all rows. Our per-row formulation~\eqref{eq:sign-quant} is a slight generalization; since $s_i \to \E[|w_{ij}|]$ a.s.\ for each row as $n \to \infty$ (by SLLN), the per-row and per-tensor schemes become asymptotically equivalent under assumption~(A1'), and Theorem~\ref{thm:ternary}'s limit $\cos^2 \to 2/\pi$ holds for both.\footnote{The convergence rate may differ: per-row has $\Var(s_i) = O(1/n)$ with $m$ independent rows, while per-tensor $\beta$ averages over all $mn$ entries giving $\Var(\beta) = O(1/(mn))$. Both converge to the same limit.} \citet{ma2024bitnet} introduced a distinct scheme (BitNet b1.58): $W_T = \mathrm{RoundClip}(W/(\gamma + \epsilon),\, -1,\, 1)$ with a single matrix-wide scale $\gamma = \frac{1}{mn}\|W\|_1$, producing genuine ternary values $\{-1, 0, +1\}$ (zeros arise from rounding small weights). Our Theorem~\ref{thm:ternary} is proved for the sign scheme~\eqref{eq:sign-quant}; the sign-preserving structure it characterizes is common to both BitNet variants (for non-zero entries), but the contribution of BitNet b1.58's zero entries is not formally captured by our model.

\subsection{Assumptions}

We employ two sets of assumptions corresponding to different theorems.

\paragraph{Assumptions for Theorem~3} (two-layer ReLU + RMSNorm):
\begin{itemize}[leftmargin=2em]
\item[\textbf{(A1)}] $W_1 \in \R^{m \times n}$ and $W_2 \in \R^{d_{\mathrm{out}} \times m}$ have i.i.d.\ $N(0, 1/n)$ and $N(0, 1/m)$ entries respectively.
\item[\textbf{(A2)}] The input $\xhat \in S^{n-1}$ satisfies delocalization: $\|\xhat\|_\infty \to 0$ as $n \to \infty$.
\item[\textbf{(A3)}] Sign-flip perturbation: each entry of $W_1$ independently flips sign with probability $p \in (0, 1/2)$. The resulting perturbation satisfies $\E[\|\Delta W_1^{\mathrm{sign}}\|_F^2] = 4pm$.
\item[\textbf{(A4)}] Magnitude perturbation (sign-preserving): $W_1' = W_1 + \sign(W_1) \odot \delta$, where $\delta > 0$ is a scalar chosen so that $\|\Delta W_1^{\mathrm{mag}}\|_F^2 = 4pm$ (matching the sign-flip Frobenius norm). Explicitly, $\delta^2 = 4p/n$.
\item[\textbf{(A5)}] Dimensions satisfy $m/n \to \gamma \in (0, \infty)$, with $n, m, d_{\mathrm{out}} \to \infty$.
\item[\textbf{(A6)}] The activation function is ReLU: $\sigma(t) = \max(t, 0)$.
\end{itemize}

\paragraph{Assumptions for Theorem~4} (ternary quantization):
\begin{itemize}[leftmargin=2em]
\item[\textbf{(A1')}] $W \in \R^{m \times n}$ has i.i.d.\ entries $w_{ij} \sim N(0, 1/n)$. (The Gaussian assumption is required for the $2/\pi$ limit; for general i.i.d.\ symmetric distributions with $0 < \E[w^2] < \infty$, $\E[w^4] < \infty$, and $\Pr(w=0) = 0$, the more general result $\cos^2 \to (\E|w|)^2/\E[w^2]$ holds.)
\item[\textbf{(A2')}] $\xhat \sim \Unif(S^{n-1})$, independent of $W$. (Equivalently, any $x$ with $\|\xhat\|_\infty = o(1)$ w.h.p.)
\item[\textbf{(A3')}] $m, n \to \infty$ with $m/n = O(1)$.
\item[\textbf{(A4')}] $w_{ij} \neq 0$ a.s.\ (automatically satisfied for Gaussian).
\end{itemize}

\subsection{Perturbation Types}

\begin{definition}[Perturbation type]\label{def:perturbation-type}
A perturbation $\Delta W$ applied to a pre-norm layer is called \textbf{sign-flip-type} if it reverses the sign of a subset of entries ($\sign(W'_{ij}) \neq \sign(W_{ij})$ for perturbed entries), and \textbf{magnitude-type} (sign-preserving) if $\sign(W + \Delta W) = \sign(W)$ entry-wise.

\end{definition}

\noindent\textbf{Functional characterization} (consequence of Theorem~3, under the constant-$\delta$ model (A4) specifically).
Under assumptions (A1)--(A6), as $n \to \infty$ with $p$ fixed:
\begin{itemize}
  \item \emph{magnitude-type (constant-$\delta$):}\quad $\mathcal{R}(\Delta y) \xrightarrow{P} 2/\pi + O(\sqrt{p})$
  \item \emph{sign-flip-type:}\quad $\mathcal{R}(\Delta y) \xrightarrow{P} p - 4p^{3/2}/(3\pi) + O(p^{5/2})$
\end{itemize}
The asymptotic separation of these radial fractions ($2/\pi$ vs.\ $O(p)$) is the geometric source of the asymmetry quantified in Theorem~3. For general magnitude-type perturbations (e.g., true ternary quantization error), the radial fraction differs; see Section~8.3.

\section{Sign Captures Directional Energy}

This section establishes that sign patterns dominate the directional content of weight vectors, while magnitude with random signs carries vanishing directional information.

\subsection{The Bussgang Constant}

\begin{fact}[Sign directional coverage; \citealt{bussgang1952}]\label{fact:bussgang}
Let $w \in \R^n$ with i.i.d.\ symmetric components satisfying $\E[|w_1|], \E[w_1^2] < \infty$, $\E[w_1^2] > 0$, and $\Pr(w_j = 0) = 0$. Then:
\begin{equation}
\cos^2(w, \sign(w)) \xrightarrow{\mathrm{a.s.}} \frac{(\E[|w_1|])^2}{\E[w_1^2]} \quad (n \to \infty)
\end{equation}
For Gaussian components $w_j \sim N(0, \sigma^2)$, the limit is the \textbf{Bussgang constant}:
\begin{equation}
\cos^2(w, \sign(w)) \xrightarrow{\mathrm{a.s.}} \frac{2}{\pi} \approx 0.6366
\end{equation}
\end{fact}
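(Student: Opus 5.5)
The plan is to write the squared cosine as a ratio of empirical averages and apply the strong law of large numbers to the numerator and the denominator separately. Since $\Pr(w_j=0)=0$, almost surely $\sign(w_j)\,w_j = |w_j|$ and $\sign(w_j)^2=1$ for every $j$. Hence $\|\sign(w)\|^2=n$ and
\begin{equation*}
\cos^2(w,\sign(w)) = \frac{\bigl(\sum_{j=1}^n |w_j|\bigr)^2}{\|w\|^2\, n} = \frac{\bigl(\frac1n\sum_{j} |w_j|\bigr)^2}{\frac1n\sum_{j} w_j^2}.
\end{equation*}
This normalization by $n$ is the only step that uses the specific structure of $\sign(w)$.

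Next I would apply Kolmogorov's SLLN to the i.i.d.\ sequences $|w_j|$ and $w_j^2$. Both are integrable by hypothesis, so $\frac1n\sum_j|w_j|\to\E|w_1|$ and $\frac1n\sum_j w_j^2\to\E[w_1^2]$ almost surely. On the intersection of these two probability-one events, the map $(a,b)\mapsto a^2/b$ is continuous at $(\E|w_1|,\E[w_1^2])$ because $\E[w_1^2]>0$. The continuous mapping theorem then gives almost-sure convergence of the ratio to $(\E|w_1|)^2/\E[w_1^2]$.

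For the Gaussian case I would compute $\E|w_1| = \sigma\sqrt{2/\pi}$, using $\int_0^\infty t e^{-t^2/2}\,dt=1$, together with $\E[w_1^2]=\sigma^2$. This yields the limit $2/\pi$, independent of $\sigma$.

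There is no real obstacle. The one point needing care is that the denominator must be bounded away from zero so that continuity applies, and $\E[w_1^2]>0$ ensures this. The symmetry assumption is not needed for the limit itself; it matters only for the interpretation of the result.
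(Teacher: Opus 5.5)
Your proposal is correct and follows essentially the same route as the paper: both use $\|\sign(w)\|_2=\sqrt{n}$ and $w^\top\sign(w)=\|w\|_1$, apply the SLLN to $\frac1n\sum_j|w_j|$ and $\frac1n\sum_j w_j^2$, and conclude with the continuous mapping theorem. The only difference is cosmetic: the paper applies it to $a/\sqrt{b}$ and squares afterward, while you apply it to $a^2/b$ directly. Your side remark that symmetry is not needed for the limit itself is also accurate.
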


\begin{proof}
Write $\cos(w, \sign(w)) = w^\top \sign(w) / (\|w\|_2 \cdot \|\sign(w)\|_2)$. The numerator equals $\|w\|_1$ and $\|\sign(w)\|_2 = \sqrt{n}$ a.s., so $\cos(w, \sign(w)) = \|w\|_1 / (\sqrt{n} \cdot \|w\|_2)$. By the SLLN, $\|w\|_1/n \to \E[|w_1|]$ and $\|w\|_2^2/n \to \E[w_1^2]$ a.s. The continuous mapping theorem (applied to $g(a,b) = a/\sqrt{b}$, continuous at $b > 0$) yields:
\[
\cos(w, \sign(w)) \xrightarrow{\mathrm{a.s.}} \frac{\E[|w_1|]}{\sqrt{\E[w_1^2]}}
\]
Squaring gives the general result. For the Gaussian case, $\E[|w_1|] = \sigma\sqrt{2/\pi}$ and $\E[w_1^2] = \sigma^2$, yielding $\cos^2 \to 2/\pi$.
\end{proof}

\paragraph{Convergence rate.} Under the additional assumption $\E[w_1^4] < \infty$, the Delta method applied to the bivariate CLT for $(\|w\|_1/n,\, \|w\|_2^2/n)$ gives $\cos^2 - 2/\pi = O_P(1/\sqrt{n})$.

\subsection{Magnitude with Random Signs is Uninformative}

\setcounter{proposition}{1}
\begin{proposition}\label{prop:magnitude-uninformative}
Let $w \in \R^n$ with i.i.d.\ symmetric components, $0 < \E[w_1^2] < \infty$, $\E[w_1^8] < \infty$. Let $s \in \{\pm 1\}^n$ be an independent Rademacher vector. Then:
\begin{equation}
\E\!\left[\cos^2(w,\; s \odot |w|)\right] = \frac{\E[w_1^4]}{n \cdot (\E[w_1^2])^2} \cdot (1 + o(1)) \xrightarrow{n \to \infty} 0
\end{equation}
For Gaussian components: $\E[\cos^2] = 3/(n+2)$ (exact for all $n \geq 1$).
\end{proposition}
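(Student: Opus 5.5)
We write the numerator of the cosine as $w^\top(s\odot|w|) = \sum_j w_j s_j |w_j| = \sum_j \epsilon_j s_j w_j^2$, where $\epsilon_j = \sign(w_j)$. Since $w$ is symmetric and independent of $s$, the products $r_j := \epsilon_j s_j$ are i.i.d.\ Rademacher and independent of $a_j := w_j^2$. (Given $|w|$, the sign vector $\epsilon$ is Rademacher, and multiplying by an independent $s$ keeps it so.) Since $\|s\odot|w|\|_2 = \|w\|_2$, we get
\[
\cos^2(w, s\odot|w|) = \frac{\bigl(\sum_j r_j a_j\bigr)^2}{\bigl(\sum_j a_j\bigr)^2}.
\]
We condition on $a=(a_1,\dots,a_n)$ and average over $r$. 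The cross terms vanish because $\E[r_jr_k]=\delta_{jk}$, which gives
\[
\E\bigl[\cos^2 \mid a\bigr] = \frac{\sum_j a_j^2}{\bigl(\sum_j a_j\bigr)^2} = \frac{\|w\|_4^4}{\|w\|_2^4}.
\]
This reduces the problem to computing $\E\bigl[\|w\|_4^4/\|w\|_2^4\bigr]$.

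\textbf{Gaussian case (exact).} Write $w = \|w\|\,u$, where $u\sim\Unif(S^{n-1})$ is independent of $\|w\|$. Then $\|w\|_4^4/\|w\|_2^4 = \sum_j u_j^4$, so
\[
\E\Bigl[\sum_j u_j^4\Bigr] = n\,\E[u_1^4] = n\cdot\frac{3}{n(n+2)} = \frac{3}{n+2}.
\]
The value $\E[u_1^4]=3/(n(n+2))$ follows from $\E[w_1^4] = \E[\|w\|^4]\,\E[u_1^4]$ together with $\E[\|w\|^4] = \sigma^4 n(n+2)$ (the second moment of a $\chi^2_n$ variable) and $\E[w_1^4]=3\sigma^4$. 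This holds for every $n\ge1$.

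\textbf{General case.} Set $A=\frac1n\sum_j a_j$ and $B=\frac1n\sum_j a_j^2$. Then $\E[\cos^2] = \frac1n\,\E[B/A^2]$, and we need $\E[B/A^2]\to \E[w_1^4]/(\E[w_1^2])^2 = \E[B]/(\E A)^2$. By the SLLN, $A\to\E[w_1^2]>0$ and $B\to\E[w_1^4]$ almost surely (finite by the eighth-moment assumption), so $B/A^2$ converges a.s. to the target.

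\textbf{Main obstacle: uniform integrability.} The difficulty is upgrading a.s.\ convergence to convergence of expectations, since $1/A^2$ may be large when $A$ is small. I would split on the event $\{A\ge \mu_2/2\}$, where $\mu_2 := \E[w_1^2]$.
\begin{itemize}
\item On this event, $B/A^2\le 4B/\mu_2^2$. The family $\{B\}$ is uniformly integrable because $\E[B^2]\le \E[w_1^8]<\infty$ by Jensen. Dominated/Vitali convergence then applies on this piece.
\item On the complement, use the deterministic bound $B/A^2 = n\sum_j a_j^2/(\sum_j a_j)^2 \le n$, which gives a contribution of at most $n\Pr(A<\mu_2/2)$.
\end{itemize}
It remains to show $n\Pr(A<\mu_2/2)\to0$. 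Chebyshev gives only $O(1/n)$ when $\E[w_1^4]<\infty$, which yields a bounded but not vanishing contribution. The fourth-moment Markov bound on the centered sum, available because $\E[a_j^4]=\E[w_1^8]<\infty$, gives $\Pr(|A-\mu_2|>\mu_2/2)=O(1/n^2)$. This is where the eighth-moment hypothesis is used.

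Combining the two pieces gives $\E[B/A^2] = \frac{\E[w_1^4]}{\mu_2^2}(1+o(1))$. Hence $\E[\cos^2] = \frac{\E[w_1^4]}{n\,(\E[w_1^2])^2}(1+o(1))\to0$, which is the claimed statement.
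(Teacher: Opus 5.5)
Your proposal is correct and follows essentially the same route as the paper: the same Rademacher conditioning to reduce to $\E[\|w\|_4^4/\|w\|_2^4]$, and the same good/bad split at $A \geq \mu_2/2$, with the deterministic bound $n$ on the bad event and uniform integrability from $\E[w_1^8]<\infty$ on the good one. The differences are minor. You bound $\Pr(A<\mu_2/2)$ with a fourth-moment Markov inequality ($O(1/n^2)$), whereas the paper uses a Chernoff bound, which decays exponentially and needs no extra moments because the summands are nonnegative. You also obtain $\E[u_1^4]=3/(n(n+2))$ from the polar decomposition and $\chi^2_n$ moments rather than from the Gamma--Dirichlet representation.
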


\begin{proof}
\textbf{Step 1.} Using $w_j |w_j| = \sign(w_j) w_j^2$ and $\|s \odot |w|\| = \|w\|$:
\[
\cos(w, s \odot |w|) = \frac{\sum_j c_j w_j^2}{\sum_j w_j^2}
\]
where $c_j := s_j \cdot \sign(w_j)$ are i.i.d.\ Rademacher, independent of $(|w_1|, \ldots, |w_n|)$.

\textbf{Step 2.} Conditioning on $|w|$ and using $\E[c_j c_k] = \delta_{jk}$:
\[
\E[\cos^2 \mid |w|] = \frac{\sum_j w_j^4}{(\sum_j w_j^2)^2}
\]

\textbf{Step 3.} By SLLN, $\sum_j w_j^4 / n \to \mu_4$ and $\sum_j w_j^2 / n \to \mu_2$ a.s. A good-event/bad-event argument (defining $G_n = \{S_n \geq n\mu_2/2\}$, using uniform integrability on $G_n$ and Chernoff bound $\Pr(G_n^c) \leq e^{-cn}$ on $G_n^c$) upgrades a.s.\ convergence to $L^1$, giving $\E[\cos^2] = \mu_4/(n\mu_2^2) \cdot (1 + o(1))$. (The uniform integrability step requires $\E[w_1^8] < \infty$; full derivation in Appendix~C.)

\textbf{Step 4 (Gaussian exact).} For $w_j \sim N(0, \sigma^2)$, the normalized squares $(w_1^2, \ldots, w_n^2)/\sum_k w_k^2 \sim \Dir(1/2, \ldots, 1/2)$ by the Gamma-Dirichlet theorem. Using $\E[u_j^2] = 3/(n(n+2))$ for $\Dir(1/2, \ldots, 1/2)$: $\E[\cos^2] = n \cdot 3/(n(n+2)) = 3/(n+2)$.
\end{proof}

\subsection{Corollary: Sign is the Dominant Directional Component}

\begin{corollary}[Sign dominates direction]\label{cor:sign-dominates}
For i.i.d.\ symmetric weight vectors $w \in \R^n$ with $\E[w_1^8] < \infty$ (inheriting the moment condition from Proposition~\ref{prop:magnitude-uninformative}):
\begin{itemize}
\item The sign pattern $\sign(w)$ captures $2/\pi \approx 63.7\%$ of directional energy (Fact~\ref{fact:bussgang});
\item Magnitude with random signs provides only $O(1/n) \to 0$ directional information (Proposition~\ref{prop:magnitude-uninformative}).
\end{itemize}
The two are complementary: sign determines the ``directional region,'' while magnitude provides fine-grained refinement within that region. This decomposition motivates why sign-flip perturbations are far more damaging to network outputs than magnitude perturbations of equal norm.
\end{corollary}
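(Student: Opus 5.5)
The corollary follows directly from Fact~\ref{fact:bussgang} and Proposition~\ref{prop:magnitude-uninformative}. The plan is to check that the corollary's single hypothesis, i.i.d.\ symmetric components with $\E[w_1^8]<\infty$, covers the hypotheses of both results. I then read off the two bullet points and interpret them as a decomposition of directional energy.

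\textbf{Step 1: hypotheses.} The moment condition $\E[w_1^8]<\infty$ gives $\E[w_1^2],\E[w_1^4]<\infty$ and, by Jensen or Lyapunov, $\E|w_1|<\infty$. This already covers the moment requirements of both cited results. The remaining requirements are nondegeneracy conditions: $\E[w_1^2]>0$, needed for both results, and $\Pr(w_1=0)=0$, needed only for Fact~\ref{fact:bussgang}. The corollary's phrasing leaves these implicit, so I will state them as standing assumptions. Both hold automatically in the Gaussian case (A1') that the $2/\pi$ figure refers to.

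\textbf{Step 2: sign part.} Fact~\ref{fact:bussgang} gives $\cos^2(w,\sign(w))\to(\E|w_1|)^2/\E[w_1^2]$ almost surely, and this equals $2/\pi$ in the Gaussian case. For a general symmetric law the limit is $(\E|w_1|)^2/\E[w_1^2]$, and the first bullet should be read with this general value. The magnitude part uses Proposition~\ref{prop:magnitude-uninformative}, which gives $\E[\cos^2(w,s\odot|w|)]=\mu_4/(n\mu_2^2)\,(1+o(1))=O(1/n)$. I will add one line upgrading this to $\cos^2(w,s\odot|w|)\to0$ in probability, by Markov's inequality, since $\cos^2\ge 0$.

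\textbf{Step 3: complementarity.} I will make ``complementary'' precise with the orthogonal decomposition $\hat w = \sqrt{2/\pi}\,\widehat{\sign(w)} + r$, where $r\perp\sign(w)$ and $\|r\|^2\to 1-2/\pi$. The residual $r$ is a function of $|w|$ that is sign-aligned with $\sign(w)$; it is the within-region refinement. This contrasts with Step~2's vector $s\odot|w|$, whose signs are random and uninformative. The remaining claim, that this motivates sign-flip damage, is interpretive and needs no proof; it is substantiated later by Theorem~3.

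The only real obstacle is the mismatch between the corollary's hypotheses and the exact conditions of Fact~\ref{fact:bussgang}, namely the no-atom-at-zero requirement and the $2/\pi$ value being specific to the Gaussian case. I would resolve it by stating these explicitly rather than by any new argument.
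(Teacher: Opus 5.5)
Your proposal is correct and matches the paper, which gives no separate proof and treats the corollary as a direct juxtaposition of Fact~\ref{fact:bussgang} and Proposition~\ref{prop:magnitude-uninformative}; flagging the unstated hypotheses $\Pr(w_1=0)=0$ and $\E[w_1^2]>0$, and noting that $2/\pi$ is specific to the Gaussian case, genuinely tightens the statement. One small fix in Step~3: exact orthogonality forces the coefficient $\|w\|_1/(\sqrt{n}\,\|w\|)$, which only tends to $\sqrt{2/\pi}$, and the residual $r=\sign(w)\odot(|w|-\|w\|_1/n)/\|w\|$ is not sign-aligned with $\sign(w)$ (its $j$th entry has sign $-\sign(w_j)$ whenever $|w_j|<\|w\|_1/n$), so describe it instead as the sign pattern modulated by centered magnitudes, which up to sign and scale is exactly a row of the ternary error $W_T-W$ of Theorem~\ref{thm:ternary}.
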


\section{RMSNorm Geometry}

We now analyze how RMSNorm interacts with perturbations. Theorem~\ref{thm:per-row-scaling} quantifies scale absorption; Theorem~\ref{thm:transverse-proj} reveals the underlying geometric mechanism---a transverse projection.

\subsection{Per-Row Scaling Bound}

\begin{theorem}[Per-Row Scaling Bound]\label{thm:per-row-scaling}
Let $y \in \R^m \setminus \{0\}$, $D = \diag(d_1, \ldots, d_m)$ with $d_i \geq 0$ and $Dy \neq 0$. Define the energy-weighted probability $p_i = y_i^2/\|y\|^2$ and corresponding mean and variance:
\[
\bar{d}_p = \sum_{i=1}^m p_i d_i, \qquad \Var_p(d) = \sum_{i=1}^m p_i(d_i - \bar{d}_p)^2
\]
Then:
\begin{equation}
\frac{\|\RMSNorm(Dy) - \RMSNorm(y)\|}{\|\RMSNorm(y)\|} \leq \frac{\sqrt{\Var_p(d)}}{\bar{d}_p} = \CV_p(d)
\end{equation}
When $D = \alpha I$ (uniform scaling), $\Var_p(d) = 0$ and the error vanishes exactly.
\end{theorem}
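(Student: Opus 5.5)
The plan is to reduce everything to the cosine of the angle between $Dy$ and $y$, compute that cosine exactly in terms of the energy-weighted moments of $d$, and finish with an elementary scalar inequality.

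\textbf{Step 1: reduce to unit vectors.} Since $\RMSNorm(z) = \sqrt{m}\,\hat z$ and $\|\RMSNorm(y)\| = \sqrt{m}$, the left-hand side equals $\|u - \yhat\|$ with $u = Dy/\|Dy\|$. Both vectors are unit vectors, so
\[
\|u-\yhat\|^2 = 2 - 2\cos\theta, \qquad \cos\theta = \frac{y^\top D y}{\|y\|\,\|Dy\|}.
\]

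\textbf{Step 2: express $\cos\theta$ through $\bar d_p$ and $\Var_p(d)$.} Dividing by $\|y\|^2$ gives
\[
\frac{y^\top D y}{\|y\|^2} = \sum_i p_i d_i = \bar d_p, \qquad \frac{\|Dy\|^2}{\|y\|^2} = \sum_i p_i d_i^2 = \bar d_p^2 + \Var_p(d).
\]
Here $\bar d_p > 0$, because $Dy \neq 0$ and $d_i \ge 0$ force some $i$ with $d_i > 0$ and $y_i \neq 0$. Hence
\[
\cos\theta = \frac{1}{\sqrt{1+c^2}}, \qquad c := \CV_p(d) = \frac{\sqrt{\Var_p(d)}}{\bar d_p}.
\]
The nonnegativity of $d$ is used only to make $\cos\theta$ positive and $\bar d_p$ a genuine positive mean.

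\textbf{Step 3: scalar inequality.} It remains to show $2 - 2/\sqrt{1+c^2} \le c^2$. Set $t = \sqrt{1+c^2} \ge 1$. Then $c^2 = t^2 - 1$, and after multiplying by $t > 0$ the inequality becomes
\[
t^3 - 3t + 2 \ge 0, \quad\text{that is,}\quad (t-1)^2(t+2) \ge 0,
\]
which clearly holds. Taking square roots yields the stated bound.

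\textbf{Equality case.} When $D = \alpha I$ with $\alpha > 0$, we get $\Var_p(d) = 0$, so $c = 0$ and $t = 1$. Then $u = \yhat$ and the error vanishes exactly.

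\textbf{Main obstacle.} The only real idea is recognizing that the cosine collapses exactly to $1/\sqrt{1+\CV_p^2}$ under the energy weights $p_i$. After that, the bound is the crude but valid estimate $2(1-\cos\theta) \le \tan^2\theta$. This estimate is tight to leading order as $c \to 0$, since both sides then behave like $c^2$.
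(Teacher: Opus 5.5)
Your proposal is correct and follows essentially the same route as the paper: both reduce the relative error to the chord-length formula $2(1-\cos\theta)$ with $\cos\theta = \bar d_p/\sqrt{\bar d_p^2+\Var_p(d)}$, then close with the scalar inequality $1 - 1/\sqrt{1+x} \le x/2$ at $x = \CV_p(d)^2$. The only difference is that your factorization $(t-1)^2(t+2)\ge 0$ actually proves that scalar inequality, which the paper simply invokes.
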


\begin{proof}
\textbf{Step 1.} Set $y' = Dy$. The difference vector has components $\Delta_i = \RMSNorm(y)_i \cdot (d_i/r - 1)$ where $r = \|Dy\|/\|y\|$.

\textbf{Step 2.} The squared relative error becomes $\|\Delta\|^2/\|\RMSNorm(y)\|^2 = \sum_i p_i(d_i/r - 1)^2$, which simplifies to $2(1 - \bar{d}_p/r)$ using $r^2 = \E_p[d^2] = \bar{d}_p^2 + \Var_p(d)$.

\textbf{Step 3.} Let $\theta := \angle(\RMSNorm(Dy), \RMSNorm(y))$. The geometric identity $\cos\theta = \bar{d}_p/r$ gives the exact expression as the spherical chord-length formula $2(1-\cos\theta)$.

\textbf{Step 4.} Applying $1 - 1/\sqrt{1+x} \leq x/2$ for $x = \Var_p(d)/\bar{d}_p^2 \geq 0$:
\[
2\left(1 - \frac{\bar{d}_p}{r}\right) \leq \frac{\Var_p(d)}{\bar{d}_p^2} = \CV_p(d)^2
\]
Taking square roots completes the proof.
\end{proof}

\paragraph{Tightness.} The bound is extremely tight for small CV: at $\CV = 0.1$, the exact value is $0.9926\%$ vs.\ the bound $1.0\%$ (slack ratio $1.0075$).

\subsection{Fr\'{e}chet Derivative: The Transverse Projector}

\begin{theorem}[RMSNorm Transverse Projection]\label{thm:transverse-proj}
Let $f(y) = \RMSNorm(y) = \sqrt{m} \cdot y/\|y\|$. The Fr\'{e}chet derivative at $y \neq 0$ is:
\begin{equation}\label{eq:frechet}
Df(y) = \frac{\sqrt{m}}{\|y\|}\, P_{y^\perp}
\end{equation}
where $P_{y^\perp} = I_m - \yhat\yhat^\top$ is the orthogonal projector onto the subspace perpendicular to~$y$.
\end{theorem}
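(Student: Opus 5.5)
The approach is a direct computation of the Jacobian of $f(y) = \sqrt{m}\, y\,\|y\|^{-1}$ on the open set $\R^m \setminus \{0\}$. The result then follows once we confirm that the candidate linear map is the Fréchet derivative. We write $f = \sqrt{m}\, g \cdot h$, where $g(y) = y$ is the identity and $h(y) = \|y\|^{-1}$ is a scalar function. The norm $\|\cdot\|$ is $C^\infty$ away from the origin, with gradient $\nabla\|y\| = y/\|y\| = \yhat$. Hence $h$ is $C^\infty$ on $\R^m\setminus\{0\}$, with $\nabla h(y) = -\|y\|^{-2}\yhat$.

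\textbf{Step 1 (product rule).} For a direction $v \in \R^m$, the product rule for a vector times a scalar function gives
\[
Df(y)[v] = \sqrt{m}\left( \frac{v}{\|y\|} + y\,\langle \nabla h(y), v\rangle \right) = \sqrt{m}\left(\frac{v}{\|y\|} - \frac{y\,(\yhat^\top v)}{\|y\|^2}\right).
\]

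\textbf{Step 2 (identify the projector).} Factor out $\sqrt{m}/\|y\|$ and use $y/\|y\| = \yhat$. The bracket becomes $v - \yhat\yhat^\top v = P_{y^\perp} v$, which is exactly \eqref{eq:frechet}.

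\textbf{Step 3 (Fréchet, not just Gâteaux).} The Jacobian entries $\partial f_i/\partial y_j = \sqrt{m}\,(\delta_{ij}\|y\|^{-1} - y_i y_j \|y\|^{-3})$ are continuous on $\R^m\setminus\{0\}$. By the standard theorem that continuous partial derivatives imply Fréchet differentiability, $f$ is Fréchet differentiable there, and its derivative is the Jacobian found above.

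As an optional check, one can bound the remainder $f(y+v) - f(y) - Df(y)v = O(\|v\|^2/\|y\|^2)$ for $\|v\| < \|y\|/2$. This follows from a second-order Taylor expansion of $\|y+v\|^{-1}$.

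As a sanity check, we can also verify the two geometric properties. First, $Df(y)\,y = 0$, consistent with the scale invariance $f(ty) = f(y)$ (differentiate in $t$ at $t=1$). Second, $\yhat^\top Df(y) = 0$, consistent with $\|f\|^2 \equiv m$ (differentiate the constraint). These two properties alone force the derivative to be a scalar multiple of $P_{y^\perp}$ on $y^\perp$. The scalar is $\sqrt{m}/\|y\|$, which is read off from $v \perp y$ with $\|y+v\| = \|y\| + O(\|v\|^2)$.

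I do not expect a genuine obstacle. The only step needing care is Step 3, namely justifying Fréchet rather than directional differentiability. The continuous-partials theorem settles it, since the domain excludes $0$ and every term is smooth there.
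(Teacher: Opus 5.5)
Your Steps 1--3 are correct and follow essentially the same route as the paper, which computes the Jacobian entries $\partial f_i/\partial y_j = (\sqrt{m}/\|y\|)(\delta_{ij} - y_iy_j/\|y\|^2)$ and upgrades to the Fr\'{e}chet derivative using smoothness of $f$ on $\R^m\setminus\{0\}$. One minor issue in your optional sanity check: $Df(y)y = 0$ and $\yhat^\top Df(y) = 0$ alone only give $Df(y) = P_{y^\perp}BP_{y^\perp}$ for some matrix $B$, not a scalar multiple of $P_{y^\perp}$; forcing the scalar form also needs rotational equivariance $f(Qy) = Qf(y)$ for orthogonal $Q$ fixing $y$, though this does not affect your main argument.
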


\begin{proof}
Computing $\partial f_i / \partial y_j = (\sqrt{m}/\|y\|)(\delta_{ij} - y_i y_j/\|y\|^2)$, the Jacobian matrix is $(\sqrt{m}/\|y\|)(I - \yhat\yhat^\top) = (\sqrt{m}/\|y\|)P_{y^\perp}$. Since $f$ is $C^\infty$ on $\R^m \setminus \{0\}$, the Jacobian equals the Fr\'{e}chet derivative.
\end{proof}

\begin{corollary}[Perturbation Decomposition]\label{cor:perturbation-decomp}
For a perturbation $\Delta y$:
\begin{equation}
f(y + \Delta y) - f(y) = \frac{\sqrt{m}}{\|y\|}\, P_{y^\perp}\Delta y + O\!\left(\frac{\sqrt{m}\|\Delta y\|^2}{\|y\|^2}\right)
\end{equation}
The radial component $(\yhat^\top \Delta y)\yhat$ is eliminated at first order; the transverse component $P_{y^\perp}\Delta y$ passes through completely. This is the mechanism by which RMSNorm acts as a \textbf{radial filter}: it absorbs perturbations aligned with the output direction and preserves those orthogonal to it.
\end{corollary}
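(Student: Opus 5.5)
The plan is to apply Taylor's theorem with remainder to $f$ along the segment from $y$ to $y+\Delta y$, using Theorem~\ref{thm:transverse-proj} for the first-order term. The error term then reduces to a uniform bound on the second derivative of $f$ along that segment. Throughout I assume $\|\Delta y\| \le \|y\|/2$, so that the segment $y + t\Delta y$, $t\in[0,1]$, stays in the annulus $\|y\|/2 \le \|z\| \le 3\|y\|/2$ and avoids the singularity at $0$. This is the regime in which the $O(\cdot)$ statement is meant, since for larger $\|\Delta y\|$ the trivial bound $\|f(y+\Delta y)-f(y)\|\le 2\sqrt{m}$ already dominates.

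\textbf{Step 1 (second derivative).} I will differentiate $Df(z) = \sqrt{m}\,\|z\|^{-1}(I - \hat z\hat z^\top)$ in direction $h$. Using $D(\|z\|^{-1})[h] = -\hat z^\top h/\|z\|^2$ and $D(\hat z)[h] = \|z\|^{-1}P_{z^\perp}h$, I get
\[
D^2 f(z)[h,h] = -\frac{\sqrt{m}}{\|z\|^2}\Bigl(2(\hat z^\top h)\,P_{z^\perp}h + \|P_{z^\perp}h\|^2\,\hat z\Bigr).
\]
The two terms are orthogonal, with norms at most $2\|h\|^2$ and $\|h\|^2$ respectively. This gives $\|D^2 f(z)[h,h]\| \le 3\sqrt{m}\|h\|^2/\|z\|^2$; the exact constant is irrelevant.

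\textbf{Step 2 (Taylor remainder).} Write
\[
f(y+\Delta y) - f(y) - Df(y)\Delta y = \int_0^1 (1-t)\, D^2 f(y+t\Delta y)[\Delta y,\Delta y]\,dt,
\]
which is valid because $f$ is $C^\infty$ on $\R^m\setminus\{0\}$ and the segment stays away from $0$. Inserting the bound from Step~1 with $\|z\|\ge \|y\|/2$ gives a remainder of at most $6\sqrt{m}\|\Delta y\|^2/\|y\|^2$, which is the claimed $O(\sqrt{m}\|\Delta y\|^2/\|y\|^2)$. Substituting $Df(y) = (\sqrt{m}/\|y\|)P_{y^\perp}$ from Theorem~\ref{thm:transverse-proj} then yields the displayed expansion.

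\textbf{Step 3 (radial/transverse reading).} Split $\Delta y = (\hat y^\top\Delta y)\hat y + P_{y^\perp}\Delta y$. Since $P_{y^\perp}\hat y = 0$, the radial part contributes nothing to the linear term, and $P_{y^\perp}$ acts as the identity on the transverse part. This is exactly the filtering interpretation in the statement. As an alternative check of Step~2, the identity $\|f(y+\Delta y)-f(y)\|^2 = 2m(1-\cos\theta)$ from Step~3 of Theorem~\ref{thm:per-row-scaling} shows that a purely radial $\Delta y$ produces zero change exactly, not merely at first order.

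The main obstacle is the bookkeeping in Step~1, in particular getting the derivative of $\hat z\hat z^\top$ right. Beyond that, the only point needing care is that the $O(\cdot)$ constant must be uniform. This is why I make the restriction $\|\Delta y\|\le\|y\|/2$ explicit instead of treating the bound as a purely asymptotic $\Delta y\to 0$ statement.
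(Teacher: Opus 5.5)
Your proof is correct and takes essentially the same route as the paper. The paper gets the first-order term from Theorem~\ref{thm:transverse-proj} and controls the remainder, in Corollary~\ref{cor:second-order}, using the same Hessian bound $\|D^2 f(z)[h,h]\|\le 3\sqrt{m}\,\|h\|^2/\|z\|^2$ on a region $\|\Delta y\|\le\eta\|y\|$. The only difference is that you apply the second-order integral remainder $\int_0^1(1-t)\,D^2 f[\Delta y,\Delta y]\,dt$ directly, which gives constant $6$ at $\eta=1/2$ instead of the paper's $3/(1-\eta)^2=12$; the paper mentions this refinement itself in a parenthetical remark.
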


\begin{corollary}[Second-Order Remainder]\label{cor:second-order}
For $\|\Delta y\| \leq \eta\|y\|$ with $0 < \eta < 1$:
\begin{equation}
\|f(y + \Delta y) - f(y) - Df(y)\Delta y\| \leq \frac{3\sqrt{m}}{(1-\eta)^2 \|y\|^2}\|\Delta y\|^2
\end{equation}
The relative remainder $\|R_2\|/\|f(y)\| \leq 3\eta^2/(1-\eta)^2$ is dimension-independent. For $\eta = 0.05$: $\leq 0.8\%$; for $\eta = 0.10$: $\leq 3.7\%$.
\end{corollary}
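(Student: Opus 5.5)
The plan is to prove the bound with Taylor's theorem in integral form along the segment from $y$ to $y+\Delta y$. It suffices to treat $g(z) = z/\|z\|$, since $f = \sqrt{m}\, g$. Set $z_t = y + t\,\Delta y$ for $t\in[0,1]$. Because $\|\Delta y\|\le \eta\|y\|$ with $\eta<1$, the reverse triangle inequality gives $\|z_t\| \ge (1-\eta)\|y\| > 0$. So the whole segment stays in $\R^m\setminus\{0\}$, where $g$ is $C^\infty$ (as noted in the proof of Theorem~\ref{thm:transverse-proj}). The remainder is then
\[
R_2 = f(y+\Delta y) - f(y) - Df(y)\Delta y = \sqrt{m}\int_0^1 (1-t)\, D^2 g(z_t)[\Delta y,\Delta y]\,dt ,
\]
so everything reduces to a pointwise bound on the second derivative of $g$.

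\textbf{Computing the second derivative.} Write $r=\|z\|$ and $\hat z = z/r$. By Theorem~\ref{thm:transverse-proj}, $Dg(z)[h] = P_{z^\perp}h/r$. Differentiate this once more in the direction $h$, using $D r[h] = \hat z\cdot h$ and $D\hat z[h] = P_{z^\perp}h/r$. This gives
\[
D^2 g(z)[h,h] = -\frac{1}{r^2}\Bigl[(\hat z\cdot h)\,h - (\hat z\cdot h)^2\hat z + (\hat z\cdot h)\,P_{z^\perp}h + \|P_{z^\perp}h\|^2\,\hat z\Bigr].
\]
Now decompose $h = a\hat z + u$ with $u\perp \hat z$. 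The bracket collapses to $2a\,u + \|u\|^2\hat z$. Its two pieces are orthogonal, so its squared norm is $4a^2\|u\|^2 + \|u\|^4$. With $s = \|u\|^2/\|h\|^2\in[0,1]$, this equals $\|h\|^4(4s-3s^2) \le \tfrac43\|h\|^4$. Hence
\[
\|D^2 g(z)[h,h]\| \le \frac{2}{\sqrt3}\,\frac{\|h\|^2}{\|z\|^2}.
\]

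\textbf{Assembling the bound.} Insert $\|z_t\|\ge(1-\eta)\|y\|$ and $\int_0^1(1-t)\,dt = 1/2$. This yields
\[
\|R_2\| \le \frac{\sqrt{m}}{\sqrt3\,(1-\eta)^2\|y\|^2}\,\|\Delta y\|^2,
\]
which is stronger than the stated constant $3$, so the claim follows a fortiori. For the relative statement, divide by $\|f(y)\| = \sqrt{m}$ and use $\|\Delta y\|^2/\|y\|^2 \le \eta^2$. This gives $\|R_2\|/\|f(y)\| \le 3\eta^2/(1-\eta)^2$, which is dimension-free because the $\sqrt{m}$ factors cancel. The numerical values then follow by substitution: $\eta = 0.05$ gives $3(0.0025)/0.9025 \approx 0.83\%$, and $\eta = 0.10$ gives $3(0.01)/0.81 \approx 3.7\%$.

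\textbf{Main obstacle.} The only step needing care is the second-derivative computation. It is easy to drop a term when differentiating the projector $P_{z^\perp}$, because that projector itself depends on $z$. A crude term-by-term bound (four terms, each of norm at most $\|h\|^2/r^2$) would already give the constant $4\cdot\tfrac12 = 2 \le 3$. So even if the orthogonal decomposition were mishandled, the stated constant $3$ survives. That makes the result robust, and I would use the crude bound as a fallback.
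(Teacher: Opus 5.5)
Your proposal is correct and is essentially the paper's argument: an integral remainder along the segment $y+t\Delta y$, the same Hessian $-\|z\|^{-2}\bigl[2(\hat z\cdot h)P_{z^\perp}h+\|P_{z^\perp}h\|^2\hat z\bigr]$, and the lower bound $\|z_t\|\ge(1-\eta)\|y\|$. You differ in two places, both of which help. You use the second-order Taylor kernel $(1-t)$, which also sidesteps the paper's unstated polarization step from the diagonal bound on $D^2f[h,h]$ to an operator-norm bound. You also use the orthogonality exactly rather than through the triangle inequality, which gives the sharper constant $1/\sqrt3$ in place of $3$ (below even the paper's parenthetical refinement to $1$); incidentally, this makes the rounded figures $0.8\%$ and $3.7\%$ hold strictly, whereas $3\eta^2/(1-\eta)^2$ itself gives $0.83\%$ and $3.70\%$.
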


\begin{proof}
By the fundamental theorem of calculus, $R_2 = \int_0^1 [Df(y+t\Delta y) - Df(y)]\Delta y\, dt$. Differentiating the Jacobian $Df(y) = (\sqrt{m}/\|y\|)P_{y^\perp}$ gives the Hessian bilinear form:
\[
D^2f(y)[h,h] = -\frac{\sqrt{m}}{\|y\|^2}\bigl[2(\yhat \cdot h)\,P_{y^\perp} h + \|P_{y^\perp} h\|^2\,\yhat\bigr]
\]
The two bracketed terms are orthogonal ($P_{y^\perp}h \perp \yhat$), so by the triangle inequality $\sqrt{a^2+b^2} \leq |a|+|b|$:
\[
\|D^2f(y)[h,h]\| \leq \frac{\sqrt{m}}{\|y\|^2}\bigl(2|\yhat \cdot h|\,\|P_{y^\perp}h\| + \|P_{y^\perp}h\|^2\bigr) \leq \frac{3\sqrt{m}\|h\|^2}{\|y\|^2}
\]
where the last step uses $|\yhat \cdot h| \leq \|h\|$ and $\|P_{y^\perp}h\| \leq \|h\|$. For any $z = y + t\Delta y$ on the segment, $\|z\| \geq (1-\eta)\|y\|$, so $\|D^2f(z)\|_{\mathrm{op}} \leq 3\sqrt{m}/((1-\eta)^2\|y\|^2)$. By the mean-value inequality on the derivative:
\[
\|Df(y+t\Delta y) - Df(y)\|_{\mathrm{op}} \leq t\|\Delta y\| \cdot \sup_{z}\|D^2f(z)\|_{\mathrm{op}} \leq \frac{3\sqrt{m}\,t\|\Delta y\|}{(1-\eta)^2\|y\|^2}
\]
Substituting into the integral representation and bounding $\sup_{t \in [0,1]} t \leq 1$ yields the stated bound. (The constant~$3$ is not tight; using the integral form $\int_0^1 (1-t)\, dt = 1/2$ and the AM-GM refinement $|\yhat \cdot h|\,\|P_{y^\perp}h\| \leq \|h\|^2/2$ yields a sharper constant of~$1$. We retain~$3$ for a self-contained single-step argument.)
\end{proof}

\subsection{Differential Implications for Sign vs.\ Magnitude}

The projector $P_{y^\perp}$ acts differentially on the two perturbation types:

\begin{table}[!ht]
\centering
\caption{Differential implications of the transverse projector $P_{y^\perp}$ on three perturbation types.}
\label{tab:differential}
\begin{tabular}{@{}llll@{}}
\toprule
Perturbation type & Structure of $\Delta y$ & Radial fraction $\mathcal{R}$ & After $P_{y^\perp}$ \\
\midrule
Uniform scaling $\epsilon W$ & $\Delta y \propto y$ (pure radial) & $\approx 1$ & $\approx 0$ (absorbed) \\
Sign-flip & Large transverse component & $\approx p$ (small) & $(1-p)$ preserved \\
Magnitude noise & Mostly radial (Bussgang) & $\approx 2/\pi$ (large) & $(1-2/\pi)$ preserved \\
\bottomrule
\end{tabular}
\end{table}

The radial fraction $\mathcal{R}^{\mathrm{mag}} \approx 2/\pi$ vs.\ $\mathcal{R}^{\mathrm{sign}} \approx p$ is the geometric origin of the $\pi/(\pi-2)$ asymmetry factor derived in Section~6.

\section{Single-Layer Equivalence and Its Insufficiency}

\subsection{Energy Equivalence}

\setcounter{proposition}{0}
\begin{proposition}[Single-Layer Energy Equivalence]\label{prop:single-layer}
Let $\Delta W^{(1)}, \Delta W^{(2)} \in \R^{m \times n}$ be given matrices (deterministic, or any fixed realization of random matrices) with $\|\Delta W^{(1)}\|_F = \|\Delta W^{(2)}\|_F$. For $\xhat \sim \Unif(S^{n-1})$:
\begin{equation}
\E_{\xhat}\!\left[\|\Delta W^{(1)} \xhat\|^2\right] = \E_{\xhat}\!\left[\|\Delta W^{(2)} \xhat\|^2\right] = \frac{\|\Delta W^{(1)}\|_F^2}{n}
\end{equation}
(The expectation is over $\xhat$ only; $\Delta W^{(k)}$ are held fixed.)
\end{proposition}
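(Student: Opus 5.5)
The plan is to reduce the expectation to the second-moment matrix of the uniform distribution on the sphere. For any fixed matrix $A \in \R^{m \times n}$ the squared norm can be written as a trace, $\|A\xhat\|^2 = \xhat^\top A^\top A \xhat = \tr(A^\top A\, \xhat\xhat^\top)$. By linearity of trace and expectation this gives
\[
\E_{\xhat}\bigl[\|A\xhat\|^2\bigr] = \tr\bigl(A^\top A\, \E[\xhat\xhat^\top]\bigr).
\]
The whole statement therefore reduces to computing $\E[\xhat\xhat^\top]$ for $\xhat \sim \Unif(S^{n-1})$.

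That matrix is the one genuinely probabilistic step, and it is where I expect the only real obstacle. I will show $\E[\xhat\xhat^\top] = I_n/n$ by a symmetry argument.

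The uniform measure on the sphere is invariant under every orthogonal map $Q$, so $\xhat$ and $Q\xhat$ have the same law. Hence the covariance $\Sigma := \E[\xhat\xhat^\top]$ satisfies $Q\Sigma Q^\top = \Sigma$ for all $Q$.

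Two choices of $Q$ pin $\Sigma$ down:
\begin{itemize}
\item Taking $Q$ to be a diagonal sign flip $\diag(\pm 1)$ kills every off-diagonal entry, since $\E[\hat x_i \hat x_j] = -\E[\hat x_i \hat x_j]$ for $i \neq j$.
\item Taking $Q$ to be a coordinate permutation forces all diagonal entries to be equal.
\end{itemize}
So $\Sigma = cI_n$. The constraint $\tr\Sigma = \E\|\xhat\|^2 = 1$ then fixes $c = 1/n$.

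As an alternative route that avoids invariance, I could represent $\xhat = g/\|g\|$ with $g \sim N(0, I_n)$. Exchangeability of the coordinates together with $\sum_i \hat x_i^2 = 1$ gives the same conclusion.

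Substituting $\Sigma = I_n/n$ yields
\[
\E_{\xhat}\bigl[\|A\xhat\|^2\bigr] = \frac{\tr(A^\top A)}{n} = \frac{\|A\|_F^2}{n}.
\]
Applying this with $A = \Delta W^{(1)}$ and with $A = \Delta W^{(2)}$, the hypothesis $\|\Delta W^{(1)}\|_F = \|\Delta W^{(2)}\|_F$ makes the two expectations equal. This proves the proposition.

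The matrices are held fixed throughout and only $\xhat$ is averaged. The result therefore applies verbatim to any realization of the random sign-flip or magnitude perturbations (A3)--(A4). It shows that a single linear layer cannot distinguish the two perturbation types by output energy, which is the point of the ``insufficiency'' framing.
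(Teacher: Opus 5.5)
Your proof is correct and follows the paper's argument: write $\|A\xhat\|^2 = \xhat^\top A^\top A \xhat$, take the expectation using $\E[\xhat\xhat^\top] = I_n/n$, and obtain $\tr(A^\top A)/n = \|A\|_F^2/n$. The only difference is that you prove the spherical second-moment identity (via sign-flip and permutation invariance plus the trace constraint), whereas the paper simply cites it.
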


\begin{proof}
For any $M \in \R^{m \times n}$ and $\xhat \sim \Unif(S^{n-1})$: $\E[\|M\xhat\|^2] = \E[\xhat^\top M^\top M \xhat] = \tr(M^\top M)/n = \|M\|_F^2/n$, using the spherical identity $\E[\xhat\xhat^\top] = I_n/n$. Equal Frobenius norms imply equal output energies.
\end{proof}

\subsection{Why Single-Layer Cannot Explain the Asymmetry}

Despite equal total energy, the two perturbation types already exhibit different \textbf{directional structure} relative to the unperturbed output $y = W\xhat$:

\begin{itemize}
\item \textbf{Sign perturbation}: $\E[\Delta y^{\mathrm{sign}} \mid W, \xhat] = -2p \cdot y$ (conditional mean is pure radial), giving radial fraction $\mathcal{R}^{\mathrm{sign}} \approx p$.
\item \textbf{Magnitude perturbation} (additive independent model): $\Cov(\Delta y^{\mathrm{mag}}) = (4p/n)\, I_m$ (isotropic, with variance $\sigma_\delta^2 = \delta^2 = 4p/n$ matching the sign-flip Frobenius norm), giving radial fraction $\mathcal{R}^{\mathrm{mag}} = 1/m$.
\end{itemize}

Under this additive model, a single-layer RMSNorm predicts (as $m \to \infty$, so $\mathcal{R}^{\mathrm{mag}} = 1/m \to 0$):
\begin{equation}
\frac{\E[\|P_{y^\perp}\Delta y^{\mathrm{mag}}\|^2]}{\E[\|P_{y^\perp}\Delta y^{\mathrm{sign}}\|^2]} \to \frac{1}{1-p} > 1
\end{equation}

This predicts magnitude perturbations are \emph{more} damaging than sign perturbations after RMSNorm---opposite to experimental observation. The resolution requires: (i)~adopting the \textbf{sign-preserving} magnitude model (assumption~(A4), where $\Delta W^{\mathrm{mag}} = \sign(W) \odot \delta$ depends on $W$ through $\sign(W)$, creating Bussgang correlation $\rho = \sqrt{2/\pi}$ between $\Delta z$ and $z$) rather than the additive independent model considered above; and (ii)~introducing ReLU to create hidden-space geometric asymmetry. Both ingredients are provided by Theorem~3 (Section~6). The additive independent model of this subsection is solely a motivating counterexample; all subsequent analysis uses the sign-preserving model~(A4).

\section{Core Result: The \texorpdfstring{$\pi/(\pi-2)$}{pi/(pi-2)} Asymmetry}

\subsection{Model Setup}

Consider a simplified two-layer pre-norm block:
\begin{equation}
\xhat = x/\|x\| \in S^{n-1}, \quad z = W_1\xhat, \quad a = \ReLU(z), \quad y = W_2 a, \quad \yhat = \RMSNorm(y)
\end{equation}
with $W_1 \in \R^{m \times n}$ (i.i.d.\ $N(0,1/n)$), $W_2 \in \R^{d_{\mathrm{out}} \times m}$ (i.i.d.\ $N(0,1/m)$), independent of each other.

\paragraph{Perturbation models} (matched expected Frobenius norm $\E[\|\Delta W_1\|_F^2] = 4pm$; for sign-flips this is an expectation with variance $O(pm)$ that concentrates for large $mn$):
\begin{itemize}
\item \emph{Sign-flip}: Each entry of $W_1$ independently flips sign with probability~$p$.
\item \emph{Magnitude (sign-preserving)}: $W_1' = W_1 + \sign(W_1) \odot \delta$, with $\delta^2 = 4p/n$.
\end{itemize}

\subsection{Theorem Statement}

\begin{theorem}[ReLU-Induced Differential Suppression]\label{thm:core}
Under assumptions \textup{(A1)--(A6)}:

\textup{\textbf{(Asymptotic limit)}} For each fixed $p \in (0, 1/2)$, as $n, m, d_{\mathrm{out}} \to \infty$ with $m/n \to \gamma \in (0, \infty)$:
\begin{equation}
c(p) := \frac{\|P_{y^\perp}\Delta y^{\mathrm{sign}}\|^2}{\|P_{y^\perp}\Delta y^{\mathrm{mag}}\|^2} \xrightarrow{P} \frac{1 - \mathcal{R}^{\mathrm{sign}}(p)}{1 - \mathcal{R}^{\mathrm{mag}}(p)} \cdot R_A(p)
\end{equation}
where $R_A(p) = \E[\|\Delta y^{\mathrm{sign}}\|^2]/\E[\|\Delta y^{\mathrm{mag}}\|^2]$ is the pre-RMSNorm energy ratio, and $\mathcal{R}$ denotes the radial fraction.

\textup{\textbf{(Small-$p$ behavior)}} Taking $p \to 0$ in the resulting limit:
\begin{equation}\label{eq:core-result}
\boxed{c(p) = \frac{\pi}{\pi - 2} + O(\sqrt{p}) \quad \text{as } p \to 0, \qquad \frac{\pi}{\pi-2} \approx 2.75}
\end{equation}
The leading constant $\pi/(\pi-2)$ is the theorem's primary rigorous claim. The $O(\sqrt{p})$ remainder is bounded but its coefficient is not explicitly determined; a structural decomposition is given in Remark~\ref{rem:structural}.
\end{theorem}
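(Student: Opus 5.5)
The plan is to reduce everything to a one-row, two-dimensional Gaussian computation. Three facts drive the reduction. First, conditional on $\xhat$, the rows of $W_1$ are i.i.d., so the pairs $(z_i,\Delta z_i)$, $i=1,\dots,m$, are i.i.d. for each perturbation type. Second, $W_2$ is independent of $W_1$. Third, the transverse energy depends only on the Gram matrix of $\{a,\Delta a\}$ after multiplication by $W_2$.

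\textbf{Step 1 (hidden pre-activations).} Rescale so that $\sqrt n\,z_i\sim N(0,1)$ exactly.
\begin{itemize}
\item For the sign-flip, $\Delta z_i=-2\sum_j b_{ij}w_{ij}\hat x_j$ with $b_{ij}\sim\mathrm{Bern}(p)$. Splitting $b_{ij}=p+(b_{ij}-p)$ gives $\Delta z_i=-2p\,z_i+\xi_i$. Under (A2) the Lindeberg condition holds, so $\sqrt n(z_i,\xi_i)$ converges to a centered Gaussian pair with $\xi_i$ uncorrelated with $z_i$ and $\Var(\sqrt n\,\xi_i)\to 4p(1-p)$.
\item For the magnitude perturbation, $\Delta z_i=\delta\sum_j\sign(w_{ij})\hat x_j$. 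Here $\Var(\sqrt n\,\Delta z_i)=4p$ and $\Cov(\sqrt n\,z_i,\sqrt n\,\Delta z_i)=2\sqrt p\,\E[|g|]=2\sqrt{p}\sqrt{2/\pi}$. So the pair is asymptotically bivariate normal with correlation exactly the Bussgang value $\sqrt{2/\pi}$ (Fact~\ref{fact:bussgang}).
\end{itemize}
In both cases the limit is a fixed bivariate normal law depending only on $p$. This is the delocalization input: without (A2), a single large $\hat x_j$ breaks the CLT.

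\textbf{Step 2 (ReLU and the law of large numbers over rows).} Set $\Delta a_i=\ReLU(z_i+\Delta z_i)-\ReLU(z_i)$. Because the rows are i.i.d. given $\xhat$, $n\|a\|^2/m$, $n\|\Delta a\|^2/m$ and $n\,a^\top\Delta a/m$ converge in probability to the corresponding bivariate-normal expectations. This needs a triangular-array weak LLN, which follows from uniformly bounded fourth moments since ReLU is $1$-Lipschitz. Since $\Delta z$ is of relative size $\sqrt p$, not infinitesimal, these expectations are exact orthant-type integrals rather than linearizations. They define $R_A(p)=\E(\Delta a^{\mathrm s})^2/\E(\Delta a^{\mathrm m})^2$ and $\mathcal R(p)=\cos^2(a,\Delta a)$ in the limit.

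\textbf{Step 3 (through $W_2$).} Conditional on $(a,\Delta a)$, the matrix $W_2[a,\Delta a]$ has i.i.d. Gaussian rows. Hence $\frac{m}{d_{\mathrm{out}}}$ times its Gram matrix concentrates on the Gram matrix of $(a,\Delta a)$ as $d_{\mathrm{out}}\to\infty$. Because
\[
\|P_{y^\perp}\Delta y\|^2=\|\Delta y\|^2-(\yhat^\top\Delta y)^2
\]
is a continuous function of this $2\times 2$ Gram matrix away from $y=0$, the continuous mapping theorem gives the stated limit $\frac{1-\mathcal R^{\mathrm{sign}}}{1-\mathcal R^{\mathrm{mag}}}R_A(p)$. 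The common factors $d_{\mathrm{out}}/n$ cancel in the ratio.

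\textbf{Step 4 (small $p$).} Write $\Delta a=\mathbf 1\{z>0\}\Delta z+r$. The remainder $r$ is supported on $\{|z|\le|\Delta z|\}$, which has probability $O(\sqrt p)$, and $|r|\le|\Delta z|$. So $r$ contributes $O(\sqrt p)$ relative corrections to every moment. At leading order we get:
\begin{itemize}
\item $\E(\Delta a^{\mathrm s})^2\approx\E(\Delta a^{\mathrm m})^2\approx\tfrac12\cdot 4p$, so $R_A\to1$.
\item $\mathcal R^{\mathrm{sign}}=O(p)$, since $\E[a\,\Delta a^{\mathrm s}]\approx -2p\cdot\tfrac12$.
\item $\E[a\,\Delta a^{\mathrm m}]\approx\tfrac12\cdot 2\sqrt p\sqrt{2/\pi}$, which gives
\[
\mathcal R^{\mathrm{mag}}\approx\frac{(\sqrt{p}\sqrt{2/\pi})^2}{\tfrac12\cdot 2p}=\frac{2}{\pi}.
\]
\end{itemize}
Substituting into the Step~3 formula, $c(p)\to 1/(1-2/\pi)=\pi/(\pi-2)$ with an $O(\sqrt p)$ error.

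I expect the main obstacle to be the control of $r$ near the ReLU kink, uniformly enough to certify the $O(\sqrt p)$ remainder in the cross moment $\E[a\,\Delta a]$ and not only in $\E[\Delta a^2]$. The first thing I would try is an exact bivariate-normal computation of $\E[\ReLU(Z+U)\ReLU(Z)]$ in closed form (arcsine-kernel formulas), followed by expansion in $\sqrt p$. A secondary difficulty is the triangular-array LLN in Step~2 under merely delocalized $\xhat$; there I would rely on conditional independence of rows together with Berry--Esseen bounds in terms of $\|\xhat\|_\infty$.
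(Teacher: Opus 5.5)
Your argument is correct and follows the paper's overall skeleton: a Lindeberg bivariate-normal limit per neuron under (A2), a law of large numbers over the i.i.d.\ rows, Gram/angle preservation through $W_2$, then expansion in $p$. Where you differ is in how you evaluate the small-$p$ moments.

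The paper computes the sign-flip energies exactly in the Gaussian limit: $\beta_{\mathrm{flip}}$ by a quarter-plane integral and $\beta_{\mathrm{smooth}}$ by Price's ReLU-product formula. It then obtains the sign cross moment from the reflection identity $\E[\Delta a_i\,a_i]=-\tfrac12\E[(\Delta a_i)^2]$, so that $\mathcal{R}^{\mathrm{sign}}=\beta_{\mathrm{total}}/2$ exactly. Only the magnitude side (Step~A7, Lemma~\ref{lem:bussgang-relu}) uses a boundary-strip estimate like yours.

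You instead treat both perturbations with the single decomposition $\Delta a=\mathbf{1}\{z>0\}\Delta z+r$. For the sign flip you add the split $\Delta z=-2pz+\xi$, with $\xi$ asymptotically independent of $z$. This gives $\E[\mathbf{1}\{Z>0\}U^2]=2p$ exactly for both types, and $\E[a\,\Delta a^{\mathrm{s}}]=-p+O(p^{3/2})$, without any orthant integrals.

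What you lose is the explicit coefficient $4/(3\pi)$ in $R_A(p)$ and $\mathcal{R}^{\mathrm{sign}}$. The paper uses it for Remark~\ref{rem:structural} and its theory tables, but the theorem itself does not assert it. What you gain is a shorter, symmetric proof. Your continuous-mapping treatment of the $2\times 2$ Gram matrix also handles the sample ratio more cleanly than the paper's ratio-of-expectations-plus-Slutsky remark.

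For the obstacle you flag, let $(Z,U)$ be the rescaled limit of $(\sqrt{n}\,z_i,\sqrt{n}\,\Delta z_i)$. The estimate you need is $\E[U^2\mathbf{1}\{|Z|\le|U|\}]=O(p^{3/2})$. It holds because $Z$ given $U$ has bounded density: the conditional variance is $1-p$ for sign flips and $1-2/\pi$ for magnitude. Hence $\Pr(|Z|\le|U|\mid U)\le C|U|$, and the term is at most $C\,\E|U|^3$. The same bound controls $\E[a\,r]$, since $0\le a\le|\Delta z|$ wherever $r\neq 0$.

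Be careful with the informal ``probability $O(\sqrt{p})$ times $|\Delta z|^2$'' reasoning, which the paper also uses in Step~A7. If you make it rigorous with Cauchy--Schwarz instead of the conditional-density bound, you only get an $O(p^{1/4})$ relative error, not $O(\sqrt{p})$.
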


\begin{remark}[Structural decomposition]\label{rem:structural}
The $O(\sqrt{p})$ remainder in Theorem~\ref{thm:core} admits a \textbf{factored decomposition} with partially identified coefficients:
\[
c(p) = \frac{\pi}{\pi - 2}\,(1-p)\!\left(1 - \frac{4\sqrt{p}}{3\pi}\right)\!\bigl(1 + \varepsilon(p)\bigr)
\]
where each factor has a clear provenance: $(1-p)$ from the radial fraction ratio (Part~B), $(1-4\sqrt{p}/(3\pi))$ from the energy ratio $R_A(p)$ (Part~A; the coefficient $4/(3\pi)$ is exact from Steps~A4--A6, with an $O(\sqrt{p})$ multiplicative remainder from Step~A8), and $(1+\varepsilon(p))$ from Part~B's boundary corrections. Both the factor $(1-4\sqrt{p}/(3\pi))$ and $\varepsilon(p)$ contribute at $O(\sqrt{p})$, so the factored form is \emph{not} a traditional asymptotic expansion with separated orders; it is a structural decomposition in which the coefficient $4/(3\pi)$ is exact while $\varepsilon(p)$'s coefficient remains bounded but undetermined ($|\varepsilon(p)| \leq C\sqrt{p}$ for some $C > 0$; empirically $C \lesssim 0.2$ from Table~6). Note that the factored form retains an explicit $O(p)$ term via $(1-p)$ that the additive bound $O(\sqrt{p})$ does not isolate; the two forms are compatible as upper-bound statements but serve different goals (numerical comparison vs.\ asymptotic order). This factored form is used for the ``theory'' column in Tables~2 and~6.
\end{remark}

\begin{remark}[Limit ordering and finite-$n$ interpretation]\label{rem:double-limit}
Theorem~\ref{thm:core} involves a \textbf{double limit}: first $n \to \infty$ (for each fixed~$p$), then $p \to 0$ for the leading-order expansion. The bivariate Gaussian approximation (Step~A3 in Appendix~A) has Berry--Esseen error $O(\|\xhat\|_\infty) = O(1/\sqrt{n})$ under delocalization. Consequently:
\begin{itemize}
\item \emph{Convergence mode}: The left-hand side of (13) is a sample ratio (a single draw of $\xhat, W_1, W_2$). The proof computes $\E[\|\cdot\|^2]/\E[\|\cdot\|^2]$ (ratio of expectations over $\xhat$). Since both $\|\Delta y^{\mathrm{sign}}\|^2$ and $\|\Delta y^{\mathrm{mag}}\|^2$ are averages of $d_{\mathrm{out}}$ i.i.d.\ projections (conditioned on $(W_1, \xhat)$; the randomness comes from $W_2$'s rows), the SLLN gives concentration of each around its conditional mean as $d_{\mathrm{out}} \to \infty$. The denominator concentrates around a positive constant (since $\E[\|\Delta y^{\mathrm{mag}}\|^2] > 0$), so by Slutsky's theorem the ratio converges in probability to the ratio of expectations.
\item The primary claim $\lim_{p \to 0} c(p) = \pi/(\pi-2)$ is rigorous: it is the $n \to \infty$ limit for any fixed small~$p$, followed by $p \to 0$ in the resulting deterministic function. The leading constant $\pi/(\pi-2)$ relies on exact population moment identities ($\Corr(z_i, \Delta z_i) = \sqrt{2/\pi}$, Lemma~A.1) that hold for all~$n$, so it is not affected by CLT error.
\item The $\sqrt{p}$ correction in the factored expansion (Remark~\ref{rem:structural}) requires the CLT's $O(1/\sqrt{n})$ error to be negligible relative to $\sqrt{p}$, i.e., $p \gg 1/n$---easily satisfied at all experimental values ($p \geq 0.01$, $n = 2048$: $1/n \approx 5 \times 10^{-4}$).
\item In experimental tables (fixed $n{=}2048$, varying~$p$), the observed deviations from theory ($1$--$6\%$, Table~6) may reflect both $\varepsilon(p)$ and residual finite-$n$ effects; the two contributions are not disentangled.
\end{itemize}
\end{remark}

\subsection{Proof Sketch}

The proof decomposes into two parts: (A)~computing the total energy ratio $R_A(p)$ before RMSNorm, and (B)~computing the radial fractions that determine how much energy survives RMSNorm's transverse projection.

\paragraph{Part A: Energy ratio before RMSNorm.}

The key computation is the expected squared activation change $\E[(\Delta a_i)^2]$ for each neuron~$i$. For sign-flip perturbations, there are two contributions:

\begin{enumerate}
\item \emph{Gate-flip energy} $\beta_{\mathrm{flip}}$: When the sign of $z_i$ changes (neuron switches on/off). Using the bivariate Gaussian CDF with correlation $\rho = 1-2p$:
\begin{equation}
\beta_{\mathrm{flip}} = \frac{8p^{3/2}}{3\pi} + O(p^{5/2})
\end{equation}

\item \emph{Smooth energy} $\beta_{\mathrm{smooth}}$: When the neuron remains in the same activation state. Via the Price theorem / ReLU product formula $\E[\max(X,0)\max(X',0)] = (2\pi)^{-1}[\sqrt{1-\rho^2} + \rho(\pi/2 + \arcsin\rho)]$:
\begin{equation}
\beta_{\mathrm{smooth}} = 2p - \frac{8p^{3/2}}{\pi} + O(p^{5/2})
\end{equation}
\end{enumerate}

For magnitude perturbations, $\E[(\Delta a_i^{\mathrm{mag}})^2] = \E[\mathbf{1}_{z_i > 0}(\Delta z_i)^2] = (1/2) \cdot \delta^2 = 2p/n$ (half-space symmetry: $\Pr(z_i > 0) = 1/2$ for symmetric $z_i$; $\delta^2 = 4p/n$ by~(A4)). This gives:
\begin{equation}
R_A(p) = \left(1 - \frac{4\sqrt{p}}{3\pi}\right)\!\bigl(1 + O(\sqrt{p})\bigr)
\end{equation}
where the leading factor has an exact coefficient from Steps~A4--A6, and the multiplicative remainder $O(\sqrt{p})$ arises from boundary gate-flip corrections in Step~A7.

\paragraph{Part B: Radial fraction analysis.}

Using SLLN concentration and the random projection angle-preservation lemma ($W_2$ preserves angles as $d_{\mathrm{out}} \to \infty$):
\begin{itemize}
\item \emph{Sign perturbation}: The reflection symmetry identity $\E[\Delta a_i \cdot a_i] = -\E[(\Delta a_i)^2]/2$ gives $\mathcal{R}^{\mathrm{sign}} \to p - 4p^{3/2}/(3\pi) + O(p^{5/2})$.
\item \emph{Magnitude perturbation}: By Lemma~\ref{lem:bussgang-relu} (Bussgang-ReLU correlation; Appendix~A.2), the Bussgang correlation $\Corr(\Delta z_i, z_i) = \sqrt{2/\pi}$ propagates through ReLU to give $\mathcal{R}^{\mathrm{mag}} \to 2/\pi + O(\sqrt{p})$.
\end{itemize}

Combining via $\|P_{y^\perp}\Delta y\|^2 = (1-\mathcal{R})\|\Delta y\|^2$:
\[
c(p) = \frac{\|P_{y^\perp}\Delta y^{\mathrm{sign}}\|^2}{\|P_{y^\perp}\Delta y^{\mathrm{mag}}\|^2} = \frac{1-\mathcal{R}^{\mathrm{sign}}}{1-\mathcal{R}^{\mathrm{mag}}} \cdot \underbrace{\frac{\|\Delta y^{\mathrm{sign}}\|^2}{\|\Delta y^{\mathrm{mag}}\|^2}}_{R_A(p)} \to \frac{1-p}{1-2/\pi} \cdot R_A(p) = \frac{\pi(1-p)}{\pi-2}\cdot R_A(p).
\]
The complete proof is given in Appendix~A.

\subsection{Intuition: ``Sign Structure + ReLU Create, RMSNorm Exposes''}

The factor $\pi/(\pi-2)$ arises from two complementary mechanisms:
\begin{enumerate}
\item \textbf{Sign-preserving structure provides Bussgang alignment; ReLU propagates it}: The correlation $\rho = \sqrt{2/\pi}$ between $\sign(W)$ and $W$ is a pre-existing property of the weight geometry---it is \emph{introduced} by the sign-preserving perturbation structure $\Delta w_{ij} \propto \sign(w_{ij})$, not by ReLU itself. ReLU's role is to \emph{propagate} this alignment faithfully through its half-space partition (Lemma~\ref{lem:bussgang-relu}), yielding a post-activation radial fraction $\mathcal{R}^{\mathrm{mag}} \to 2/\pi$, while simultaneously giving sign-flip perturbations a near-zero radial fraction $\mathcal{R}^{\mathrm{sign}} \to p$. Without ReLU (linear activation), no asymmetry exists (Proposition~\ref{prop:single-layer}).

\item \textbf{RMSNorm exposes the asymmetry}: By projecting out the radial component (Theorem~\ref{thm:transverse-proj}), RMSNorm retains only $(1-2/\pi) \approx 36.3\%$ of magnitude perturbation energy but $(1-p) \approx 99\%$ of sign perturbation energy (for small~$p$). The ratio is $\pi/(\pi-2)$.
\end{enumerate}

\subsection{Numerical Verification}

Table~\ref{tab:theorem3-verification} and Figure~\ref{fig:theorem3} compare the theoretical prediction against Monte Carlo simulation (V1 architecture, $n{=}m{=}d_{\mathrm{out}}{=}2048$, 20 seeds, 10,000 samples per seed). Agreement is within 1--6\% across $p \in [0.01, 0.10]$.

\begin{table}[H]
\centering
\begin{tabular}{@{}cccc@{}}
\toprule
$p$ & $c(p)$ theory & $c(p)$ measured (V1, $n{=}2048$) & Deviation \\
\midrule
0.01 & 2.610 & $2.639 \pm 0.012$ & $+1.1\%$ \\
0.02 & 2.538 & $2.583 \pm 0.010$ & $+1.8\%$ \\
0.05 & 2.378 & $2.465 \pm 0.010$ & $+3.7\%$ \\
0.10 & 2.178 & $2.299 \pm 0.008$ & $+5.6\%$ \\
\bottomrule
\end{tabular}
\caption{The V1 architecture (correct two-layer model) matches theory within 1--6\% across all tested $p$ values. Theory values are computed from the structural decomposition (Remark~\ref{rem:structural}): $c(p) = \frac{\pi}{\pi-2}(1-p)(1-\frac{4\sqrt{p}}{3\pi})$ with exact $\beta_{\mathrm{flip}}, \beta_{\mathrm{smooth}}$ coefficients; the multiplicative remainder $\varepsilon(p)$ (whose coefficient is the same order $O(\sqrt{p})$ as the structural correction $4\sqrt{p}/(3\pi)$) is \emph{not} included. The systematic positive bias ($+1$--$6\%$, increasing with~$p$) is consistent with $\varepsilon(p) > 0$, though finite-$n$ CLT residuals ($O(1/\sqrt{n}) \approx 2\%$ at $n{=}2048$) are of comparable magnitude and are not disentangled (Remark~\ref{rem:double-limit}).}
\label{tab:theorem3-verification}
\end{table}

\begin{figure}[H]
\centering
\includegraphics[width=0.85\textwidth]{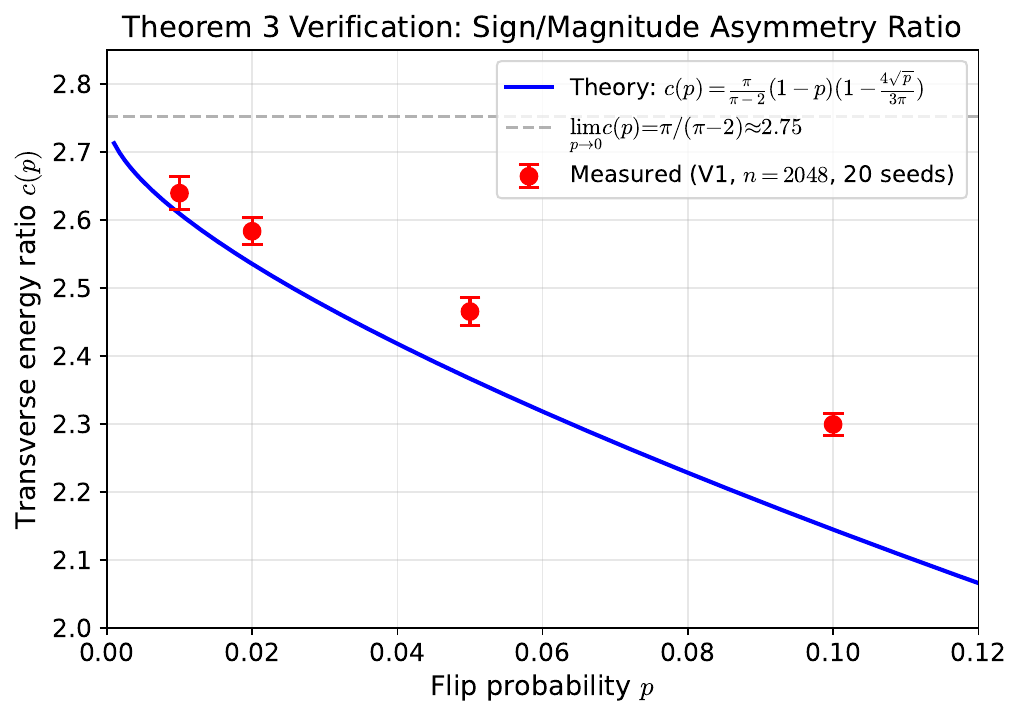}
\caption{Theorem~3 verification. Blue curve: leading-order theoretical prediction $c(p) = \frac{\pi}{\pi-2}(1-p)(1 - \frac{4\sqrt{p}}{3\pi})$ (excluding the multiplicative $\varepsilon(p)$ correction of Remark~\ref{rem:structural}). Red points: measured values from the V1 architecture ($n{=}2048$, 20 seeds, $2\sigma$ error bars). The dashed line marks the asymptotic limit $\pi/(\pi-2) \approx 2.75$. Theory and measurement agree within 1--6\%.}
\label{fig:theorem3}
\end{figure}

\section{Beyond the Toy Regime}

Theorem~\ref{thm:core} establishes a single-layer asymmetry of $\pi/(\pi-2) \approx 2.75$. Real models exhibit substantially larger sign sensitivity (with outlier-targeted sign-flip $\Delta$PPL ratios exceeding $10^3\times$; see Section~9.5). This section addresses the gap.

\subsection{Multi-Layer Compounding Does Not Hold}

\begin{conjecture}[Multi-layer compounding; refuted in Section~9.4]
Since Theorem~\ref{thm:core} establishes a per-layer sign-to-magnitude energy ratio $\pi/(\pi-2) > 1$, and perturbation amplification factors commonly compound multiplicatively across layers in deep networks (e.g., Lipschitz stability bounds), a natural prima facie hypothesis is that the single-layer factor compounds across $L$ layers:
\[
C_L \approx \left(\frac{\pi}{\pi-2}\right)^{\!\beta L}, \quad \beta \in (0,1]
\]
This predicts (P1)~$C_L$ increases monotonically with $L$, and (P2)~$\log C_L$ is linear in $L$ with positive slope.
\end{conjecture}

\paragraph{Experimental refutation.} Three independent experiment groups contradict both predictions (Table~\ref{tab:multilayer}): in all cases $C_L$ \emph{decreases} with depth---opposite to the predicted monotone increase for any $\beta > 0$. The strong version ($\beta=1$) predicted $C_6 \approx 437$; the observed values are $C_6 \in [1.3, 3.1]$, refuting it decisively. Weaker versions with $\beta \to 0$ predict near-constant $C_L$ (e.g., $\beta = 0.02$ gives $C_6 \approx 1.13$), which may appear superficially compatible with V3's nearly flat values ($1.355 \to 1.318$), but even these predict \emph{increasing}~$C_L$---whereas all three experiments show \emph{decreasing}~$C_L$. We therefore reformulate the question as an open problem (O2) about what mechanism determines $C_L$'s decay rate. Note: V2 stacks single-projection V0 blocks (not the full two-projection architecture of Theorem~\ref{thm:core}); nevertheless, the refutation is robust because Exp~B tests trained TinyLlama layers with equivalent activation function and still shows $C_L$ decline.

\begin{table}[!ht]
\centering
\caption{Multi-layer experiments: $C_L$ change from $L{=}1$ to $L{=}6$ across three architectures. All three refute exponential compounding.}
\label{tab:multilayer}
\begin{tabular}{@{}llccl@{}}
\toprule
Experiment & Architecture & $C_1$ & $C_6$ & Change \\
\midrule
V2 (i.i.d., no residual) & $[\ReLU(W_l\,\cdot) \to \text{RMSNorm}]^L$ & 3.587 & 3.066 & $-14.5\%$ \\
V3 (i.i.d., with residual) & $x + \text{FFN}(\xhat)$ & 1.355 & 1.318 & $-2.7\%$ \\
Exp B (trained, ReLU) & TinyLlama layers 8--13 & 3.494 & 2.966 & $-15.1\%$ \\
\bottomrule
\end{tabular}
\end{table}

\begin{figure}[ht]
\centering
\includegraphics[width=0.85\textwidth]{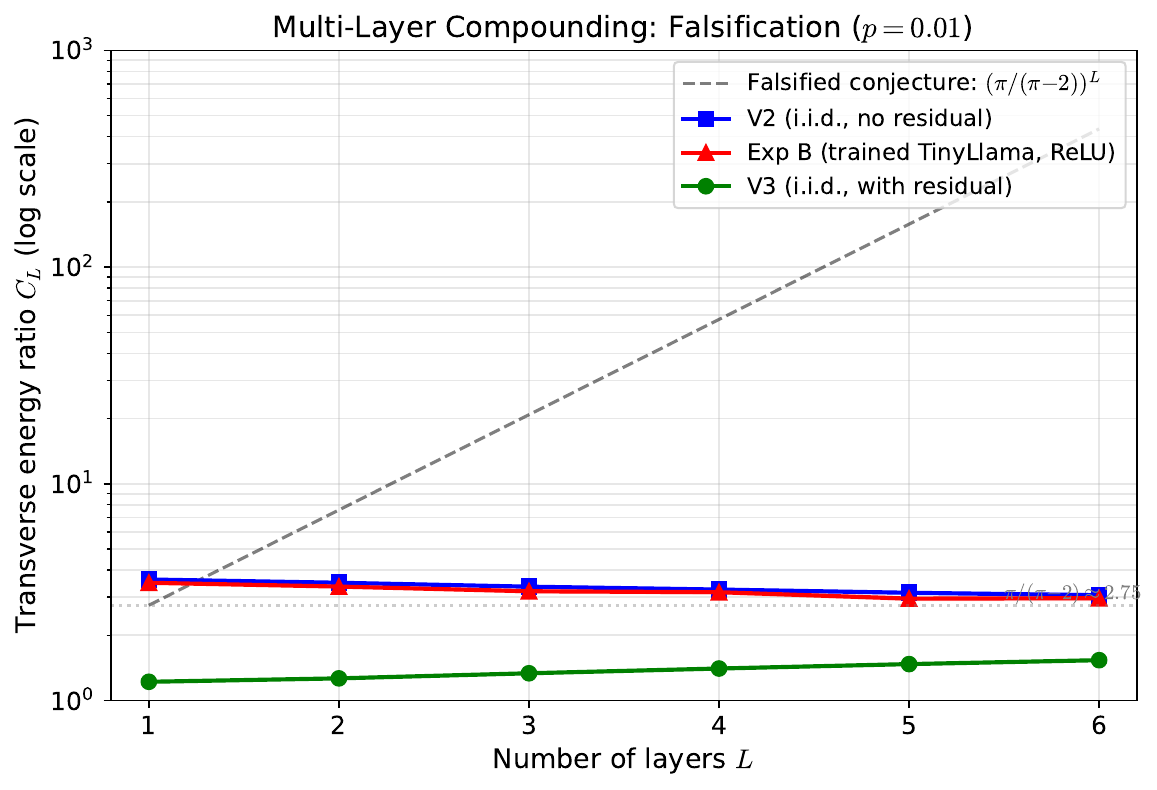}
\caption{Multi-layer compounding refutation ($p{=}0.01$). Dashed line: the refuted hypothesis $C_L = (\pi/(\pi{-}2))^L$, which predicts exponential growth to ${\sim}437$ at $L{=}6$. All three measured architectures---V2 (i.i.d., no residual), Exp~B (trained TinyLlama, ReLU), and V3 (i.i.d., with residual)---show flat or declining $C_L$, with observed $C_6 \in [1.3, 3.1]$. Shaded regions: 95\% CI.}
\label{fig:multilayer}
\end{figure}

\paragraph{Why $C_L$ declines.} After multiple layers, the intermediate activations $\xhat_l$ violate the assumptions of Theorem~\ref{thm:core}: (i)~ReLU produces half-Gaussian distributions (violating~(A1)); (ii)~cross-layer correlations accumulate, degrading isotropy; (iii)~the Bussgang constant $\sqrt{2/\pi}$ strictly requires Gaussian input.

\paragraph{Residual connection dilution.} Comparing V2 and V3 at $L=1$: residual connections reduce $C$ from 3.587 to 1.355 (a 62\% decrease), because the skip path $x_l$ provides an unperturbed fall-back that dilutes the sign/magnitude differential.

\subsection{Outlier Features Amplify Absolute Sign Sensitivity}

The gap from the single-layer baseline $2.75\times$ to the larger sign sensitivity observed in real models is not multi-layer compounding but rather \textbf{violation of assumption~(A2)}---the delocalization condition---by outlier features. \emph{Note}: The single-layer outlier leverage ratio $R(\alpha,n) \approx n\alpha^2$ is rigorously established (Appendix~\ref{app:outlier}, Theorem~\ref{thm:outlier-leverage}), and experimentally validated: at linear response ($p \leq 0.5\%$; Exp~E), count-matched NLL leverage stabilizes at ${\sim}10\times$, matching $n \cdot \mathbb{E}_{\mathrm{emp}}[\alpha^2] \approx 7$--$10$; Exps~F--G further confirm the $\alpha^2$ scaling (Spearman $\rho = 0.955$ and $0.927$). The column-group energy theory (Appendix~\ref{app:two-pop}) bounds $R_{\mathrm{col}} \leq 19$ for $\gamma \leq 0.5$. \emph{Log-loss resolution}: The all-column $\Delta$PPL ratio ($1{,}265\times$ median) appeared to exceed this by ${\sim}67\times$, but since $\mathrm{PPL} = \exp(\mathrm{NLL})$ is exponentially nonlinear, the correct comparison uses log-loss: the all-column NLL ratio is only $5.0\times$ (median, IQR $[4.8, 5.4]$), which falls \emph{within} $R_{\mathrm{col}} \leq 19$. \emph{Caveat}: $R_{\mathrm{col}}$ is a post-ReLU \emph{energy} ratio, while NLL is a downstream loss metric; their proportionality is a heuristic alignment (under small perturbations $\Delta\mathrm{NLL} \approx \frac{1}{2}(\Delta y)^\top H\,(\Delta y)$, which reduces to energy scaling when the loss Hessian $H$ is approximately isotropic in the perturbation subspace). A rigorous energy-to-NLL mapping is an open problem (O1c). The $251\times$ compression from PPL to NLL ratio accounts for the bulk of the apparent gap. Multi-layer propagation (O1b) remains open but is no longer needed to explain the all-column observation.

\emph{Important distinction}: Theorem~\ref{thm:core}'s $\pi/(\pi-2)$ quantifies the ratio of \emph{sign-flip vs.\ magnitude} perturbation damage (cross-type comparison, equal Frobenius norm). The Exp~D measurement in Section~9.5 quantifies the ratio of \emph{outlier-column vs.\ non-outlier-column} sign-flip damage (within-type comparison, same perturbation type applied to different columns). These are complementary observations: Exp~D demonstrates that \emph{absolute} sign sensitivity is dominated by outlier features (explaining why real models show extreme sign sensitivity overall), while Theorem~\ref{thm:core} explains the \emph{relative} mechanism by which sign perturbations are more damaging than magnitude perturbations in the delocalized baseline. Whether the sign-vs-magnitude ratio itself is amplified by outlier features remains an open question (O1).

\paragraph{Mechanism.} Consider a sign-flip restricted to a single column~$k$ of $W$ (i.e., $\Delta W_{:,k} \neq 0$, $\Delta W_{:,j} = 0$ for $j \neq k$). Since $\Delta y = \Delta W \xhat = \Delta W_{:,k} \xhat_k$ (a scalar times an $m$-dimensional column vector):
\[
\|\Delta y\|^2 = \|\Delta W_{:,k}\|^2 \cdot \xhat_k^2
\]
For the general case of i.i.d.\ Bernoulli($p$) sign-flips across all entries:
\[
\E[\|\Delta y\|^2] = 4p \sum_k \xhat_k^2 \|W_{:,k}\|^2
\]
revealing that each column~$k$'s contribution is weighted by $\xhat_k^2$. Under delocalization ($\xhat_k^2 \approx 1/n$), all columns contribute equally. But real transformer hidden states contain \textbf{outlier features} \citep{dettmers2022}: a small fraction of dimensions with $\xhat_k^2 = O(1)$ rather than $O(1/n)$. Sign-flips on these columns have leverage $R \approx n\alpha^2$ times that of average columns (Theorem~\ref{thm:outlier-leverage}).

Magnitude perturbations on outlier columns are similarly amplified in total energy, but their geometric alignment---which under delocalization is characterized by the Bussgang constant $\sqrt{2/\pi}$ (Lemma~\ref{lem:bussgang-relu})---is conjectured to channel the amplification predominantly into the radial direction (absorbed by RMSNorm). Since Lemma~\ref{lem:bussgang-relu} relies on assumption~(A2), this radial-channeling argument is heuristic for outlier columns (see Section~10.3, limitation~4). Sign perturbations lack this radial shelter regardless, so outlier features may selectively amplify the sign/magnitude asymmetry beyond $2.75\times$---but this conjecture has not been formally verified (O1 in Section~\ref{sec:open}).

\subsection{Experimental Evidence: Outlier-Targeted Sign-Flip}

\textbf{Exp~D} tests whether outlier features are indeed the dominant amplifier. On TinyLlama-1.1B with 5\% sign-flip rate, 20 seeds.

\emph{Outlier identification protocol}: We define outlier columns as those in the top 5\% by activation magnitude. Specifically, we compute $|\xhat_k|$ (the absolute value of the normalized hidden state) averaged over 128 random prompts from the WikiText-2 validation set, measured at the FFN input of layer~12 (a mid-network layer). Columns are ranked by this mean activation magnitude; the top 5\% (by count) are designated ``outlier.'' Results are qualitatively stable across layers 8--16 and prompt subsets.

\begin{table}[!ht]
\centering
\caption{Outlier sign-flip experiment (Exp~D) on TinyLlama-1.1B (20 seeds). Outlier columns (top 5\% by activation magnitude) produce ${\sim}1{,}265\times$ (median) more perplexity damage than non-outlier columns. Note: the $\Delta$PPL ratio is computed per-seed then medianed; median-of-ratios differs from the ratio of the median PPL values shown.}
\label{tab:outlier}
\begin{tabular}{@{}lrcrc@{}}
\toprule
Strategy & Flipped weights & PPL (median) & Rel.\ to baseline & $\Delta$PPL ratio \\
\midrule
Baseline (no flip) & 0 & 7.38 & $1\times$ & --- \\
Non-outlier columns only & ${\sim}$46M & 41.1 & $5.6\times$ & $1\times$ (ref.) \\
Random (all columns) & ${\sim}$48.7M & 979 & $133\times$ & $29\times$ \\
Outlier columns only (top 5\%) & ${\sim}$2.41M & \textbf{41{,}727} & $\mathbf{5{,}654\times}$ & $\mathbf{1{,}265\times}$ (med.) \\
\bottomrule
\end{tabular}
\end{table}

The $\Delta\text{PPL}$ ratio (outlier vs.\ non-outlier) is $\mathbf{1{,}265\times}$ (median across 20 seeds; IQR: $[795, 1{,}629]$) and $1{,}777\times$ (mean; inflated by extreme seeds including seed~42 with ratio $7{,}128\times$). To isolate per-flip sensitivity from the 19:1 count confound, we run a \textbf{count-matched control}: flip the same number of weights (${\sim}2.4$M) in outlier vs.\ non-outlier columns, with perturbation norms matched to within 4\%. The count-matched per-flip $\Delta$PPL leverage is $\mathbf{77\times}$ (median; IQR: $[48, 97]$).\footnote{The count-matched leverage directly measures per-flip sensitivity without assuming linear $\Delta$PPL scaling. The na\"ive product $1{,}265 \times 19 \approx 24{,}000$ overestimates per-flip leverage by ${\sim}300\times$ because PPL${}= \exp(-\text{avg.\ log-likelihood})$ is highly nonlinear at the large-perturbation scale of the all-column experiment.} Direct measurement of $|\xhat_k|$ at layer~12 reveals outlier $\alpha$: median $0.024$, P90 $= 0.12$, max $= 0.26$ (Appendix~\ref{app:outlier}). Theorem~\ref{thm:outlier-leverage} predicts per-entry $R \in [20, 162]$ for the extreme tail ($\alpha \in [0.1, 0.3]$, ${\sim}16\%$ of outlier dimensions); the aggregate prediction is $n \cdot \mathbb{E}_{\mathrm{emp}}[\alpha^2] \approx 7$--$10$. The noise-floor sweep (Exp~E, Table~\ref{tab:noise-floor}) confirms this: at $p \leq 0.5\%$ (linear-response regime), count-matched NLL leverage stabilizes at ${\sim}10\times$. The larger $37.6\times$ NLL at $p = 5\%$ (Table~\ref{tab:count-matched}) reflects an empirically observed ${\sim}4\times$ multi-flip interaction inflation, not single-entry physics.

\subsection{Revised Narrative}

\begin{itemize}
\item \textbf{Original narrative}: ``Single-layer $2.75\times$, multi-layer compounding to large factors in real models.''
\item \textbf{Corrected narrative}: ``Single-layer $2.75\times$ (delocalized baseline); outlier features amplify sign sensitivity by orders of magnitude via (A2) violation (Section~9.5); multi-layer compounding does not occur.''
\end{itemize}

The constant $\pi/(\pi-2)$ remains correct as the \textbf{delocalized baseline}---it quantifies the intrinsic geometric asymmetry arising from the sign-preserving structure + ReLU + RMSNorm pipeline. The additional amplification in real models is an input-geometry effect (outlier concentration), not a depth effect.

\section{Application: Ternary Quantization Error}

\subsection{Theorem Statement}

\begin{theorem}[Ternary Quantization Angular Error]\label{thm:ternary}
Under assumptions \textup{(A1')--(A4')}, as $m, n \to \infty$ with $m/n = O(1)$:
\begin{equation}
\cos^2\!\angle(W\xhat,\; W_T\xhat) \xrightarrow{P} \frac{2}{\pi}
\end{equation}
Equivalently, the angular error satisfies $\sin^2\!\angle \to 1 - 2/\pi \approx 0.363$. After RMSNorm:
\begin{equation}
\frac{\|\RMSNorm(W_T\xhat) - \RMSNorm(W\xhat)\|^2}{\|\RMSNorm(W\xhat)\|^2} \xrightarrow{P} 2\!\left(1 - \sqrt{\frac{2}{\pi}}\right) \approx 0.404
\end{equation}
\end{theorem}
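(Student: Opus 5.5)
The plan is to condition on $\xhat$ and reduce everything to a weak law of large numbers across the $m$ independent rows of $W$. Since $\xhat$ is independent of $W$ by (A2'), I fix a unit vector $\xhat$ throughout. Then the rows $w_i \sim N(0, I_n/n)$ are i.i.d. Write
\[
y_i = w_i^\top \xhat, \qquad u_i = \sign(w_i)^\top \xhat, \qquad (y_T)_i = s_i u_i .
\]
The pairs $(y_i, u_i, s_i)$ are then i.i.d.\ across $i$, and the target quantity is
\[
\cos\angle(y, y_T) = \frac{\sum_i s_i y_i u_i}{\sqrt{\sum_i y_i^2}\,\sqrt{\sum_i s_i^2 u_i^2}}.
\]

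\textbf{Step 1: exact second moments of one row.} Cross terms vanish by independence and symmetry of the entries, which gives
\[
\E[y_i u_i] = \sum_j \xhat_j^2\, \E|w_{ij}| = \sqrt{2/(\pi n)}, \qquad \E[u_i^2] = \|\xhat\|^2 = 1, \qquad \E[y_i^2] = 1/n.
\]
Fourth moments are also uniformly controlled. $\E[(\sqrt n\, y_i)^4] = 3$, and $\E[u_i^4] \le 3$ by the Khintchine-type expansion for sums of independent symmetric signs. In fact $\E[u_i^4] = 3 - 2\sum_j \xhat_j^4 \le 3$, so no delocalization is needed here.

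\textbf{Step 2: replace the per-row scale by a deterministic constant.} Set $c_n := \E|w_{ij}| = \sqrt{2/(\pi n)}$. Then $\sqrt n\,(s_i - c_n)$ has mean zero and variance $(1-2/\pi)/n$. I will show that replacing $s_i$ by $c_n$ changes the numerator and $\sum_i s_i^2 u_i^2$ by a negligible relative amount. Both sums are normalized by $m c_n^2$ (with the factor $\sqrt n$ from $y_i$ where relevant). The tool is Cauchy--Schwarz:
\[
\E\bigl|(s_i - c_n)\, y_i u_i\bigr| \le \sqrt{\E(s_i-c_n)^2}\,\bigl(\E y_i^4\, \E u_i^4\bigr)^{1/4} = O(n^{-1})\cdot O(n^{-1/2}),
\]
which is $o(c_n/\sqrt n)$. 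The term $\sum_i (s_i^2 - c_n^2)\, u_i^2$ is handled the same way.

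This is the step I expect to be the main obstacle. The scale $s_i$ is statistically dependent on $u_i$ and $y_i$ through the same row, so one cannot simply factor expectations. The Cauchy--Schwarz bound with uniform fourth moments is what avoids this, and it is also what makes the per-row and per-tensor schemes equivalent in the limit.

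\textbf{Step 3: weak LLN across rows.} With $s_i$ replaced by $c_n$, the summands are i.i.d.\ in $i$ with variances bounded uniformly in $n$ after normalization, by Step~1. Chebyshev's inequality therefore gives, as $m \to \infty$,
\[
\tfrac{1}{m}\sum_i \sqrt n\, y_i u_i \to \sqrt{2/\pi}, \qquad \tfrac{1}{m}\sum_i n y_i^2 \to 1, \qquad \tfrac{1}{m}\sum_i u_i^2 \to 1
\]
in probability, with bounds uniform in $\xhat$, which removes the conditioning. Because $c_n > 0$ cancels from the ratio, the continuous mapping theorem yields
\[
\cos\angle(W\xhat, W_T\xhat) \xrightarrow{P} \sqrt{2/\pi}.
\]
Squaring gives $\cos^2 \to 2/\pi$, and the limit is positive. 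This mirrors the proof of Fact~\ref{fact:bussgang}, but at the level of the output vector rather than a single weight row.

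\textbf{Step 4: the RMSNorm statement.} $\RMSNorm$ maps both outputs to the sphere of radius $\sqrt m$. Hence the squared relative distance is exactly the chord formula
\[
\|\hat a - \hat b\|^2 = 2\bigl(1 - \cos\angle(a,b)\bigr),
\]
as in Step~3 of Theorem~\ref{thm:per-row-scaling}. Applying continuous mapping to the signed cosine from Step~3, which converges to $+\sqrt{2/\pi}$ rather than $-\sqrt{2/\pi}$, gives the limit $2(1-\sqrt{2/\pi}) \approx 0.404$.
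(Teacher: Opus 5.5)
Your proof is correct, but it takes a genuinely different route from the paper.

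The paper conditions on $W$ and puts the randomness in the input. It applies L\'evy spherical concentration over $\xhat \sim \Unif(S^{n-1})$ to the three quadratic forms $\xhat^\top W^\top W\xhat$, $\xhat^\top W_T^\top W_T\xhat$ and $\xhat^\top W^\top W_T\xhat$. This needs three further ingredients:
\begin{enumerate}
\item random-matrix operator-norm bounds on $W$ and $W_T$, to control the Lipschitz constants;
\item a separate triangular-array Chebyshev argument for $\frac1m\sum_i s_i^2 \to 2/(\pi n)$, via the identity $\tr(W^\top W_T)=\tr(W_T^\top W_T)=n\sum_i s_i^2$;
\item a separate concentration step to rule out $y\cdot y_T<0$.
\end{enumerate}
Because its $O(n^{-1/2+\epsilon})$ deviations are absolute while the traces $\tr(M_k)/n$ are of order $m/n$, the paper effectively uses $m/n \to \gamma>0$.

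You instead condition on $\xhat$ and put the randomness in the i.i.d.\ rows of $W$. You replace $s_i$ by $c_n=\E|w_{ij}|$ using Cauchy--Schwarz with uniform fourth moments, then apply Chebyshev across rows.

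Your route is more elementary, with no operator-norm or measure-concentration input, and it proves more:
\begin{itemize}
\item Every bound you use is uniform in the unit vector $\xhat$: $\E[y_iu_i]=c_n\|\xhat\|^2$ and $\E[u_i^4]\le 3$ hold for any $\xhat$. So the conclusion holds for an arbitrary $\xhat$ independent of $W$, with no uniformity or delocalization needed. For $\xhat=e_1$ it essentially reduces to Fact~\ref{fact:bussgang} applied along a column.
\item It needs only $m,n\to\infty$, with no aspect-ratio condition.
\item Positivity of the cosine comes for free from the signed law of large numbers.
\end{itemize}

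In exchange, the paper's route gives a statement adapted to fixed weights. For a typical realization of $W$, the angle concentrates over inputs with exponential tails, whereas your argument gives only Chebyshev-rate tails over $W$. Your joint convergence still implies, by Markov's inequality applied to $\Pr(\text{bad}\mid W)$, that for most $W$ most inputs are good.
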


\subsection{Proof}

\textbf{Step 1: Row-wise Bussgang analysis.} Using $\E_{\xhat}[\xhat_j\xhat_k] = \delta_{jk}/n$, the row-level correlation between $y_i = w_i^\top \xhat$ and $y_{T,i} = s_i \sign(w_i)^\top \xhat$ is:
\begin{equation}
\Corr(y_i, y_{T,i} \mid W) = \frac{\|w_i\|_1}{\sqrt{n}\|w_i\|} \xrightarrow{\mathrm{a.s.}} \sqrt{\frac{2}{\pi}}
\end{equation}
by Fact~\ref{fact:bussgang}.

\textbf{Step 2: Vector-level concentration} (sketch; full proof in Appendix~B). Three quadratic forms $\xhat^\top M_k \xhat$ (for $M_k \in \{W^\top W,\, W_T^\top W_T,\, W^\top W_T\}$) must concentrate around their means $\tr(M_k)/n$. By L\'{e}vy's spherical concentration inequality with Lipschitz constant $L = 2\|M_k\|_{\mathrm{op}} = O(1)$ (via the operator norm bound $\|W\|_{\mathrm{op}} \to 1+\sqrt{\gamma}$; see Appendix~B), each deviates by $O(n^{-1/2+\epsilon})$. The row-scale averages $\frac{1}{m}\sum_i s_i^2 \to 2/(\pi n)$ follow from a triangular-array Chebyshev argument (variance $O(1/(mn^2))$).

\textbf{Step 3: Combining.} The three concentrated quadratic forms yield:
\[
\cos^2\!\angle(y, y_T) \xrightarrow{P} \frac{\frac{1}{m}\sum_i s_i^2}{\|W\|_F^2/(mn)} = \frac{2/(\pi n)}{1/n} = \frac{2}{\pi}
\]
(Here $\frac{1}{m}\sum_i s_i^2 \to 2/(\pi n)$ is the per-row average of $s_i^2$, and $\|W\|_F^2/(mn) \to 1/n$ is the per-entry variance; the factor $m$ cancels between numerator and denominator.)
By Step~1, $\E_{\xhat}[y_i \cdot y_{T,i} \mid W] = s_i^2 > 0$ for each row, so $\E_{\xhat}[y \cdot y_T \mid W] = \sum_i s_i^2 \to 2m/(\pi n)$ which is $\Theta(1)$ when $m/n = \gamma$. The L\'{e}vy concentration gives fluctuations $O(1/\sqrt{n})$ around this mean (Appendix~\ref{app:thm4}). Since the mean is $\Theta(1)$ and bounded away from zero while fluctuations are $O(1/\sqrt{n})$, L\'{e}vy concentration gives $\Pr(y \cdot y_T < 0) \leq 2\exp(-\Omega(n))$, so $y \cdot y_T > 0$ with probability $1 - o(1)$. The positive root $\cos\theta \to \sqrt{2/\pi}$ therefore applies. The spherical chord-length formula gives the RMSNorm error: $2(1-\sqrt{2/\pi}) \approx 0.404$. \qed

\subsection{A Mechanism for Ternary Quantization Tolerance}

The explanatory chain connecting Theorem~\ref{thm:ternary} to ternary quantization tolerance:

\begin{enumerate}
\item \textbf{Theorem~\ref{thm:ternary}}: Ternary quantization error $\Delta W = W_T - W$ satisfies $\Delta w_{ij} = \sign(w_{ij})(s_i - |w_{ij}|)$---a \textbf{sign-preserving} perturbation. The quantized weight $w_{ij} + \Delta w_{ij} = s_i\sign(w_{ij})$ preserves the sign of the original weight, so no weight signs are flipped.

\item \textbf{Theorem~\ref{thm:core}}: For the constant-$\delta$ sign-preserving model ($\Delta w_{ij} = \sign(w_{ij})\delta$), the Bussgang-ReLU mechanism (Lemma~\ref{lem:bussgang-relu}) channels $2/\pi$ of the post-ReLU perturbation energy into the radial direction, yielding $(\pi-2)/\pi \approx 36\%$ transverse energy relative to sign-flip perturbations.

\item \textbf{Structural analogy and caveat}: Ternary quantization error has element-wise varying magnitudes $s_i - |w_{ij}|$ (not the constant~$\delta$ of Theorem~\ref{thm:core}'s model), and the sign-preserving property means $\sign(w_{ij} + \Delta w_{ij}) = \sign(w_{ij})$ (the \emph{quantized} weight preserves sign), not that $\Delta w_{ij}$ is proportional to $\sign(w_{ij})$---indeed when $|w_{ij}| > s_i$, the perturbation $\Delta w_{ij}$ opposes $\sign(w_{ij})$. Consequently, the pre-ReLU Bussgang correlation has \textbf{opposite sign}: $\Corr(z_i, \Delta z_i) \to -\sqrt{1-2/\pi} \approx -0.60$ for ternary error, versus $+\sqrt{2/\pi} \approx +0.80$ for Theorem~\ref{thm:core}'s constant-$\delta$ model.

\emph{Derivation of $\Corr(z_i, \Delta z_i) \to -\sqrt{1-2/\pi}$}: The ternary perturbation is $\Delta w_{ij} = \sign(w_{ij})(s_i - |w_{ij}|)$. With $\xhat$ independent of~$W$ and $\E[\xhat_j \xhat_k] = \delta_{jk}/n$:
\begin{align*}
\Cov(z_i, \Delta z_i) &= \tfrac{1}{n}\textstyle\sum_j \E[w_{ij}\cdot\sign(w_{ij})(s_i - |w_{ij}|)] = \tfrac{1}{n}\textstyle\sum_j \E[|w_{ij}|(s_i - |w_{ij}|)].
\end{align*}
By i.i.d.\ symmetry, each term equals $\E[|w_{i1}|\cdot s_i] - \E[w_{i1}^2]$. Since $s_i = \frac{1}{n}\sum_k |w_{ik}|$, we get $\E[|w_{i1}|s_i] = \frac{1}{n}\E[w_{i1}^2] + \frac{n-1}{n}(\E|w_{i1}|)^2 = \frac{1}{n^2} + \frac{2(n-1)}{\pi n^2}$. Subtracting $\E[w_{i1}^2] = 1/n$ gives $\Cov(z_i, \Delta z_i) \to (2/\pi - 1)/n$ as $n \to \infty$. Similarly, $\Var(\Delta z_i) \to (1 - 2/\pi)/n$ (from $\Var(|w_{ij}| - s_i) \to (1-2/\pi)/n$). With $\Var(z_i) = 1/n$:
\[
\Corr(z_i, \Delta z_i) = \frac{(2/\pi - 1)/n}{\sqrt{(1/n)\cdot(1-2/\pi)/n}} = \frac{2/\pi - 1}{\sqrt{1 - 2/\pi}} = -\sqrt{1 - 2/\pi}.
\]

Since the radial fraction depends on $\Corr^2$, this gives a \textbf{pre-ReLU} radial fraction $\mathcal{R}^{\mathrm{ternary}}_{\mathrm{pre}} \to 1-2/\pi \approx 0.363$ versus $\mathcal{R}^{\mathrm{mag}}_{\mathrm{pre}} \to 2/\pi \approx 0.637$. (These sum to~1 by the variance decomposition of $|w_{ij}|$: writing $|w_{ij}| = \E|w_{ij}| + (|w_{ij}| - \E|w_{ij}|)$ decomposes the absolute value into its mean---proportional to the constant-$\delta$ direction $\sign(w_{ij})\cdot\delta$---and a centered residual---proportional to the ternary direction $\sign(w_{ij})(s_i - |w_{ij}|)$. Since $w_{ij} = \sign(w_{ij})\cdot|w_{ij}|$, the identity $\E[w^2] = (\E|w|)^2 + \Var(|w|)$ gives $(\E|w|)^2/\E[w^2] + \Var(|w|)/\E[w^2] = 2/\pi + (1-2/\pi) = 1$. After projection through $\xhat$ (independent of $W$), the squared correlations of $z_i$ with each perturbation direction inherit this partition.) \emph{Important}: these are pre-ReLU radial fractions; the post-ReLU values are conditional on gate-preservation (i.e., $z_i + \Delta z_i$ keeping the same sign as $z_i$). For Theorem~\ref{thm:core}'s constant-$\delta$ model, $|\Delta z_i| = O(\delta)$ while $f_{z_i}(0) = \sqrt{n/(2\pi)}$, giving gate-flip probability $O(\sqrt{n}\delta) = O(\sqrt{p})$ (small for $p \to 0$); the pre- and post-ReLU fractions therefore agree to leading order. For ternary quantization, however, the perturbation $|\Delta z_i| = O(1/\sqrt{n})$---comparable to $|z_i|$ itself---so the gate-flip probability is $O(1)$, and the pre-ReLU radial fraction does \emph{not} directly transfer to the post-ReLU regime. Ternary quantization error is thus \emph{less} radial than the idealized constant-$\delta$ model at the pre-ReLU level, though still more radial than sign-flip perturbations ($\mathcal{R}^{\mathrm{sign}} \approx p$). The qualitative ordering $\mathcal{R}^{\mathrm{sign}}_{\mathrm{pre}} < \mathcal{R}^{\mathrm{ternary}}_{\mathrm{pre}} < \mathcal{R}^{\mathrm{mag(const.\text{-}\delta)}}_{\mathrm{pre}}$ holds at the pre-ReLU level; the post-ReLU radial fraction for ternary error is computed in Section~\ref{sec:post-relu-ternary} and Appendix~\ref{app:ternary-post}.
\end{enumerate}

\subsection{Post-ReLU Ternary Radial Fraction}\label{sec:post-relu-ternary}

The pre-ReLU analysis above leaves open whether the $O(1)$ gate-flip rate (${\sim}20.6\%$) significantly alters the radial fraction. We resolve this by computing the post-ReLU radial fraction directly (full derivation in Appendix~\ref{app:ternary-post}).

\paragraph{Setup.} The pre-activations $(z_i, z_i^Q)$ are asymptotically bivariate Gaussian (Lindeberg--Feller CLT under delocalization) with:
\[
\sigma_1 = 1/\sqrt{n}, \quad \sigma_2 = \sqrt{2/(\pi n)} \approx 0.798\,\sigma_1, \quad \rho = \sqrt{2/\pi}.
\]
Note $\sigma_2 < \sigma_1$: the quantized model produces systematically smaller pre-activations.

\paragraph{Gate-flip rate.} By the Van Vleck arcsine law, $P_{\mathrm{flip}} = \frac{1}{\pi}\arccos(\rho) = \phi/\pi \approx 20.6\%$, where $\phi = \arccos(\sqrt{2/\pi}) \approx 0.647\,\mathrm{rad}$.

\paragraph{Post-ReLU energy.} Using the Price--Cho--Saul formula $\E[\ReLU(X)\ReLU(Y)] = \frac{\sigma_X\sigma_Y}{2\pi}(\sin\theta + (\pi-\theta)\cos\theta)$ for bivariate normals with $\theta = \arccos(\rho)$, define the auxiliary constant:
\[
S = \frac{\cos\phi\,(\sin\phi + (\pi-\phi)\cos\phi)}{2\pi} \approx 0.3293.
\]
The per-neuron error energy is $\E[\delta a_i^2] = \frac{1}{n}(\tfrac{1}{2} + \tfrac{1}{\pi} - 2S) \approx 0.1597/n$.

\paragraph{Radial fraction.} The radial projection is $\langle \delta a, \hat{a}\rangle \to (S - \tfrac{1}{2})\sqrt{2} \approx -0.241$ (negative: ternary quantization contracts the activation vector). The post-ReLU radial fraction is:
\begin{equation}\label{eq:ternary-post-relu}
\mathcal{R}^{\mathrm{ternary}}_{\mathrm{post}} = \frac{2(S - \tfrac{1}{2})^2}{\tfrac{1}{2} + \tfrac{1}{\pi} - 2S} \approx 0.365.
\end{equation}

\paragraph{Near-equality with pre-ReLU.} The pre-ReLU value $\mathcal{R}^{\mathrm{ternary}}_{\mathrm{pre}} = 1 - 2/\pi \approx 0.363$ is \emph{exact} (Bussgang orthogonality: $\E[\delta z \cdot z^Q] = 0$). The post-ReLU value $0.365$ is the asymptotic closed form ($n \to \infty$). These differ by only ${\sim}0.4\%$ relative. Despite the 20.6\% gate-flip rate, ReLU is approximately ``transparent'' to the radial/transverse decomposition for ternary error---a non-trivial structural property absent from the small-$\delta$ perturbative analysis of Theorem~\ref{thm:core}.

\paragraph{Equal-norm comparison.} Matching Frobenius norms between ternary error ($\|\Delta W\|^2 = n(1-2/\pi)$) and sign-flip perturbation requires $p = (1-2/\pi)/4 \approx 0.091$. At this $p$, sign-flips produce $1.43\times$ more transverse energy than ternary error, consistent with $c(0.091) \approx 2.18$ from Theorem~\ref{thm:core}.

\begin{table}[!ht]
\centering
\caption{Post-ReLU radial fractions across perturbation types.}
\label{tab:perturbation-types}
\begin{tabular}{@{}lccc@{}}
\toprule
Perturbation & Pre-ReLU $\mathcal{R}$ & Post-ReLU $\mathcal{R}$ & Gate-flip rate \\
\midrule
Sign-flip (small $p$) & $p$ & $\approx p$ & $O(\sqrt{p})$ \\
Magnitude (const.-$\delta$, small $p$) & $2/\pi \approx 0.637$ & $2/\pi \approx 0.637$ & $O(\sqrt{p})$ \\
\textbf{Ternary quantization} & $\mathbf{1-2/\pi \approx 0.363}$ (exact) & $\mathbf{\approx 0.365}$ & $\mathbf{{\sim}20.6\%}$ \\
\bottomrule
\end{tabular}
\end{table}

\subsection{Scope and Caveats}

In the same angular metric, the pre-RMSNorm angular deviation is $1 - \cos\theta = 1 - \sqrt{2/\pi} \approx 0.202$, and the post-RMSNorm chord-length error is $\|\hat{u}_1 - \hat{u}_2\|^2 = 2(1 - \cos\theta) \approx 0.404$---simply twice the angular deviation (the standard relationship between angle and chord on a unit sphere). \textbf{Single-layer RMSNorm does not differentially suppress quantization error}: the radial fraction of ternary quantization error is $\mathcal{R} \approx 0.365$ both pre- and post-ReLU (Section~\ref{sec:post-relu-ternary}), i.e., only ${\sim}36.5\%$ radial and ${\sim}63.5\%$ transverse. Since the majority of the perturbation energy is already transverse, RMSNorm's radial filtering absorbs only a minority ($36.5\%$) of the error, while $63.5\%$ passes through as functionally relevant output distortion.

The true suppression mechanism requires the \textbf{multi-component pipeline}: the sign-preserving weight structure provides Bussgang correlation between perturbation and activation; ReLU propagates this into the hidden space, creating the differential treatment of sign vs.\ magnitude perturbations; and RMSNorm then exposes this differential. The quantization tolerance is a property of the \emph{architecture} (sign structure + ReLU + RMSNorm together), not of RMSNorm alone.

\section{Experiments}

\subsection{Setup}

All synthetic experiments use i.i.d.\ $N(0, 1/n)$ weight matrices with 20 random seeds (unless noted), 10,000 spherical samples per seed. Real-model experiments use TinyLlama-1.1B (1.1B parameters, SiLU activation) and the Qwen2.5 family \citep{qwen2024} (0.5B--3B). Perplexity is evaluated on WikiText-2. Code, experimental scripts, and raw data are available at \url{https://github.com/donglei628/sign-magnitude-pre-norm}.

\subsection{Fact~1 Verification: Cross-Model \texorpdfstring{$\cos^2$}{cos2}}

Table~\ref{tab:cross-model} reports row-wise $\cos^2(w, \sign(w))$ for randomly sampled weight matrices from five trained models spanning 0.5B--7.7B parameters. All fall within 5\% of the Gaussian limit $2/\pi$, with deviations consistent with heavy-tailed weight distributions.

\begin{table}[!h]
\centering
\begin{tabular}{@{}lcccc@{}}
\toprule
Model & Parameters & Matrices sampled & $\cos^2$ mean $\pm$ std & Dev.\ from $2/\pi$ \\
\midrule
Qwen2.5-0.5B & 494M & 20 & $0.618 \pm 0.034$ & $-2.9\%$ \\
TinyLlama-1.1B & 1.1B & 20 & $0.627 \pm 0.029$ & $-1.6\%$ \\
Qwen2.5-1.5B & 1.5B & 20 & $0.622 \pm 0.029$ & $-2.2\%$ \\
Qwen2.5-3B & 3.1B & 20 & $0.606 \pm 0.082$ & $-4.8\%$ \\
Qwen-7B \citep{bai2023qwen} (1st-gen, 2023) & 7.7B & 20 & $0.633 \pm 0.014$ & $-0.6\%$ \\
\bottomrule
\end{tabular}
\caption{All models fall within 5\% of $2/\pi = 0.637$, across two model families (Qwen2.5 series and original Qwen-7B) and TinyLlama. The consistent negative bias is compatible with heavy-tailed weight distributions (for which $(\E|w|)^2/\E[w^2] < 2/\pi$). The elevated standard deviation of Qwen2.5-3B ($\pm 0.082$) reflects a wider spread of per-matrix $\cos^2$ values across its 20 sampled layers, likely due to a subset of layers (e.g., early attention projections) with heavier-tailed or more structured weight distributions.}\label{tab:cross-model}
\end{table}

\subsection{Theorem~3 Verification: Architecture Comparison}

We compare two architectures: V0 is a single-layer ablation $\yhat_0 = \RMSNorm(\ReLU(W_1 \xhat))$ that omits the second projection $W_2$; V1 is the full two-layer model $\yhat_1 = \RMSNorm(W_2 \cdot \ReLU(W_1 \xhat))$ as described in Theorem~\ref{thm:core}. The V0 deviation ($+37.8\%$) confirms that the second projection $W_2$ is essential for the theoretical asymmetry to manifest correctly; V1 matches theory within 1--6\%.

\begin{table}[!ht]
\centering
\begin{tabular}{@{}lcccc@{}}
\toprule
Architecture & $p$ & Measured $C$ & Theory $c(p)$ & Deviation \\
\midrule
V0 (incorrect: no $W_2$) & 0.01 & 3.595 & 2.610 & $+37.8\%$ \\
\textbf{V1 (correct: two-layer)} & \textbf{0.01} & \textbf{2.639} & \textbf{2.610} & $\mathbf{+1.1\%}$ \\
\textbf{V1} & \textbf{0.02} & \textbf{2.583} & \textbf{2.538} & $\mathbf{+1.8\%}$ \\
\textbf{V1} & \textbf{0.05} & \textbf{2.465} & \textbf{2.378} & $\mathbf{+3.7\%}$ \\
\textbf{V1} & \textbf{0.10} & \textbf{2.299} & \textbf{2.178} & $\mathbf{+5.6\%}$ \\
\bottomrule
\end{tabular}
\caption{The V0 deviation ($+37.8\%$) is primarily consistent with the missing $W_2$ layer (alternative contributions may include the different input distribution). The correct V1 architecture matches theory within 1--6\%.}
\end{table}

\subsection{Multi-Layer Experiments}

\textbf{V2 (no residual, $p{=}0.01$)}: Each layer applies $\xhat \mapsto \RMSNorm(\ReLU(W_l\,\xhat))$ with a fresh i.i.d.\ Gaussian $W_l \in \R^{m \times n}$ (no second projection $W_2$; hence V2 at $L{=}1$ matches V0, not V1). $C_L$ declines from 3.587 ($L=1$) to 3.066 ($L=6$), a $-14.5\%$ drop.

\textbf{V3 (with residual, $p{=}0.01$)}: $C_L$ plateaus from 1.355 ($L=1$) to 1.318 ($L=6$), a $-2.7\%$ drop.

\textbf{Exp~B (trained weights, ReLU)}: We extract weight matrices from TinyLlama-1.1B (which natively uses SwiGLU) and measure the transverse energy ratio by passing perturbations through these matrices with a ReLU activation (replacing the native gate mechanism) to test the Theorem~3 predictions directly. $C_L$ declines from 3.494 to 2.966 ($-15.1\%$). We also measure with SiLU activation (closer to the native SwiGLU but without the gate), finding qualitatively identical behavior (decline from 3.576 to 3.292, $-7.9\%$). The single-layer $C_1 = 3.494$ exceeds the i.i.d.\ theoretical value~2.75 by 27\%. This gap has multiple potential sources: (a)~non-Gaussian weight distributions (heavy tails, outlier features); (b)~the ReLU proxy replacing the native SwiGLU gate; (c)~row-wise weight correlations absent in the i.i.d.\ model; (d)~non-spherical input distributions in trained models. The SiLU comparison ($C_1 = 3.576$, only $+2.4\%$ from ReLU's 3.494) indicates that (b) contributes $<5\%$ of the gap, leaving the remaining ${\sim}22\%$ attributable to (a), (c), or (d). Full ablation is deferred to future work (O1).

All three refute exponential growth. The hypothesis predicted $C_6 \approx 437$; observed values are $1.3$--$3.1$.

\subsection{Outlier Experiments (Exp~D)}\label{sec:outlier-exp}

TinyLlama-1.1B, 5\% sign-flip rate, 20 seeds:

\begin{table}[!ht]
\centering
\begin{tabular}{@{}lrccc@{}}
\toprule
Strategy & Weights flipped & PPL median & $\Delta$PPL ratio & NLL ratio \\
\midrule
Baseline (no flip) & 0 & 7.38 & --- & --- \\
Non-outlier columns & 46M & 41.1 & $1\times$ (ref.) & $1\times$ (ref.) \\
Random & 48.7M & 979 & $29\times$ & --- \\
Outlier columns (top 5\%) & 2.41M & 41,727 & $\mathbf{1{,}265\times}$ & $\mathbf{5.0\times}$ \\
\midrule
\multicolumn{5}{@{}l}{\textit{Count-matched control (2.41M flips, norms matched to 4\%):}} \\
\quad Outlier columns & 2.41M & 26.1 & $\mathbf{77\times}$ & $\mathbf{37.6\times}$ \\
\quad Non-outlier columns & 2.41M & 7.62 & $1\times$ (ref.) & $1\times$ (ref.) \\
\bottomrule
\end{tabular}
\caption{Count-matched per-flip leverage (20 seeds). The NLL ratio ($= \ln\mathrm{PPL}$ ratio) removes the exponential nonlinearity of PPL: the all-column NLL ratio of $5.0\times$ falls within $R_{\mathrm{col}} \leq 19$, whereas the PPL ratio $1{,}265\times$ overstates the gap by $251\times$. All ratios are computed per-seed then medianed (median-of-ratios $\neq$ ratio-of-medians). Measured outlier $\alpha$: median $0.024$, P90 $= 0.12$, max $= 0.26$ (layer~12, top~5\%).}\label{tab:count-matched}
\end{table}

\paragraph{Exp~E: Noise-floor sweep.}
To test whether count-matched leverage is stable across signal regimes, we sweep the global flip rate $p \in \{5\%, 0.5\%, 0.1\%, 0.05\%, 0.01\%\}$ (fraction of total $W_1$ entries flipped, with equal absolute counts in outlier vs.\ non-outlier columns) while holding the count-matched protocol fixed (10 seeds each).

\begin{table}[!ht]
\centering
\begin{tabular}{@{}rrccc@{}}
\toprule
$p$ & Flips & NLL lev.\ (med.) & PPL lev.\ (med.) & PPL/NLL \\
\midrule
$5.00\%$ & 2{,}414K & $41.3\times$ & $89.8\times$ & $2.2\times$ \\
$0.50\%$ & 241K & $10.4\times$ & $10.5\times$ & $1.0\times$ \\
$0.10\%$ & 48K & $8.7\times$ & $8.7\times$ & $1.0\times$ \\
$0.05\%$ & 24K & $9.5\times$ & $9.5\times$ & $1.0\times$ \\
$0.01\%$ & 5K & $3.5\times$ & $2.3\times$ & noise \\
\bottomrule
\end{tabular}
\caption{Noise-floor sweep: count-matched NLL leverage vs.\ flip rate (10 seeds per $p$; independent seed set from Table~\ref{tab:count-matched}'s 20 seeds, explaining the $41.3\times$ vs.\ $37.6\times$ difference at $p = 5\%$). NLL leverage stabilizes at ${\sim}10\times$ for $p \leq 0.5\%$, consistent with the per-entry $R(\alpha)/R(\beta)$ ratio. The elevated $41.3\times$ at $p = 5\%$ reflects multi-flip interactions; at $p = 0.01\%$ the signal approaches the PPL measurement noise floor.}
\label{tab:noise-floor}
\end{table}

Two observations: (i)~NLL leverage stabilizes at ${\sim}10\times$ for $p \leq 0.5\%$, consistent with the per-entry ratio $R(\alpha)/R(\beta)$ (Theorem~\ref{thm:outlier-leverage}); the inflated $41.3\times$ at $p = 5\%$ reflects multi-flip interactions in outlier columns, not single-entry physics. (ii)~PPL and NLL leverage diverge only at $p = 5\%$ (PPL/NLL $= 2.2\times$), confirming that the exponential nonlinearity of the perplexity metric inflates leverage at high flip rates. We identify $p \leq 0.5\%$ as the empirical linear-response threshold---the regime where NLL leverage converges to a stable value. This threshold is determined from the data (not from a prior theoretical criterion); a rigorous characterization of the transition from single-flip to multi-flip behavior remains open.

\paragraph{Exp~F: Activation-level perturbation energy.}
Single-entry sign-flips produce $\Delta\mathrm{NLL} \sim 10^{-5}$, below the PPL measurement precision (${\sim}\pm 10^{-4}$). To bypass this noise floor, we directly compute the layer-output perturbation energy $\|\Delta y\|^2 = 4 w_{ij}^2 x_j^2$ for every input dimension~$j$ of the target layer (layer~12, \texttt{q\_proj}, $2048 \times 2048$). Since $\E[\|\Delta y\|^2] \propto \|W_{:,j}\|^2 \cdot \E[x_j^2]$ and $\E[x_j^2] = \E[\|x\|^2] \cdot \alpha_j^2$, the perturbation energy should scale as $\alpha_j^2$ when weight column norms are approximately uniform.

Results (10 sequences, 20{,}480 tokens): Spearman$(\alpha^2, \text{energy}) = 0.955$ ($p \approx 0$); energy ratio (top-5\% vs.\ rest) $= 33\times$; weight column norm ratio $= 1.04\times$ (approximately uniform). The near-unity weight norm ratio confirms that the $33\times$ energy gap is driven almost entirely by $\alpha^2$ variation, consistent with the $R \propto n\alpha^2$ prediction.

\paragraph{Exp~G: Column-flip $\Delta$NLL verification.}
To measure $\alpha^2$ scaling directly in $\Delta$NLL, we flip entire columns: $W_{:,j} \to -W_{:,j}$ for a single column~$j$, amplifying the signal by ${\sim}m$ (number of rows). We select 20 columns via log-spaced sampling across the full $\alpha$ distribution and measure $\Delta\mathrm{NLL} = \ln(\mathrm{PPL}_{\text{flip}}) - \ln(\mathrm{PPL}_{\text{base}})$.

Results (17/20 columns with positive $\Delta$NLL; the remaining 3 yielded $|\Delta\mathrm{NLL}| < 10^{-4}$, below per-token PPL precision): Spearman$(\alpha^2, \Delta\text{NLL}) = 0.927$ ($p = 9.3 \times 10^{-8}$). Log-log regression yields $\Delta\mathrm{NLL} \propto (\alpha^2)^{0.78}$, i.e., effective scaling $\sim\!\alpha^{1.57}$ (theory predicts $\alpha^2$). The sub-quadratic exponent is consistent with downstream multi-layer processing compressing large perturbations---the same attenuation that reduces NLL leverage from $41\times$ at $p = 5\%$ to ${\sim}10\times$ at $p \leq 0.5\%$ in Exp~E. The strong rank correlation ($\rho = 0.93$) confirms that $\alpha^2$ correctly predicts the \emph{ordering} of per-column damage.

\subsection{Theorem~4 Verification (Synthetic) and Step~1 Verification (Real Models)}

\paragraph{Synthetic} ($N(0,1/n)$ weights, uniform spherical input):

\begin{table}[!ht]
\centering
\caption{Theorem~4 synthetic verification: $\cos^2(y, y_T)$ converges to $2/\pi$ as dimension grows.}
\label{tab:thm4-synthetic}
\begin{tabular}{@{}lccc@{}}
\toprule
Dimensions $(m \times n)$ & $\cos^2$ measured & Theory $2/\pi$ & Relative error \\
\midrule
$256 \times 512$ & 0.6372 & 0.6366 & $+0.09\%$ \\
$512 \times 1024$ & 0.6369 & 0.6366 & $+0.04\%$ \\
$1024 \times 2048$ & 0.6369 & 0.6366 & $+0.04\%$ \\
$2048 \times 4096$ & 0.6367 & 0.6366 & $+0.02\%$ \\
\bottomrule
\end{tabular}
\end{table}

\paragraph{Real model---row-level verification (Fact~1 / Theorem~4 Step~1)} (TinyLlama-1.1B, 20 sampled weight matrices; row-wise $\cos^2(w_i, \sign(w_i))$ computed directly on weights, independent of any input). Note: this tests the row-level Bussgang prediction $\cos^2 \to 2/\pi$ (Fact~\ref{fact:bussgang}, equivalently Theorem~4 Step~1), not the full vector-level concentration (Theorem~4 Steps~2--3), which requires uniform spherical input:
\begin{itemize}
\item Median $\cos^2$: 0.624 (deviation $-2.0\%$ from $2/\pi$); Mean: 0.608; Std: 0.052
\item 16/20 matrices have deviation $< 3.5\%$ from $2/\pi$
\item Outlier matrices (4/20, predominantly early-layer attention projections) show larger deviations (up to $-36\%$), pulling the mean below the median; attributable to heavy-tailed weight distributions violating~(A1')
\end{itemize}

\subsection{Proposition~1 Verification}

Synthetic data ($n = m = 256$, $p = 0.01$, 20 seeds):

\begin{table}[!ht]
\centering
\caption{Verifies the $O(1/n)$ magnitude radial fraction (Proposition~\ref{prop:magnitude-uninformative}) in the single linear layer setup (no ReLU, no $W_2$). The energy ratio $= 1.0$ is by construction (Assumption~(A4) matches Frobenius norms), serving as a sanity check rather than a non-trivial prediction. Radial fractions here differ from the two-layer ReLU model of Theorem~3 (see Table~\ref{tab:radial-sign} in Appendix~D). The measured $\mathcal{R}^{\mathrm{sign}}$ is the sample average of per-seed radial fractions (Monte Carlo estimate), which differs from the theoretical ``ratio of moments'' at finite $n$; the $-11.7\%$ deviation reflects both this distinction and residual variance from 20 seeds.}
\label{tab:prop1}
\begin{tabular}{@{}lccc@{}}
\toprule
Quantity & Measured & Theory & Deviation \\
\midrule
$\mathcal{R}^{\mathrm{mag}}$ (radial fraction) & 0.00391 & $1/n = 0.00391$ & $< 0.1\%$ \\
$\mathcal{R}^{\mathrm{sign}}$ (radial fraction) & 0.00883 & $\approx p = 0.01$ & $-11.7\%$ \\
sign/mag energy ratio & 1.0000 & 1.0 (by construction) & $< 0.01\%$ \\
\bottomrule
\end{tabular}
\end{table}

\section{Discussion}

\subsection{Three Roles of \texorpdfstring{$2/\pi$}{2/pi}}

The Bussgang constant $2/\pi$ appears in three distinct contexts (Table~\ref{tab:three-roles}).

\begin{table}[H]
\centering
\caption{Appearances of the Bussgang constant $2/\pi$ and its complement $1-2/\pi$ in this work.}
\label{tab:three-roles}
\begin{tabular}{@{}llll@{}}
\toprule
Context & Quantity & Value & Object \\
\midrule
Fact~\ref{fact:bussgang} & $\cos^2(w, \sign(w))$ & $2/\pi$ & Weight vector itself \\
Theorem~\ref{thm:core} (B4) & $\mathcal{R}(\Delta y^{\mathrm{mag}})$ & $2/\pi$ & Mag.\ perturbation's radial fraction \\
Theorem~\ref{thm:ternary} & $\cos^2(y, y_T)$ & $2/\pi$ & Quantized vs.\ original alignment \\
\midrule
\S\ref{sec:post-relu-ternary} & $\mathcal{R}^{\mathrm{ternary}}_{\mathrm{pre}}$ & $1-2/\pi$ & Ternary error's pre-ReLU radial fraction (exact) \\
\S\ref{sec:post-relu-ternary} & $\mathcal{R}^{\mathrm{ternary}}_{\mathrm{post}}$ & $\approx 0.365$ & Ternary error's post-ReLU radial fraction \\
\bottomrule
\end{tabular}
\end{table}

All three share the same mathematical source (Bussgang gain of sign quantization on Gaussian vectors) but act on different objects.

\subsection{Rigor Levels}

Table~\ref{tab:rigor} summarizes the rigor level of each main result.

\begin{table}[H]
\centering
\caption{Rigor levels for each main result. ``Full'' denotes a complete, self-contained proof; ``Asymptotic'' denotes a proof that is rigorous in the stated limit ($n \to \infty$) with explicit remainder bounds, relying on Gaussian assumptions and CLT-based approximations.}
\label{tab:rigor}
\small
\begin{tabular}{@{}llp{0.22\textwidth}p{0.18\textwidth}p{0.20\textwidth}@{}}
\toprule
Result & Rigor & Key technique & Assumptions & Depends on \\
\midrule
Fact~1 & Full & SLLN + CMT & iid sym., $0{<}\E[w^2]{<}\infty$ & --- \\
Prop.~2 & Full & Vitali + Chernoff + Dir. & iid sym., $\E[w^8]{<}\infty$ & --- \\
Thm.~1 & Full (det.) & Pure algebra & $y \neq 0$ & --- \\
Thm.~2 & Full (det.) & Calculus + lin.\ alg. & $y \neq 0$ & --- \\
Prop.~1 & Full & Spherical expectation & iid $N(0,1/n)$ & --- \\
Thm.~3 & Asymptotic & Price's theorem + SLLN + Lemma~\ref{lem:bussgang-relu} & (A1)--(A6); Gaussian CLT & Fact~1, Thm.~2, Price's theorem \\
Thm.~4 & Asymptotic & L\'{e}vy concentration + triangular Chebyshev & (A1')--(A4'); Gaussian CLT & Fact~1, L\'{e}vy, sub-Gaussian concentration \\
\S\ref{sec:post-relu-ternary} & Asymptotic & Price--Cho--Saul + SLLN & (A1)--(A2); Gaussian CLT & Thm.~4, Van Vleck \\
App.~\ref{app:outlier} & Full & Plackett integral + ReLU energy & (A1); single-entry flip & --- \\
App.~\ref{app:two-pop} $R_{\mathrm{col}}$ & Full & $f(\rho)$ via Price--Cho--Saul & (A1); two-population $\hat{x}$ & App.~\ref{app:outlier} \\
\bottomrule
\end{tabular}
\end{table}

\subsection{Limitations}

\begin{enumerate}
\item \textbf{Single-layer theory}: The $\pi/(\pi-2)$ result applies to a two-layer toy model, not the full transformer stack.
\item \textbf{Gaussian assumption}: Real trained weights are not i.i.d.\ Gaussian; the cross-model $\cos^2$ deviations from $2/\pi$ reported in Section~9.2 reflect this.
\item \textbf{ReLU only}: Theorem~\ref{thm:core} is proved for ReLU, which provides the clean half-space partition enabling closed-form integration. Extension to gated activations (SwiGLU/GeGLU, used in modern LLMs) requires a bivariate-Gaussian-conditional-on-gate analysis not attempted here (though Exp~B shows qualitatively similar $C_L$ behavior with SiLU).
\item \textbf{Outlier theory gap}: The single-layer outlier leverage ratio ($R \approx n\alpha^2$, Theorem~\ref{thm:outlier-leverage}) predicts per-entry $R \in [20, 162]$ for extreme outliers ($\alpha \in [0.1, 0.3]$, ${\sim}16\%$ of outlier dimensions); the aggregate $n \cdot \mathbb{E}_{\mathrm{emp}}[\alpha^2] \approx 7$--$10$ accounts for the full $\alpha$ distribution. At linear response ($p \leq 0.5\%$; Exp~E), count-matched NLL leverage stabilizes at ${\sim}10\times$, matching this prediction; the $37.6\times$ at $p = 5\%$ reflects an empirically observed ${\sim}4\times$ multi-flip interaction inflation (no closed-form derivation is available for the interaction terms). The all-column $\Delta$PPL ratio ($1{,}265\times$) appeared to exceed the energy prediction ($R_{\mathrm{col}} \leq 19$) by ${\sim}67\times$, but log-loss analysis shows this gap is an artifact of PPL nonlinearity: the all-column NLL ratio is only $5.0\times$ (Table~\ref{tab:count-matched}), well within $R_{\mathrm{col}} \leq 19$. Multi-layer propagation remains open but is no longer needed to explain the observation. The $\alpha^2$ scaling is additionally confirmed by noise-floor sweep (Exp~E, Table~\ref{tab:noise-floor}), activation-level perturbation energy (Exp~F, Spearman $\rho = 0.955$), and column-flip $\Delta$NLL measurements (Exp~G, Spearman $\rho = 0.927$).
\item \textbf{Multi-layer propagation}: How $\mathcal{R}$ evolves across depth is an open problem.
\item \textbf{Position dependence}: Theorem~\ref{thm:core} applies to weights whose output passes through a downstream ReLU (the $W_1$ position). Quantization of weights \emph{after} a nonlinearity (e.g., FFN down-projection in a real transformer) acts on non-Gaussian inputs, where the Bussgang relation (Lemma~\ref{lem:bussgang-relu}) no longer holds exactly. The mechanism may still apply approximately but is not covered by our analysis.
\end{enumerate}

\section{Conclusion}

\subsection{Summary}

We have provided a geometric explanation for why pre-norm transformers tolerate ternary weight quantization:

\begin{enumerate}
\item Sign patterns capture $2/\pi \approx 63.7\%$ of weight directional energy (Fact~\ref{fact:bussgang}), while magnitude with random signs is uninformative (Proposition~\ref{prop:magnitude-uninformative}).

\item RMSNorm acts as a radial filter whose Fr\'{e}chet derivative is a transverse projector (Theorem~\ref{thm:transverse-proj}), absorbing perturbations aligned with the output direction.

\item In a two-layer ReLU + RMSNorm model, sign-flip perturbations produce $\pi/(\pi-2) \approx 2.75$ times more transverse energy than magnitude perturbations (Theorem~\ref{thm:core}). The mechanism: the sign-preserving weight structure provides Bussgang alignment; ReLU propagates it into an asymmetry between sign-flip and magnitude perturbations' radial fractions ($\mathcal{R}^{\mathrm{sign}} \approx p$ vs.\ $\mathcal{R}^{\mathrm{mag}} \approx 2/\pi$); RMSNorm exposes this differential.

\item Ternary quantization error is a sign-preserving perturbation with angular alignment $\cos^2 \to 2/\pi$ (Theorem~\ref{thm:ternary}). Its pre-ReLU radial fraction ($1-2/\pi \approx 0.363$, exact via Bussgang orthogonality) places it between sign-flips ($\mathcal{R} \approx p$) and the idealized constant-$\delta$ model ($\mathcal{R} = 2/\pi$). Despite a 20.6\% gate-flip rate, the post-ReLU radial fraction ($\mathcal{R}^{\mathrm{ternary}}_{\mathrm{post}} \approx 0.365$; Section~\ref{sec:post-relu-ternary}, Appendix~\ref{app:ternary-post}) agrees with the pre-ReLU value to within $0.4\%$: ReLU is approximately transparent to the radial/transverse decomposition for ternary error.

\item The gap from the single-layer $2.75\times$ baseline to the substantial sign sensitivity observed in real models comes from outlier features violating delocalization (Section~7.2), not from multi-layer compounding (which is not experimentally supported). The per-entry leverage $R \approx n\alpha^2$ (Appendix~\ref{app:outlier}) and column-group leverage $R_{\mathrm{col}} \leq 19$ for $\gamma \leq 0.5$ (Appendix~\ref{app:two-pop}) are rigorously derived. At linear response ($p \leq 0.5\%$; Exp~E), count-matched NLL leverage stabilizes at ${\sim}10\times$, matching $n \cdot \mathbb{E}_{\mathrm{emp}}[\alpha^2] \approx 7$--$10$; Exps~F--G further confirm the $\alpha^2$ scaling (Spearman $\rho = 0.955$ and $0.927$, respectively).
\end{enumerate}

\subsection{Open Problems}\label{sec:open}

\begin{table}[!ht]
\centering
\caption{Open problems arising from this work.}
\label{tab:open-problems}
\begin{tabular}{@{}cp{0.32\textwidth}p{0.50\textwidth}@{}}
\toprule
\# & Problem & Status \\
\midrule
O1a & Single-layer outlier leverage ($R \approx n\alpha^2$, column-group $R_{\mathrm{col}}$) & Resolved (App.~\ref{app:outlier},~\ref{app:two-pop}) \\
O1b & Multi-layer compounding of outlier leverage & Open \\
O1c & Quantitative energy-to-PPL mapping & Partially resolved: at linear response ($p \leq 0.5\%$) NLL leverage ${\sim}10 \approx n\mathbb{E}[\alpha^2]$; multi-flip inflation at $p = 5\%$ and sub-quadratic $\alpha^{1.57}$ exponent (Exps~E, G) remain open \\
O2 & Single-layer closed form for SiLU/GELU activations & Open \\
O3 & Propagation law for $\mathcal{R}$ across depth in residual streams & Open \\
O4 & Cross-layer accumulation/decay of quantization $\cos^2$ & Open \\
O5 & Bussgang gain correction for non-Gaussian (trained) weights & Partial (\S9.2) \\
O6 & Error compensation in low-bit architectures (gain scaling) & Open \\
O7a & Post-ReLU ternary radial fraction & Resolved: $\mathcal{R} \approx 0.365$ (\S\ref{sec:post-relu-ternary}, App.~\ref{app:ternary-post}) \\
O7b & Gate-flip decomposition of post-ReLU ternary error & Open (conjectural; App.~\ref{app:ternary-post}) \\
\bottomrule
\end{tabular}
\end{table}

\FloatBarrier
\bibliographystyle{plainnat}
\bibliography{refs}

@book{vershynin2018high,
  author    = {Vershynin, Roman},
  title     = {High-Dimensional Probability: An Introduction with Applications in Data Science},
  publisher = {Cambridge University Press},
  year      = {2018}
}

@techreport{bussgang1952,
  author      = {Bussgang, J. J.},
  title       = {Crosscorrelation Functions of Amplitude-Distorted {G}aussian Signals},
  institution = {MIT Research Laboratory of Electronics},
  year        = {1952},
  number      = {216}
}

@book{ledoux2001concentration,
  author    = {Ledoux, Michel},
  title     = {The Concentration of Measure Phenomenon},
  series    = {Mathematical Surveys and Monographs},
  volume    = {89},
  publisher = {American Mathematical Society},
  year      = {2001}
}

@article{price1958useful,
  author  = {Price, Robert},
  title   = {A Useful Theorem for Nonlinear Devices Having {G}aussian Inputs},
  journal = {{IRE} Transactions on Information Theory},
  volume  = {4},
  number  = {2},
  pages   = {69--72},
  year    = {1958}
}

@misc{ba2016layer,
  author       = {Ba, Jimmy Lei and Kiros, Jamie Ryan and Hinton, Geoffrey E.},
  title        = {Layer Normalization},
  howpublished = {arXiv:1607.06450},
  year         = {2016}
}

@inproceedings{xiong2020layer,
  author    = {Xiong, Ruibin and Yang, Yunchang and He, Di and Zheng, Kai and Zheng, Shuxin and Xing, Chen and Zhang, Huishuai and Lan, Yanyan and Wang, Liwei and Liu, Tie-Yan},
  title     = {On Layer Normalization in the Transformer Architecture},
  booktitle = {International Conference on Machine Learning (ICML)},
  pages     = {10524--10533},
  year      = {2020},
  note      = {arXiv:2002.04745}
}

@inproceedings{zhang2019rmsnorm,
  author    = {Zhang, Biao and Sennrich, Rico},
  title     = {Root Mean Square Layer Normalization},
  booktitle = {Advances in Neural Information Processing Systems (NeurIPS)},
  volume    = {32},
  pages     = {12360--12371},
  year      = {2019},
  note      = {arXiv:1910.07467}
}

@inproceedings{courbariaux2015binaryconnect,
  author    = {Courbariaux, Matthieu and Bengio, Yoshua and David, Jean-Pierre},
  title     = {{BinaryConnect}: Training Deep Neural Networks with Binary Weights during Propagations},
  booktitle = {Advances in Neural Information Processing Systems (NeurIPS)},
  pages     = {3123--3131},
  year      = {2015},
  note      = {arXiv:1511.00363}
}

@inproceedings{dettmers2022,
  author    = {Dettmers, Tim and Lewis, Mike and Belkada, Younes and Zettlemoyer, Luke},
  title     = {{LLM.int8()}: 8-Bit Matrix Multiplication for Transformers at Scale},
  booktitle = {NeurIPS},
  year      = {2022}
}

@inproceedings{frantar2023gptq,
  author    = {Frantar, Elias and Ashkboos, Saleh and Hoefler, Torsten and Alistarh, Dan},
  title     = {{GPTQ}: Accurate Post-Training Quantization for Generative Pre-trained Transformers},
  booktitle = {International Conference on Learning Representations (ICLR)},
  year      = {2023},
  note      = {arXiv:2210.17323}
}

@article{hubara2018quantized,
  author    = {Hubara, Itay and Courbariaux, Matthieu and Soudry, Daniel and El-Yaniv, Ran and Bengio, Yoshua},
  title     = {Quantized Neural Networks: Training Neural Networks with Low Precision Weights and Activations},
  journal   = {Journal of Machine Learning Research},
  volume    = {18},
  number    = {187},
  pages     = {1--30},
  year      = {2018},
  note      = {arXiv:1609.07061}
}

@inproceedings{lin2024awq,
  author    = {Lin, Ji and Tang, Jiaming and Tang, Haotian and Yang, Shang and Chen, Wei-Ming and Wang, Wei-Chen and Xiao, Guangxuan and Dang, Xingyu and Gan, Chuang and Han, Song},
  title     = {{AWQ}: Activation-aware Weight Quantization for On-Device {LLM} Compression and Acceleration},
  booktitle = {Proceedings of Machine Learning and Systems (MLSys)},
  year      = {2024},
  note      = {arXiv:2306.00978}
}

@inproceedings{rastegari2016,
  author    = {Rastegari, Mohammad and Ordonez, Vicente and Redmon, Joseph and Farhadi, Ali},
  title     = {{XNOR-Net}: {ImageNet} Classification Using Binary Convolutional Neural Networks},
  booktitle = {ECCV},
  year      = {2016}
}

@inproceedings{xiao2023smoothquant,
  author    = {Xiao, Guangxuan and Lin, Ji and Seznec, Mickael and Wu, Hao and Demouth, Julien and Han, Song},
  title     = {{SmoothQuant}: Accurate and Efficient Post-Training Quantization for Large Language Models},
  booktitle = {Proceedings of the 40th International Conference on Machine Learning (ICML)},
  series    = {PMLR},
  volume    = {202},
  pages     = {38087--38099},
  year      = {2023},
  note      = {arXiv:2211.10438}
}

@misc{wang2023bitnet,
  author       = {Wang, Hongyu and Ma, Shuming and Dong, Li and Huang, Shaohan and Wang, Huaijie and Ma, Lingxiao and Yang, Fan and Wang, Ruiping and Wu, Yi and Wei, Furu},
  title        = {{BitNet}: Scaling 1-bit {T}ransformers for Large Language Models},
  howpublished = {arXiv:2310.11453},
  year         = {2023}
}

@misc{ma2024bitnet,
  author       = {Ma, Shuming and Wang, Hongyu and Ma, Lingxiao and Wang, Lei and
                  Wang, Wenhui and Huang, Shaohan and Dong, Li and Wang, Ruiping and
                  Xue, Jilong and Wei, Furu},
  title        = {The Era of 1-Bit {LLMs}: All Large Language Models are in 1.58 Bits},
  howpublished = {arXiv:2402.17764},
  year         = {2024}
}

@inproceedings{frankle2019lottery,
  author    = {Frankle, Jonathan and Carbin, Michael},
  title     = {The Lottery Ticket Hypothesis: Finding Sparse, Trainable Neural Networks},
  booktitle = {International Conference on Learning Representations (ICLR)},
  year      = {2019},
  note      = {arXiv:1803.03635}
}

@inproceedings{oh2025sign,
  author    = {Oh, Junghun and Baik, Sungyong and Lee, Kyoung Mu},
  title     = {Find A Winning Sign: Sign Is All We Need to Win the Lottery},
  booktitle = {ICLR},
  year      = {2025},
  note      = {arXiv:2504.05357}
}

@inproceedings{zhou2019signs,
  author    = {Zhou, Hattie and Lan, Janice and Liu, Rosanne and Yosinski, Jason},
  title     = {Deconstructing Lottery Tickets: Zeros, Signs, and the Supermask},
  booktitle = {NeurIPS},
  year      = {2019}
}

@misc{vardhan2026,
  author       = {Vardhan, Mangadoddi Srikar and Teja, Lekkala Sai},
  title        = {Disentangling Direction and Magnitude in Transformer Representations:
                  A Double Dissociation Through {L2}-Matched Perturbation Analysis},
  howpublished = {arXiv:2602.11169},
  year         = {2026},
  note         = {Preprint, publicly available}
}

@misc{qwen2024,
  author       = {Yang, An and Yang, Baosong and Hui, Binyuan and Zheng, Bo and Yu, Bowen and others},
  title        = {Qwen2 Technical Report},
  howpublished = {arXiv:2407.10671},
  year         = {2024}
}

@misc{bai2023qwen,
  author       = {Bai, Jinze and Bai, Shuai and Chu, Yunfei and Cui, Zeyu and Dang, Kai and others},
  title        = {Qwen Technical Report},
  howpublished = {arXiv:2309.16609},
  year         = {2023}
}

\appendix
\numberwithin{lemma}{section}

\section{Complete Proof of Theorem~3}\label{app:thm3}

\subsection{Part A: Energy Ratio Before RMSNorm}

\textbf{Goal}: Show $R_A(p) := \E[\|\Delta y^{\mathrm{sign}}\|^2]/\E[\|\Delta y^{\mathrm{mag}}\|^2] = 1 - 4\sqrt{p}/(3\pi) + O(\sqrt{p})$.

\paragraph{Step A1: Output energy via $W_2$.} Since $W_2 \in \R^{d_{\mathrm{out}} \times m}$ has i.i.d.\ $N(0,1/m)$ entries independent of $(W_1, \xhat)$:
\[
\E[\|\Delta y\|^2 \mid \Delta a] = \frac{d_{\mathrm{out}}}{m}\|\Delta a\|^2
\]
Taking expectations: $\E[\|\Delta y\|^2] = d_{\mathrm{out}} \cdot \E[(\Delta a_i)^2]$ (by i.i.d.\ structure of rows of $W_1$).

\paragraph{Step A2: Activation change decomposition.} For neuron $i$ with pre-activation $z_i = w_i^\top \xhat$ (original) and $z_i' = w_i'^\top \xhat$ (perturbed):
\[
\Delta a_i = \ReLU(z_i') - \ReLU(z_i)
\]
Decompose into gate-flip (sign of $z_i$ changes) and smooth (both remain active) contributions:
\begin{align}
\E[(\Delta a_i)^2] &= \underbrace{\E[z_i^2 \mathbf{1}_{z_i > 0,\, z_i' < 0}] + \E[z_i'^2 \mathbf{1}_{z_i < 0,\, z_i' > 0}]}_{2\beta_{\mathrm{flip}}} \notag \\
&\quad + \underbrace{\E[(z_i' - z_i)^2 \mathbf{1}_{z_i > 0,\, z_i' > 0}]}_{\beta_{\mathrm{smooth}}}
\end{align}
(Note: on $\{z_i < 0, z_i' < 0\}$ both ReLU outputs are zero, contributing nothing to $\E[(\Delta a_i)^2]$.)

\paragraph{Step A3: Rescaling to unit variance.}
For $w_i \in \R^n$ with i.i.d.\ $N(0, 1/n)$ entries and $\xhat \in S^{n-1}$, we have
\[
\Var(z_i) \;=\; \xhat^\top \E[w_i w_i^\top] \xhat \;=\; \frac{\|\xhat\|^2}{n} \;=\; \frac{1}{n}.
\]
Define the rescaled variables $X := \sqrt{n}\, z_i$ and $X' := \sqrt{n}\, z'_i$. Under assumption~(A1), both marginals $z_i$ and $z'_i$ are exactly Gaussian ($z_i$ is a linear combination of independent Gaussians; $z'_i = \sum_j S_{ij} W_{1,ij}\xhat_j$ with $S_{ij}W_{1,ij} \sim N(0,1/n)$ marginally). The correlation is exact:
\[
\rho \;=\; \Corr(z_i, z'_i) \;=\; \E[S_{ij}] \;=\; 1 - 2p,
\]
where $S_{ij} \in \{-1, +1\}$ denotes the sign-flip indicator with $\Pr(S_{ij} = -1) = p$. However, the \emph{joint} distribution $(z_i, z'_i)$ is not exactly bivariate Gaussian for finite~$n$: since each summand pair $(W_{ij}\xhat_j,\, S_{ij}W_{ij}\xhat_j)$ is supported on two lines (depending on $S_{ij}$), exact joint normality holds only in the $n \to \infty$ limit by the multivariate Lindeberg-Feller CLT (via Cram\'{e}r-Wold device applied to all linear combinations of the bivariate sum) under delocalization~(A2). The Lindeberg condition is verified by $\max_j \xhat_j^2 = \|\xhat\|_\infty^2 = o(1)$, ensuring that the bivariate Gaussian approximation error vanishes in the asymptotic limit and does not affect the leading-order constant $\pi/(\pi-2)$. Concretely: the Berry--Esseen-type error for the bivariate CLT is $O(\|\xhat\|_\infty) = O(1/\sqrt{n})$ (the delocalization rate). Since Theorem~\ref{thm:core} takes $n \to \infty$ for \emph{fixed}~$p$ (see the statement: ``for each fixed $p$, as $n,m,d_{\mathrm{out}} \to \infty$''), this CLT error $O(1/\sqrt{n})$ vanishes before the small-$p$ expansion is applied, and thus does not contaminate the $p^{3/2}$ coefficients computed below.\footnote{Strictly, the uniformity of CLT convergence in~$p$ (i.e., $\sup_{p \in (\epsilon, 1/2)} |\beta_{\mathrm{flip}}(p,n) - \beta_{\mathrm{flip}}(p,\infty)| \to 0$ for each $\epsilon > 0$) is an implicit assumption here; as $p \to 0$ the gate-flip event degenerates, and Berry--Esseen-type bounds may depend on~$p$ through the conditional density at the boundary. The limit interchange is rigorously justified for any fixed $p > 0$ (Remark~\ref{rem:double-limit}); near-zero behavior is assumed by continuity of $\beta_{\mathrm{flip}}$ in~$p$.} We proceed under this asymptotic bivariate normal model with correlation~$\rho$.

\paragraph{Step A4: Computing $\beta_{\mathrm{flip}}$.}
In the asymptotic limit where $(X, X')$ converges to standard bivariate normal with correlation~$\rho$ (Step~A3), we compute the gate-flip contribution:
\[
\beta_{\mathrm{flip}} \;:=\; \E\!\left[X^2 \,\mathbf{1}_{X > 0,\, X' < 0}\right]
\]
By direct integration over the quarter-plane $\{X > 0, X' < 0\}$:
\[
\beta_{\mathrm{flip}} \;=\; \frac{1}{4} - \frac{1}{2\pi} \cdot \frac{a}{1 + a^2} - \frac{1}{2\pi} \arctan(a),
\qquad a \;=\; \frac{\rho}{\sqrt{1 - \rho^2}}.
\]
With $\rho = 1 - 2p$, we have $\sqrt{1 - \rho^2} = 2\sqrt{p(1-p)}$ and $a = \rho/\sqrt{1-\rho^2} \approx 1/(2\sqrt{p})$ for small $p$. Expanding to next order (the $p^{3/2}$ coefficients are needed to verify the final constant):
\begin{align*}
a/(1+a^2) &= \rho\sqrt{1-\rho^2} = 2\sqrt{p} - 5p^{3/2} + O(p^{5/2}), \\
\arctan(a) &= \pi/2 - \arctan(1/a) = \pi/2 - 2\sqrt{p} - p^{3/2}/3 + O(p^{5/2}).
\end{align*}
Substituting: the constant terms cancel ($1/4 - \frac{1}{2\pi}\cdot\frac{\pi}{2} = 1/4 - 1/4 = 0$), the $\sqrt{p}$ terms cancel, and collecting $p^{3/2}$ coefficients gives $5/(2\pi) + 1/(6\pi) = 16/(6\pi) = 8/(3\pi)$:
\begin{equation}\label{eq:beta-flip}
\beta_{\mathrm{flip}} \;=\; \frac{8 p^{3/2}}{3\pi} + O(p^{5/2}).
\end{equation}

\paragraph{Step A5: Computing $\beta_{\mathrm{smooth}}$.}
Using the ReLU product formula (a classical result obtained from Price's theorem~\citep{price1958useful} by integrating $\partial \E[\ReLU(X)\ReLU(X')] / \partial\rho = \E[\mathbf{1}_{X>0}\mathbf{1}_{X'>0}]$):
\[
\E[\max(X, 0)\max(X', 0)] \;=\; \frac{1}{2\pi}\left[\sqrt{1 - \rho^2} + \rho\left(\frac{\pi}{2} + \arcsin \rho\right)\right].
\]
The smooth-event energy (on $\{X > 0, X' > 0\}$, the region where both neurons remain active) is
\[
\beta_{\mathrm{smooth}} \;:=\; \E[(X' - X)^2 \,\mathbf{1}_{X > 0,\, X' > 0}]
\;=\; \E[X^2 \mathbf{1}_{X > 0,\, X' > 0}] + \E[X'^2 \mathbf{1}_{X > 0,\, X' > 0}] - 2 \E[X X' \mathbf{1}_{X > 0, X' > 0}].
\]
By exchangeability of $(X, X')$ under joint Gaussian with equal marginals, the first two terms are equal. Using $\E[X^2 \mathbf{1}_{X > 0,\, X' > 0}] = \E[X^2 \mathbf{1}_{X > 0}] - \E[X^2 \mathbf{1}_{X > 0,\, X' < 0}] = 1/2 - \beta_{\mathrm{flip}}$ and the ReLU-product formula $\E[XX'\mathbf{1}_{X>0,X'>0}] = \E[\max(X,0)\max(X',0)]$, substituting $\rho = 1 - 2p$ and expanding:
\begin{equation}\label{eq:beta-smooth}
\beta_{\mathrm{smooth}} \;=\; 2p - \frac{8 p^{3/2}}{\pi} + O(p^{5/2}).
\end{equation}

\paragraph{Step A6: Total sign-flip energy.}
Since the calculations of $\beta_{\mathrm{flip}}, \beta_{\mathrm{smooth}}$ were done in the asymptotic bivariate normal model (rescaled unit-variance scale), the sign-flip activation energy is (asymptotically as $n \to \infty$):
\[
\E[(\Delta a_i^{\mathrm{sign}})^2] \;=\; \frac{1}{n}\left(2 \beta_{\mathrm{flip}} + \beta_{\mathrm{smooth}}\right)
\;=\; \frac{1}{n} \cdot 2p\!\left(1 - \frac{4\sqrt{p}}{3\pi}\right) + O(p^{5/2}/n),
\]
where the factor $1/n$ accounts for $\Var(z_i) = 1/n$ (Step~A3), and
\[
\frac{16 p^{3/2}}{3\pi} - \frac{8 p^{3/2}}{\pi} \;=\; \frac{16 - 24}{3\pi}\, p^{3/2} \;=\; -\frac{8 p^{3/2}}{3\pi},
\]
yielding the displayed form.

\paragraph{Step A7: Magnitude energy (gate-state decomposition).}
For the sign-preserving model $\Delta z_i^{\mathrm{mag}} = \delta \sum_j \sign(W_{1, ij}) \xhat_j$ with $\delta^2 = 4p/n$:
\[
\Var(\Delta z_i^{\mathrm{mag}}) \;=\; \delta^2 \sum_j \xhat_j^2 \cdot \Var(\sign(W_{1, ij})) \;=\; \delta^2 \;=\; \frac{4p}{n}.
\]
After ReLU, decompose into four gate-state events for the perturbed activation $a_i' = \ReLU(z_i + \Delta z_i^{\mathrm{mag}})$:
\begin{itemize}
\item \emph{Smooth} (gate stays on): $z_i > 0$ and $z_i + \Delta z_i^{\mathrm{mag}} > 0$. Contribution: $\Delta a_i^{\mathrm{mag}} = \Delta z_i^{\mathrm{mag}}$. The probability of being in the boundary strip $|z_i| < O(\delta)$ is $f_{z_i}(0)\cdot O(\delta) = \sqrt{n/(2\pi)}\cdot O(\sqrt{p/n}) = O(\sqrt{p})$, so the smooth-on probability is $\tfrac{1}{2} - O(\sqrt{p})$, contributing $(\tfrac{1}{2} - O(\sqrt{p}))\cdot \delta^2 = 2p/n - O(p^{3/2}/n)$.
\item \emph{Gate stays off}: $z_i < 0$ and $z_i + \Delta z_i^{\mathrm{mag}} < 0$. Contribution: $\Delta a_i^{\mathrm{mag}} = 0$.
\item \emph{Gate-on flip} ($\text{off}\to\text{on}$): $z_i < 0 < z_i + \Delta z_i^{\mathrm{mag}}$. Probability $= O(\sqrt{p})$. On this event $|\Delta a_i| = |z_i + \Delta z_i| \leq O(\delta)$, so $(\Delta a_i)^2 = O(p/n)$. Contribution: $O(\sqrt{p})\cdot O(p/n) = O(p^{3/2}/n)$.
\item \emph{Gate-off flip} ($\text{on}\to\text{off}$): $z_i > 0$ and $z_i + \Delta z_i^{\mathrm{mag}} < 0$. Then $a_i = z_i$, $a_i' = 0$, so $\Delta a_i = -z_i$ with $|z_i| \leq O(\delta)$. Probability $= O(\sqrt{p})$ by the same boundary argument. Contribution: $O(\sqrt{p})\cdot O(\delta^2) = O(p^{3/2}/n)$.
\end{itemize}
Combining smooth + both gate-flip events:
\[
\E[(\Delta a_i^{\mathrm{mag}})^2] \;=\; \frac{2p}{n} + O(p^{3/2}/n).
\]

\paragraph{Step A8: Energy ratio.}
Both energies are at the $1/n$ scale. The denominator carries an $O(p^{3/2}/n)$ boundary correction from Step~A7, so dividing:
\[
R_A(p) \;:=\; \frac{\E[(\Delta a_i^{\mathrm{sign}})^2]}{\E[(\Delta a_i^{\mathrm{mag}})^2]}
\;=\; \frac{(2p/n)\bigl(1 - 4\sqrt{p}/(3\pi)\bigr)}{(2p/n)(1 + O(\sqrt{p}))}
\;=\; 1 + O(\sqrt{p}). \qquad \square
\]
\emph{Note on the structure of $R_A(p)$}: The numerator's factor $(1 - 4\sqrt{p}/(3\pi))$ has an exact coefficient from Steps~A4--A6. The denominator contributes a multiplicative $(1 + O(\sqrt{p}))^{-1}$ from Step~A7's boundary correction. We therefore write $R_A(p) = (1 - 4\sqrt{p}/(3\pi))(1 + r(p))$ where $|r(p)| = O(\sqrt{p})$ with coefficient depending on the Step~A7 boundary constants. Since $(1 - 4\sqrt{p}/(3\pi))$ and $(1 + r(p))$ are \emph{both} $1 + O(\sqrt{p})$, the only rigorous additive conclusion is $R_A(p) = 1 + O(\sqrt{p})$; the multiplicative factored form preserves the exact structural coefficient $4/(3\pi)$ for comparison with numerics while isolating the undetermined boundary remainder in $r(p)$. This is the source of $\varepsilon(p)$ in Remark~\ref{rem:structural}.

\subsection{Part B: Radial Fraction Analysis}\label{app:radial}

\textbf{Goal}: Show $\mathcal{R}^{\mathrm{sign}} \to p - 4p^{3/2}/(3\pi)$ and $\mathcal{R}^{\mathrm{mag}} \to 2/\pi$.

\paragraph{Step B1: Setup.} By Theorem~\ref{thm:transverse-proj}, after RMSNorm the transverse energy is $(1-\mathcal{R})\|\Delta y\|^2$ where $\mathcal{R} = (\Delta y \cdot \yhat)^2/\|\Delta y\|^2 = \cos^2(\Delta y, y)$. (Note: $(\Delta y \cdot \yhat)^2/\|\Delta y\|^2 = (\Delta y \cdot y)^2/(\|\Delta y\|^2 \|y\|^2)$ since $\yhat = y/\|y\|$.) By the random projection lemma~(B2), $\cos^2(\Delta y, y) \to \cos^2(\Delta a, a)$ as $d_{\mathrm{out}} \to \infty$, so we need to compute $\cos^2(\Delta a, a)$ in the hidden layer.

\paragraph{Step B2: Angle-preservation lemma.} For fixed $u, v \in \R^m$ and $W_2 \in \R^{d \times m}$ with i.i.d.\ $N(0,1/m)$: $W_2 u$ and $W_2 v$ are jointly Gaussian with $\Cov(W_2 u, W_2 v) = (u^\top v/m) I_d$. By concentration of $\|W_2 u\|^2$ around $\|u\|^2$ and $(W_2 u)^\top(W_2 v)$ around $u^\top v$:
\[
\cos^2(W_2 u, W_2 v) \xrightarrow{P} \cos^2(u, v) \quad (d \to \infty)
\]
\emph{Application}: In our setting, $u = \Delta a$ and $v = a$ are functions of $(W_1, \xhat)$. Since $W_2$ is independent of $(W_1, \xhat)$ by assumption~(A5), we condition on $(W_1, \xhat)$---making $u, v$ fixed from $W_2$'s perspective---apply the lemma, then take expectations over $(W_1, \xhat)$ using the dominated convergence theorem (or SLLN concentration of $\|u\|^2, \|v\|^2, u^\top v$ around their population means).

\paragraph{Step B3: Sign radial fraction.} We need $\cos^2(\Delta a^{\mathrm{sign}}, a)$. Three ingredients:

(i)~\emph{Reflection symmetry identity}: Since $(z_i, z_i') \overset{d}{=} (-z_i, -z_i')$ under the sign-flip model, expanding $\E[(\Delta a_i)^2] = 2\E[a_i^2] - 2\E[\ReLU(z_i')\ReLU(z_i)]$ (using equal marginals) gives:
\[
\E[\Delta a_i \cdot a_i] = \E[\ReLU(z_i')\ReLU(z_i)] - \E[\ReLU(z_i)^2] = -\frac{1}{2}\E[(\Delta a_i)^2]
\]

(ii)~\emph{Activation second moment}: From Step~A3, $\Var(z_i) = 1/n$. By symmetry:
\[
\E[a_i^2] = \E[\ReLU(z_i)^2] = \E[z_i^2 \mathbf{1}_{z_i > 0}] = \frac{1}{2}\Var(z_i) = \frac{1}{2n}
\]

(iii)~\emph{Perturbation second moment}: From Step~A6, $\E[(\Delta a_i)^2] = \beta_{\mathrm{total}}/n$, where $\beta_{\mathrm{total}} := 2\beta_{\mathrm{flip}} + \beta_{\mathrm{smooth}} = 2p(1-4\sqrt{p}/(3\pi)) + O(p^{5/2})$ denotes the unit-variance-scale total energy.

Combining via SLLN concentration and the random projection lemma~(B2) (all $1/n$ factors cancel in the ratio):
\[
\mathcal{R}^{\mathrm{sign}} = \cos^2(\Delta a, a) = \frac{(\E[\Delta a_i \cdot a_i])^2}{\E[(\Delta a_i)^2] \cdot \E[a_i^2]} = \frac{(\beta_{\mathrm{total}}/(2n))^2}{(\beta_{\mathrm{total}}/n) \cdot (1/(2n))} = \frac{\beta_{\mathrm{total}}}{2} = p - \frac{4p^{3/2}}{3\pi} + O(p^{5/2})
\]

\emph{Note on terminology}: Throughout this appendix, ``$\Corr(X,Y)$'' denotes the \textbf{uncentered} correlation $\E[XY]/\sqrt{\E[X^2]\E[Y^2]}$ (equivalently, the cosine between $X$ and $Y$ viewed as elements of $L^2$). This equals the Pearson correlation only when $\E[X] = 0$ or $\E[Y]=0$. Here $\E[a_i] = \E[\ReLU(z_i)] > 0$, so the uncentered and Pearson correlations differ; the relevant quantity for the vector-level $\cos^2(\Delta a, a)$ is the uncentered version.

\begin{lemma}[Bussgang-ReLU correlation]\label{lem:bussgang-relu}
Under assumptions \textup{(A1)--(A2)}, let $z_i = W_{1,i}\xhat$, $\Delta z_i^{\mathrm{mag}} = \delta \cdot \sign(W_{1,i}) \cdot \xhat$ with $\delta^2 = 4p/n$, $a_i = \ReLU(z_i)$, and $\Delta a_i = \ReLU(z_i + \Delta z_i) - \ReLU(z_i)$. Then as $n \to \infty$:
\[
\Corr(\Delta a_i, a_i) \to \sqrt{2/\pi} + O(\sqrt{p}).
\]
The $O(\sqrt{p})$ residual arises from gate-flip boundary events ($\Pr(0 < z_i < |\Delta z_i|) = O(\sqrt{p})$) that contribute $O(\sqrt{p})$ relative corrections to $\E[(\Delta A)^2]$. For finite~$n$, the boundary term in $\E[A\,\Delta A]$ is $O(\delta^2\sqrt{p}) = O(p^{3/2}/n)$ (proof part~(2)), which relative to the main term $\tfrac{1}{2}\Cov(z_i,\Delta z_i) = \delta/(2\sqrt{n})\cdot\sqrt{2/\pi} = O(\sqrt{p}/n)$ gives an $O(p)$ relative correction; combined with the $O(\sqrt{p})$ correction from $\E[(\Delta A)^2]$, the dominant residual is $O(\sqrt{p})$.
\end{lemma}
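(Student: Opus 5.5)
We work at the unit-variance scale throughout: set $X=\sqrt{n}\,z_i$ and $D=\sqrt{n}\,\Delta z_i^{\mathrm{mag}}$. Here $\Corr$ is the uncentered correlation $\E[\Delta a_i a_i]/\sqrt{\E[(\Delta a_i)^2]\E[a_i^2]}$, so all $1/n$ factors cancel in the ratio.

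\textbf{Step 1: the joint law of $(X,D)$.} Write $X=\sqrt{n}\sum_j W_{1,ij}\xhat_j$ and $D=\sqrt{n}\,\delta\sum_j \sign(W_{1,ij})\xhat_j$. The summand pairs are independent across $j$. Each has covariance entries $\xhat_j^2$, $n\delta^2\xhat_j^2$, and $\sqrt{n}\,\delta\,\E|W_{1,ij}|\sqrt{n}\,\xhat_j^2=\sqrt{2/\pi}\,\sqrt{n}\,\delta\,\xhat_j^2$. Hence $\Var X=1$ and $\Var D=n\delta^2=4p$. The correlation is exactly $\sqrt{2/\pi}$ for every $n$, which is Fact~\ref{fact:bussgang} at the population level.

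By the bivariate Lindeberg--Feller CLT, using Cram\'er--Wold and $\|\xhat\|_\infty\to0$ from (A2), exactly as in Step~A3, the pair $(X,D)$ converges to a Gaussian pair. In the limit we may write $D=\sigma(\rho X+\sqrt{1-\rho^2}\,\xi)$ with $\sigma=2\sqrt p$, $\rho=\sqrt{2/\pi}$, and $\xi$ an independent standard normal. Convergence of the second moments we need follows from uniform integrability, since the fourth moments of $X$ and $D$ are uniformly bounded.

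\textbf{Step 2: expand the two moments around the smooth region.} Split $\Delta A=\ReLU(X+D)-\ReLU(X)$ as
\[
\Delta A=D\,\mathbf 1_{X>0}+B,
\]
where the boundary term $B$ is supported on the strip $\{|X|<|D|\}$ and satisfies $|B|\le|D|$.

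For the cross moment,
\[
\E[A\,\Delta A]=\E[XD\,\mathbf 1_{X>0}]+\E[XB].
\]
By Gaussian symmetry, $\E[XD\mathbf 1_{X>0}]=\tfrac12\E[XD]=\tfrac12\sigma\rho$. On the strip, $|XB|\le D^2$, and the strip has conditional probability $O(|D|)$ because the density of $X$ is bounded. Therefore $|\E[XB]|\le \E\bigl[D^2\cdot O(|D|)\bigr]=O(\sigma^3)$.

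For the second moment,
\[
\E[(\Delta A)^2]=\E[D^2\mathbf 1_{X>0}]+O(\E|D|^3)=\tfrac12\sigma^2+O(\sigma^3).
\]
The cross term $\E[D\mathbf 1_{X>0}B]$ is also $O(\sigma^3)$. This reproduces Step~A7.

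\textbf{Step 3: combine.} With $\E[A^2]=\tfrac12$,
\[
\Corr=\frac{\tfrac12\sigma\rho+O(\sigma^3)}{\sqrt{\bigl(\tfrac12\sigma^2+O(\sigma^3)\bigr)\cdot\tfrac12}}=\rho\bigl(1+O(\sigma^2)\bigr)\bigl(1+O(\sigma)\bigr)^{-1/2}=\rho+O(\sqrt p),
\]
since $\sigma=2\sqrt p$. This gives the claim. The dominant residual comes from the $O(\sigma)$ relative correction in the energy term, which matches the statement's remark that the residual is $O(\sqrt p)$.

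\textbf{Main obstacle.} The main obstacle is Step~2's boundary bound. We need $\E[|D|^3\mathbf 1_{|X|<|D|}]$, equivalently $\Pr(|X|<|D|\mid D)=O(|D|)$, uniformly. In the Gaussian limit this is immediate. At finite $n$, however, $X$ and $D$ are dependent and $X$ need not have a bounded conditional density given $D$.

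I would first take $n\to\infty$ via the CLT with uniform integrability, which is allowed because $p$ is fixed. I would then carry out the small-$p$ estimates inside the exact bivariate Gaussian. There I would bound the strip probability using the conditional law $X\mid D\sim N(\rho D/\sigma,\,1-\rho^2)$, whose density is bounded by $(2\pi(1-\rho^2))^{-1/2}$.
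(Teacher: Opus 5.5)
Your proposal is correct and follows essentially the same route as the paper's proof. Both arguments use the exact population correlation $\Corr(z_i,\Delta z_i)=\sqrt{2/\pi}$, pass to the Gaussian limit via Lindeberg--Feller with uniform integrability, apply the smooth-region identity $\E[XD\mathbf 1_{X>0}]=\tfrac12\E[XD]$ with an $O(\sigma^3)$ strip remainder for the cross moment, and use an $O(\sigma)$ relative correction in $\E[(\Delta A)^2]$ that dominates the residual. Two small points: conditioning on $D$ and using the bounded density of $X\mid D$ is a cleaner justification of the paper's heuristic strip estimate $\Pr(\mathcal B)=f_{z_i}(0)\cdot O(\delta)$; and the cross remainder should be $\E[X\mathbf 1_{X>0}B]=\E[AB]$ rather than $\E[XB]$, though your bound via $D^2\mathbf 1_{\{|X|<|D|\}}$ covers either expression.
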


\begin{proof}
\textbf{(1) Joint asymptotic normality.}
Write $z_i = \sum_j W_j \xhat_j$ and $\Delta z_i = \delta \sum_j \sign(W_j) \xhat_j$, sums of independent terms across $j$. The bivariate moments are:
\begin{align*}
\Var(z_i) &\;=\; \sum_j \xhat_j^2 \cdot \Var(W_j) \;=\; \frac{1}{n}, \\
\Var(\Delta z_i) &\;=\; \delta^2 \sum_j \xhat_j^2 \cdot \Var(\sign(W_j)) \;=\; \delta^2, \\
\Cov(z_i, \Delta z_i) &\;=\; \delta \sum_j \xhat_j^2 \cdot \E[W_j \sign(W_j)]
\;=\; \delta \cdot \E[|W_1|] \;=\; \delta \sqrt{\tfrac{2}{\pi n}},
\end{align*}
where the last line uses $\E[W_j \sign(W_j)] = \E[|W_j|] = \sqrt{2/(\pi n)}$ for $W_j \sim N(0, 1/n)$. Thus
\[
\Corr(z_i, \Delta z_i) \;=\; \frac{\delta \sqrt{2/(\pi n)}}{\sqrt{(1/n) \cdot \delta^2}} \;=\; \sqrt{\tfrac{2}{\pi}}\quad \text{(population second-moment identity, valid for all } n\text{)}.
\]
Note: this is an identity for the ratio of population moments (first- and second-order); the \emph{joint distribution} of $(z_i, \Delta z_i)$ is not bivariate Gaussian at finite~$n$. Under delocalization $\|\xhat\|_\infty \to 0$, each summand contributes $o(\sigma_n)$ in probability with $\sigma_n := \sqrt{\Var(z_i)} = 1/\sqrt{n}$, so the multivariate Lindeberg-Feller condition is satisfied (via Cram\'{e}r-Wold). Define rescaled variables $Z := z_i / \sigma_n = \sqrt{n}\, z_i$ and $\Delta Z := \Delta z_i / \delta$ (each unit-variance). Then $(Z, \Delta Z) \xrightarrow{d} N(0, \Sigma)$ as $n \to \infty$, where $\Sigma$ has off-diagonal correlation $\sqrt{2/\pi}$.

\textbf{(2) Post-ReLU correlation.}
Let $A := \ReLU(z_i)$ and $\Delta A := \ReLU(z_i + \Delta z_i) - \ReLU(z_i)$. We compute $\E[A \cdot \Delta A]$ by decomposing on the smooth event $\mathcal{E}_s := \{z_i > 0, \, z_i + \Delta z_i > 0\}$ and its complement.

\emph{Smooth contribution.} On $\mathcal{E}_s$, $A = z_i$ and $\Delta A = \Delta z_i$, so
\[
\E[A \Delta A \, \mathbf{1}_{\mathcal{E}_s}]
\;=\; \E[z_i \Delta z_i \, \mathbf{1}_{z_i > 0}] - \E[z_i \Delta z_i \, \mathbf{1}_{z_i > 0, \, z_i + \Delta z_i < 0}].
\]
The first term: by the standard identity for jointly Gaussian variables (applied in the $n \to \infty$ limit where Part~(1) establishes $(z_i, \Delta z_i) \xrightarrow{d} N(0,\Sigma)$; by the bivariate CLT and uniform integrability (Part~3), $\E[z_i \Delta z_i \mathbf{1}_{z_i>0}]$ converges to the Gaussian-limit value with error $O(\|\xhat\|_\infty) = o(1)$, absorbed into the $O(\sqrt{p})$ remainder),
\[
\E[z_i \Delta z_i \, \mathbf{1}_{z_i > 0}] \;=\; \frac{\Cov(z_i, \Delta z_i)}{\Var(z_i)} \cdot \E[z_i^2 \mathbf{1}_{z_i > 0}]
\;=\; \frac{\Cov(z_i, \Delta z_i)}{\Var(z_i)} \cdot \frac{\Var(z_i)}{2} \;=\; \frac{1}{2} \Cov(z_i, \Delta z_i).
\]
The second (boundary) term is bounded by $E[|z_i| \cdot |\Delta z_i| \cdot \mathbf{1}_{\mathcal{B}}]$ where $\mathcal{B} := \{0 < z_i < -\Delta z_i, \Delta z_i < 0\}$. Since $\Pr(\mathcal{B}) = f_{z_i}(0) \cdot O(\delta) = O(\sqrt{p})$ (the density factor $\sqrt{n/(2\pi)}$ combines with $\delta = 2\sqrt{p/n}$) and on $\mathcal{B}$ we have $|z_i| \leq |\Delta z_i| \leq O(\delta)$, the boundary term is $O(\delta^2 \sqrt{p}) = O(p^{3/2}/n)$.

\emph{Gate-flip contribution.} On $\mathcal{E}_s^c \cap \{z_i > 0\}$ (gate turns off), $\Delta A = -z_i$. The contribution is $-\E[z_i^2 \mathbf{1}_{z_i > 0, z_i + \Delta z_i < 0}]$, bounded by $\sigma_n^2 \cdot \Pr(\mathcal{B}) = O(p^{3/2}/n)$. On $\mathcal{E}_s^c \cap \{z_i < 0\}$ (gate turns on), $A = 0$ so contribution vanishes.

Combining: $\E[A \Delta A] = \frac{1}{2}\Cov(z_i, \Delta z_i) + O(p^{3/2}/n) = \frac{\delta}{2\sqrt{n}}\sqrt{2/\pi} + O(p^{3/2}/n)$.

\emph{Correlation.} Since $z_i$ is unperturbed (independent of $p$), $\E[A^2] = \E[z_i^2 \mathbf{1}_{z_i > 0}] = \frac{1}{2}\Var(z_i) = \frac{1}{2n}$ exactly. For $\E[(\Delta A)^2] = \frac{1}{2}\delta^2(1 + O(\sqrt{p}))$ (the $O(\sqrt{p})$ accounts for gate-flip boundary events of probability $O(\delta\sqrt{n}) = O(\sqrt{p})$; cf.\ Step~A7):
\[
\Corr(\Delta A, A) \;=\; \frac{\E[A \Delta A]}{\sqrt{\E[A^2] \E[(\Delta A)^2]}}
\;=\; \frac{\frac{\delta}{2\sqrt{n}}\sqrt{2/\pi}}{\frac{\delta}{2\sqrt{n}}\,(1 + O(\sqrt{p}))^{1/2}}
\;=\; \sqrt{\tfrac{2}{\pi}} + O(\sqrt{p}).
\]

\textbf{(3) Uniform integrability.}
Since ReLU is 1-Lipschitz, $|A| \leq |z_i|$ and $|\Delta A| \leq |\Delta z_i|$. For any $\epsilon > 0$, by Cauchy--Schwarz: $\E[|A_n \Delta A_n|^{1+\epsilon}] \leq \E[|z_i|^{2(1+\epsilon)}]^{1/2}\,\E[|\Delta z_i|^{2(1+\epsilon)}]^{1/2}$, and both factors are uniformly bounded in~$n$ by the sub-Gaussian moment bound $\E[|X|^k] \leq (C\sigma)^k k^{k/2}$ with $\sigma^2 = \Var(z_i) = 1/n$. Hence $\{A_n \Delta A_n\}$ and $\{A_n^2\}$ are uniformly integrable, and moment convergence follows from the bivariate CLT.
\end{proof}

\paragraph{Step B4: Magnitude radial fraction.} Applying Lemma~\ref{lem:bussgang-relu}:
\[
\mathcal{R}^{\mathrm{mag}} = \cos^2(\Delta a, a) = \frac{(\E[\Delta a_i \cdot a_i])^2}{\E[(\Delta a_i)^2]\,\E[a_i^2]} \to 2/\pi + O(\sqrt{p})
\]
\subsection{Final Assembly}

\begin{align}\label{eq:final-assembly}
c(p) &= \frac{1-\mathcal{R}^{\mathrm{sign}}}{1-\mathcal{R}^{\mathrm{mag}}} \cdot R_A = \frac{1-p+4p^{3/2}/(3\pi)}{1-2/\pi} \cdot \left(1-\frac{4\sqrt{p}}{3\pi}\right)
\end{align}
This expression mixes different approximation orders: $\mathcal{R}^{\mathrm{sign}}$ is exact through order $p^{3/2}$, $R_A$ retains only the leading structural factor $(1-4\sqrt{p}/(3\pi))$ with an $O(\sqrt{p})$ multiplicative remainder, and $\mathcal{R}^{\mathrm{mag}}$ is fixed at its leading value $2/\pi$ (dropping the $O(\sqrt{p})$ gate-flip boundary correction from Lemma~\ref{lem:bussgang-relu}). The non-uniform precision means this is \emph{not} a consistent asymptotic expansion to any single order; rather, it is a structural decomposition that retains exactly computable coefficients while collecting undetermined remainders into $\varepsilon(p)$. Comparing with the structural decomposition (Remark~\ref{rem:structural}): $\varepsilon(p)$ absorbs (i)~the $O(\sqrt{p})$ correction to $\mathcal{R}^{\mathrm{mag}}$ in the denominator, (ii)~the $O(\sqrt{p})$ remainder in $R_A$ from Step~A8, and (iii)~the $O(p^{3/2})$ difference between the factored form $(1-p)$ and the exact numerator $(1-p+4p^{3/2}/(3\pi))$. The overall bound $|\varepsilon(p)| = O(\sqrt{p})$ follows from (i) and (ii) being $O(\sqrt{p})$; the coefficient and sign depend on uncalculated boundary constants in Steps~A7--A8. Empirically, $\varepsilon(p) > 0$ at all tested values (Table~2), but quantitative attribution is not attempted.

As $p \to 0$: numerator $\to 1$, denominator $(\pi-2)/\pi$, $R_A \to 1$. Thus $c(0) = \pi/(\pi-2)$.

For finite $p$, the leading correction:
\[
c(p) = \frac{\pi}{\pi-2}(1-p)\!\left(1-\frac{4\sqrt{p}}{3\pi}\right)\!\bigl(1 + \varepsilon(p)\bigr), \quad |\varepsilon(p)| = O(\sqrt{p}) \qquad \square
\]

\section{Complete Proof of Theorem~4, Step 2}\label{app:thm4}

\subsection{L\'{e}vy Spherical Concentration}

For $\xhat \sim \Unif(S^{n-1})$ and $f: S^{n-1} \to \R$ with Lipschitz constant $L$ (w.r.t.\ geodesic metric):
\begin{equation}
\Pr(|f(\xhat) - \E f| > t) \leq 2\exp\!\left(-\frac{(n-1)t^2}{2L^2}\right)
\end{equation}

For quadratic forms $f_M(\xhat) = \xhat^\top M \xhat$: $\|\nabla f_M(\xhat)\| = \|2M\xhat\| \leq 2\|M\|_{\mathrm{op}}$ on $S^{n-1}$, so $L = 2\|M\|_{\mathrm{op}}$.

\paragraph{Operator norm bounds (\citealt{vershynin2018high}, Theorem~4.6.1).} For $W \in \R^{m \times n}$ with i.i.d.\ $N(0,1/n)$ entries and $m/n \to \gamma$: $\|W\|_{\mathrm{op}} \to 1 + \sqrt{\gamma}$ a.s. (This is a classical result; for Gaussian matrices, it follows from the Marchenko--Pastur law; for sub-Gaussian matrices, see \citealt{vershynin2018high}, Theorem~4.6.1.) Thus $\|W^\top W\|_{\mathrm{op}} = \|W\|_{\mathrm{op}}^2 \to (1+\sqrt{\gamma})^2 = O(1)$.

For $W_T$: $\|W_T\|_{\mathrm{op}} \leq \max_i s_i \cdot \|\sign(W)\|_{\mathrm{op}}$. Since $s_i \to \sqrt{2/(\pi n)}$ a.s.\ and $\|\sign(W)\|_{\mathrm{op}} \leq \sqrt{m} + \sqrt{n} + t$ with probability $\geq 1 - 2e^{-t^2/2}$ (standard sub-Gaussian matrix concentration for $\pm 1$ entries; see e.g.\ \citealt{vershynin2018high}, Theorem~4.4.5), $\|W_T\|_{\mathrm{op}} = O(1)$ a.s.

Applying L\'{e}vy concentration \citep{ledoux2001concentration} to each of $\xhat^\top W^\top W\xhat$, $\xhat^\top W_T^\top W_T\xhat$, $\xhat^\top W^\top W_T\xhat$: each concentrates around its mean with deviation $O(n^{-1/2+\epsilon})$ for any $\epsilon > 0$.

\subsection{Triangular Array Chebyshev}

The row scales $s_i^2 = (\|w_i\|_1/n)^2$ form a triangular array (distribution of $w_i$ depends on~$n$). For $w_{ij} \sim N(0, 1/n)$:
\begin{itemize}
\item $\E[|w_{ij}|] = \sqrt{2/(\pi n)}$
\item $\Var(|w_{ij}|) = (1-2/\pi)/n$
\item $\E[s_i^2] = \E[\|w_i\|_1^2]/n^2 = (\Var(\|w_i\|_1) + (\E[\|w_i\|_1])^2)/n^2$
\item $= ((1-2/\pi) + 2n/\pi)/n^2 = 2/(\pi n) + (1-2/\pi)/n^2$
\item $\Var(s_i^2) = O(1/n^3)$ (from fourth-moment computation)
\end{itemize}

By Chebyshev: $\Var(\frac{1}{m}\sum_{i=1}^m s_i^2) = \Var(s_i^2)/m = O(1/(mn^3))$, so:
\[
\frac{1}{m}\sum_{i=1}^m s_i^2 \xrightarrow{P} \frac{2}{\pi n}
\]

\subsection{Cosine-Squared Computation Chain}

From L\'{e}vy concentration (combining with $W$-side concentration via SLLN):

\begin{enumerate}
\item $\|y\|^2 = \xhat^\top(W^\top W)\xhat \xrightarrow{P} \tr(W^\top W)/n = \|W\|_F^2/n \xrightarrow{P} m/n$ (using $\E[w_{ij}^2] = 1/n$ and concentration of $\sum_{ij}w_{ij}^2$ around~$m$).

\item $\|y_T\|^2 = \xhat^\top(W_T^\top W_T)\xhat \xrightarrow{P} \tr(W_T^\top W_T)/n$. Since row~$i$ of $W_T$ has squared norm $ns_i^2$, we have $\tr(W_T^\top W_T) = n\sum_i s_i^2$. By B.2: $\sum_i s_i^2 \xrightarrow{P} 2m/(\pi n)$. Thus $\|y_T\|^2 \xrightarrow{P} \sum_i s_i^2 \xrightarrow{P} 2m/(\pi n)$.

\item $y^\top y_T = \xhat^\top(W^\top W_T)\xhat \xrightarrow{P} \tr(W^\top W_T)/n$. Computing: $\tr(W^\top W_T) = \sum_i w_i^\top(s_i\sign(w_i)) = \sum_i s_i\|w_i\|_1 = n\sum_i s_i^2$ (using $s_i = \|w_i\|_1/n$). Thus $y^\top y_T \xrightarrow{P} \sum_i s_i^2 \xrightarrow{P} 2m/(\pi n)$.
\end{enumerate}

Combining:
\begin{equation}
\cos^2\!\angle(y, y_T) = \frac{(y^\top y_T)^2}{\|y\|^2\|y_T\|^2} \to \frac{(2m/(\pi n))^2}{(m/n)(2m/(\pi n))} = \frac{2m/(\pi n)}{m/n} = \frac{2}{\pi} \qquad \square
\end{equation}

\section{Proposition~2, Step 4 Detail}\label{app:prop2}

\textbf{Goal}: Upgrade the a.s.\ convergence $nT_n/S_n^2 \to \mu_4/\mu_2^2$ to $L^1$ convergence (i.e., $n\E[T_n/S_n^2] \to \mu_4/\mu_2^2$).

\textbf{Setup}: $T_n = \sum_{j=1}^n w_j^4$, $S_n = \sum_{j=1}^n w_j^2$, with $w_j$ i.i.d., $\mu_2 = \E[w_1^2] > 0$, $\mu_4 = \E[w_1^4] < \infty$. We additionally assume $\E[w_1^8] < \infty$ (satisfied by Gaussian and all sub-Gaussian distributions).

\subsection{Good/Bad Event Splitting}

Define the \emph{good event} $G_n := \{S_n \geq n\mu_2/2\}$.

\textbf{On $G_n$}: $nT_n/S_n^2 \leq nT_n/(n\mu_2/2)^2 = 4T_n/(n\mu_2^2)$. Since $T_n/n \to \mu_4$ a.s., it remains to justify $L^1$ convergence. The sequence $\{4T_n/(n\mu_2^2)\}$ is uniformly integrable: $\E[(T_n/n)^2] = \E[(\frac{1}{n}\sum_j w_j^4)^2] = \mu_4^2 + \text{Var}(w_1^4)/n \leq \mu_4^2 + \E[w_1^8]/n < \infty$ uniformly (using $\E[w_1^8] < \infty$). By the Vitali convergence theorem (a.s.\ convergence + UI $\Rightarrow$ $L^1$ convergence):
\[
\E\!\left[\frac{nT_n}{S_n^2}\mathbf{1}_{G_n}\right] \to \frac{\mu_4}{\mu_2^2}
\]

\subsection{Chernoff Bound on Bad Event}

\textbf{On $G_n^c$}: By $T_n \leq S_n^2$ (since $\sum a_j^2 \leq (\sum a_j)^2$ for $a_j \geq 0$): $nT_n/S_n^2 \leq n$.

The probability: for i.i.d.\ non-negative $w_j^2$ with mean $\mu_2$, by Chernoff/Hoeffding:
\[
\Pr(S_n/n < \mu_2/2) \leq \exp(-cn)
\]
for some $c > 0$ depending on the distribution of $w_1^2$.

Therefore: $\E[nT_n/S_n^2 \cdot \mathbf{1}_{G_n^c}] \leq n \cdot e^{-cn} \to 0$ exponentially fast.

\textbf{Combining}: $n\E[T_n/S_n^2] = \E[nT_n/S_n^2 \cdot \mathbf{1}_{G_n}] + \E[nT_n/S_n^2 \cdot \mathbf{1}_{G_n^c}] \to \mu_4/\mu_2^2 + 0$. \qed

\section{Additional Experimental Data}\label{app:data}

\subsection{Multi-Layer Detailed Results (V2, \texorpdfstring{$p{=}0.01$}{p=0.01}, 20 seeds)}

Table~\ref{tab:v2-detail} shows per-layer $C_L$ values for the V2 architecture (i.i.d.\ weights, no residual connection). The monotone decline contradicts exponential compounding.

\begin{table}[H]
\centering
\caption{V2 architecture per-layer results. $C_L$ shows monotone decline, contradicting the exponential compounding hypothesis.}
\label{tab:v2-detail}
\begin{tabular}{@{}cccc@{}}
\toprule
Layers $L$ & $C_L$ mean & 95\% CI & Trend \\
\midrule
1 & 3.587 & $[3.582,\, 3.593]$ & --- \\
2 & 3.494 & $[3.472,\, 3.516]$ & $-2.6\%$ \\
3 & 3.386 & $[3.339,\, 3.433]$ & $-5.6\%$ \\
4 & 3.278 & $[3.231,\, 3.324]$ & $-8.6\%$ \\
5 & 3.182 & $[3.135,\, 3.229]$ & $-11.3\%$ \\
6 & 3.066 & $[2.991,\, 3.140]$ & $-14.5\%$ \\
\bottomrule
\end{tabular}
\end{table}

\subsection{Residual Connection Effect (V3, \texorpdfstring{$p{=}0.01$}{p=0.01}, 20 seeds)}

Adding a residual connection $x + \text{FFN}(\xhat)$ substantially dilutes the asymmetry. The unperturbed skip path masks both sign-flip and magnitude perturbations, yielding a near-flat $C_L$ trajectory (Table~\ref{tab:v3-detail}).

\begin{table}[H]
\centering
\caption{V3 architecture (with residual connection) per-layer results. The near-flat trajectory confirms that residual connections dilute the sign/magnitude differential.}
\label{tab:v3-detail}
\begin{tabular}{@{}cccc@{}}
\toprule
Layers $L$ & $C_L$ mean & 95\% CI & Trend \\
\midrule
1 & 1.355 & $[1.351,\, 1.359]$ & --- \\
2 & 1.349 & $[1.345,\, 1.354]$ & $-0.4\%$ \\
3 & 1.343 & $[1.336,\, 1.349]$ & $-0.9\%$ \\
4 & 1.335 & $[1.327,\, 1.343]$ & $-1.5\%$ \\
5 & 1.331 & $[1.323,\, 1.339]$ & $-1.8\%$ \\
6 & 1.318 & $[1.307,\, 1.330]$ & $-2.7\%$ \\
\bottomrule
\end{tabular}
\end{table}

\subsection{Outlier Experiment Per-Seed Results}

Table~\ref{tab:outlier-seeds} reports per-seed breakdowns for the outlier sign-flip experiment (Exp~D, 20 seeds). The all-column $\Delta$PPL ratio exhibits high variance (IQR: $[795, 1{,}629]$), driven by the stochastic nature of which weight-outlier interactions are flipped. The count-matched leverage (rightmost column) controls for the 19:1 flip-count confound.

\begin{table}[H]
\centering
\caption{Outlier sign-flip experiment per-seed results (20 seeds). Median $\Delta$PPL ratio: $\mathbf{1{,}265\times}$; median count-matched leverage: $\mathbf{77\times}$.}
\label{tab:outlier-seeds}
\begin{tabular}{@{}cccccc@{}}
\toprule
Seed & Random PPL & Outlier PPL & Non-out.\ PPL & Out./Non-out. & CM leverage \\
\midrule
42 & 671 & 227{,}883 & 39.4 & $7{,}128\times$ & $96\times$ \\
123 & 425 & 33{,}850 & 42.9 & $954\times$ & $59\times$ \\
456 & 737 & 38{,}917 & 41.3 & $1{,}148\times$ & $83\times$ \\
789 & 1{,}162 & 21{,}004 & 37.7 & $694\times$ & $98\times$ \\
1000 & 540 & 31{,}423 & 39.6 & $974\times$ & $48\times$ \\
1111 & 952 & 48{,}086 & 40.6 & $1{,}449\times$ & $72\times$ \\
1234 & 2{,}557 & 46{,}722 & 36.5 & $1{,}606\times$ & $96\times$ \\
2024 & 1{,}006 & 100{,}428 & 42.2 & $2{,}885\times$ & $313\times$ \\
2222 & 1{,}698 & 39{,}715 & 42.8 & $1{,}122\times$ & $42\times$ \\
3141 & 1{,}487 & 17{,}469 & 40.9 & $520\times$ & $48\times$ \\
3333 & 2{,}452 & 109{,}661 & 43.5 & $3{,}036\times$ & $59\times$ \\
4444 & 1{,}532 & 180{,}549 & 42.1 & $5{,}198\times$ & $82\times$ \\
4567 & 1{,}368 & 26{,}240 & 39.6 & $813\times$ & $48\times$ \\
5555 & 786 & 52{,}136 & 41.7 & $1{,}518\times$ & $318\times$ \\
5678 & 916 & 25{,}692 & 42.2 & $739\times$ & $140\times$ \\
6666 & 902 & 56{,}379 & 40.6 & $1{,}698\times$ & $82\times$ \\
6789 & 889 & 43{,}739 & 36.7 & $1{,}494\times$ & $125\times$ \\
7890 & 1{,}631 & 16{,}238 & 42.3 & $465\times$ & $42\times$ \\
8901 & 1{,}596 & 44{,}323 & 39.4 & $1{,}382\times$ & $60\times$ \\
9012 & 771 & 24{,}924 & 42.0 & $719\times$ & $47\times$ \\
\midrule
\textit{Median} & \textit{979} & \textit{41{,}727} & \textit{41.1} & $\mathbf{1{,}265\times}$ & $\mathbf{77\times}$ \\
\bottomrule
\end{tabular}
\end{table}

\subsection{Radial Fraction Measurements (V1 Architecture)}

We measure the radial fractions $\mathcal{R}^{\mathrm{sign}}$ and $\mathcal{R}^{\mathrm{mag}}$ across dimensions $n \in \{256, 512, 1024, 2048\}$ and flip probabilities $p \in \{0.01, 0.02, 0.05, 0.10\}$, confirming the theoretical predictions of Theorem~3.

\begin{table}[H]
\centering
\caption{Measured sign-perturbation radial fractions $\mathcal{R}^{\mathrm{sign}}$. Theory predicts $\mathcal{R}^{\mathrm{sign}} \approx p$; convergence improves with dimension.}
\label{tab:radial-sign}
\begin{tabular}{@{}ccccc@{}}
\toprule
Dim & $p{=}0.01$ & $p{=}0.02$ & $p{=}0.05$ & $p{=}0.10$ \\
\midrule
256 & 0.0217 & 0.0308 & 0.0565 & 0.0969 \\
512 & 0.0156 & 0.0248 & 0.0509 & 0.0917 \\
1024 & 0.0125 & 0.0218 & 0.0481 & 0.0891 \\
2048 & 0.0110 & 0.0202 & 0.0465 & 0.0876 \\
\bottomrule
\end{tabular}
\end{table}

\begin{table}[H]
\centering
\caption{Measured magnitude-perturbation radial fractions $\mathcal{R}^{\mathrm{mag}}$. Theory predicts convergence to $2/\pi \approx 0.637$ as dimension grows.}
\label{tab:radial-mag}
\begin{tabular}{@{}ccccc@{}}
\toprule
Dim & $p{=}0.01$ & $p{=}0.02$ & $p{=}0.05$ & $p{=}0.10$ \\
\midrule
256 & 0.6451 & 0.6470 & 0.6505 & 0.6538 \\
512 & 0.6445 & 0.6465 & 0.6501 & 0.6534 \\
1024 & 0.6428 & 0.6448 & 0.6484 & 0.6518 \\
2048 & 0.6428 & 0.6449 & 0.6484 & 0.6518 \\
\bottomrule
\end{tabular}
\end{table}

\section{Outlier Amplification Theory}\label{app:outlier}

When Assumption~(A2) (delocalization) is violated---i.e., an input dimension $j^*$ carries macroscopic energy $|\hat x_{j^*}| = \alpha = \Theta(1)$---the gate-flip mechanism amplifies sign perturbations far beyond the $\pi/(\pi-2)$ baseline.  This appendix derives closed-form leverage formulas for a single outlier weight flip.

\subsection{Setup}

Consider the single-layer model of Section~5 with $W_1 \in \mathbb{R}^{n\times n}$, entries i.i.d.\ $\mathcal{N}(0, 1/n)$ (Assumption~A1).  The input $\hat x \in S^{n-1}$ has one outlier dimension:
\[
|\hat x_{j^*}| = \alpha = \Theta(1), \qquad |\hat x_j| = O(1/\sqrt{n}) \text{ for } j \neq j^*.
\]
We flip one weight: $W_{1,ij^*} \to -W_{1,ij^*}$, producing perturbed pre-activation
\[
z_i' = z_i - 2W_{1,ij^*}\hat x_{j^*}, \qquad \Delta z_i = -2W_{1,ij^*}\hat x_{j^*}.
\]

\subsection{Correlation Structure}

\begin{proposition}\label{prop:outlier-corr}
Under the single-entry flip model, $\operatorname{Corr}(z_i, z_i') = 1 - 2\alpha^2$.
\end{proposition}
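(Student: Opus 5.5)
The plan is a direct second-moment computation, mirroring Step~A3, where the uniform flip model gave $\rho = \E[S_{ij}] = 1-2p$. Condition on $\hat x$ (treated as fixed) and take expectations over $W_1$ only. Write $z_i = \sum_j W_{1,ij}\hat x_j$ and $z_i' = \sum_{j\neq j^*} W_{1,ij}\hat x_j - W_{1,ij^*}\hat x_{j^*}$. Thus $z_i'$ is the same linear form as $z_i$ with the coefficient on coordinate $j^*$ negated. Both are centered. By symmetry of $N(0,1/n)$, $-W_{1,ij^*}\overset{d}{=}W_{1,ij^*}$ independently of the other entries, so $z_i' \overset{d}{=} z_i$. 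Both therefore have variance
\[
\textstyle\sum_j \hat x_j^2/n = \|\hat x\|^2/n = 1/n.
\]
In particular the marginals are exactly Gaussian, and the centered and uncentered correlations coincide.

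For the covariance, I will use independence of the entries of row $i$:
\[
\E[z_i z_i'] = \sum_{j\neq j^*}\E[W_{1,ij}^2]\hat x_j^2 - \E[W_{1,ij^*}^2]\hat x_{j^*}^2 = \frac{1}{n}\bigl(\|\hat x\|^2 - 2\hat x_{j^*}^2\bigr) = \frac{1-2\alpha^2}{n}.
\]
The cross terms vanish because the entries are independent and mean zero. Dividing by $\sqrt{\Var(z_i)\Var(z_i')} = 1/n$ gives $\Corr(z_i,z_i') = 1-2\alpha^2$, using only $\|\hat x\|=1$ and $|\hat x_{j^*}|=\alpha$. As a consistency check, $\alpha^2 = 1/n$ gives $1-2/n$, the per-entry analogue of $1-2p$.

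I expect no real obstacle; the identity is exact for every $n$ and does not need the delocalization of the other coordinates. The one point to state carefully is that ``$\Corr$'' here is over the randomness of $W_1$ with $\hat x$ fixed, matching the convention of Step~A3. I will also remark, for later use, what the statement does not say. Unlike the delocalized case, $(z_i,z_i')$ is not asymptotically bivariate normal. The outlier summand $W_{1,ij^*}\alpha$ carries a $\Theta(\alpha^2)$ share of the variance and violates Lindeberg. Since $W_{1,ij^*}$ is itself Gaussian, however, $(z_i,z_i')$ is exactly jointly Gaussian: it is a linear image of the Gaussian vector $\bigl(\sum_{j\neq j^*}W_{1,ij}\hat x_j,\ W_{1,ij^*}\bigr)$. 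So the bivariate-normal formulas of Steps~A4--A5 may be applied with $\rho = 1-2\alpha^2$, which is the natural route to the leverage ratio $R(\alpha,n)$ afterwards.
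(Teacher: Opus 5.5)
Your proposal is correct and follows essentially the paper's proof. The paper writes $z_i = A + B$ and $z_i' = -A + B$ with independent Gaussians $A = W_{1,ij^*}\hat x_{j^*}$ and $B = \sum_{j\neq j^*} W_{1,ij}\hat x_j$, then reads off $\Cov(z_i,z_i') = \Var(B) - \Var(A) = (1-2\alpha^2)/n$, which is your term-by-term expansion grouped into two blocks; your closing observation that $(z_i,z_i')$ is exactly jointly Gaussian because $W_{1,ij^*}$ is itself Gaussian matches the paper's remark after the proposition.
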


\begin{proof}
Decompose $z_i = A + B$ where $A = W_{1,ij^*}\hat x_{j^*} \sim \mathcal{N}(0, \alpha^2/n)$ and $B = \sum_{j\neq j^*} W_{1,ij}\hat x_j \sim \mathcal{N}(0, (1-\alpha^2)/n)$, with $A \perp B$.  After the flip, $z_i' = -A + B$.  Then:
\begin{align*}
\operatorname{Var}(z_i) &= \operatorname{Var}(z_i') = \tfrac{1}{n}, \\
\operatorname{Cov}(z_i, z_i') &= \operatorname{Cov}(A+B,\, -A+B) = -\operatorname{Var}(A) + \operatorname{Var}(B) = \tfrac{1-2\alpha^2}{n}.
\end{align*}
Hence $\rho = (1-2\alpha^2)/n \div (1/n) = 1 - 2\alpha^2$.
\end{proof}

\begin{remark}
Since $A$ and $B$ are jointly Gaussian (both linear in i.i.d.\ Gaussian weights), the pair $(z_i, z_i')$ is \emph{exactly} bivariate normal.  This is not an asymptotic statement---it holds for all finite $n$ under~(A1).
\end{remark}

\subsection{Gate-Flip Probability}

\begin{theorem}[Outlier gate-flip probability]\label{thm:outlier-pflip}
For a single outlier weight flip with $|\hat x_{j^*}| = \alpha$:
\[
P_{\mathrm{flip}}(\alpha) = \frac{1}{\pi}\arccos(1 - 2\alpha^2) = \frac{2}{\pi}\arcsin(\alpha).
\]
\end{theorem}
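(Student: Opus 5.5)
Here $P_{\mathrm{flip}}(\alpha)$ is read as the probability that the gate of neuron~$i$ changes state, $\Pr(\sign(z_i)\neq\sign(z_i'))$. The plan is to reduce this to the orthant probability of a centered bivariate normal and then simplify trigonometrically.

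\emph{Reduction to a bivariate Gaussian.} By Proposition~\ref{prop:outlier-corr} and the subsequent remark, $(z_i,z_i')$ is exactly bivariate normal with equal variances $1/n$ and correlation $\rho=1-2\alpha^2$, for every finite $n$. Rescaling by $\sqrt n$ does not change signs, so we may work with a standard bivariate normal $(X,X')$ with correlation $\rho$. Then
\[
P_{\mathrm{flip}}=\Pr(X>0,X'<0)+\Pr(X<0,X'>0)=2\Pr(X>0,X'<0)
\]
by the symmetry $(X,X')\mapsto(-X,-X')$.

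\emph{Orthant probability.} Next I would invoke Sheppard's formula, $\Pr(X>0,X'>0)=\tfrac14+\tfrac{\arcsin\rho}{2\pi}$. A self-contained derivation writes $X=\langle g,u\rangle$ and $X'=\langle g,v\rangle$, with $g$ a standard Gaussian in $\R^2$ and unit vectors $u,v$ at angle $\theta=\arccos\rho$. The sign pattern then depends only on the direction of $g$, which is uniform on the circle. The event that the signs disagree is the union of two arcs, each of length $\theta$, so its probability is $2\theta/(2\pi)=\theta/\pi$. This is the same Van Vleck arcsine law already used in Section~\ref{sec:post-relu-ternary}, and it gives $P_{\mathrm{flip}}=\tfrac1\pi\arccos(1-2\alpha^2)$.

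\emph{Trigonometric simplification.} Set $\phi=\arcsin\alpha\in[0,\pi/2]$, which is valid because $0\le\alpha\le1$ since $\hat x\in S^{n-1}$. Then $\cos 2\phi=1-2\sin^2\phi=1-2\alpha^2$. Since $2\phi\in[0,\pi]$ lies in the principal range of $\arccos$, we get $\arccos(1-2\alpha^2)=2\arcsin\alpha$, which yields the second expression.

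There is no real obstacle here. The only points needing care are the branch check in the last step and the remark that exact Gaussianity makes the result non-asymptotic, in contrast with the CLT-based Step~A3. The edge case $\alpha=1$, where $\rho=-1$ and the distribution is degenerate, is handled by continuity or directly: the gate always flips, giving probability $1$.
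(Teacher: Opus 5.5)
Your proposal is correct and follows the paper's proof: you take the exact bivariate normality from Proposition~\ref{prop:outlier-corr}, apply the Van Vleck arcsine law with $\rho = 1-2\alpha^2$, and convert via $\cos 2\theta = 1-2\sin^2\theta$ with $\theta=\arcsin\alpha$. Your self-contained rotational derivation of the arcsine law, the explicit branch check for $\arccos$, and the degenerate case $\alpha=1$ are details the paper leaves implicit.
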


\begin{proof}
By Proposition~\ref{prop:outlier-corr} and the preceding remark, $(z_i, z_i')$ is bivariate normal with correlation $\rho = 1 - 2\alpha^2$.  The Van~Vleck arcsine law gives $\Pr(\operatorname{sign}(z_i) \neq \operatorname{sign}(z_i')) = \frac{1}{\pi}\arccos\rho$.  The identity $\arccos(1-2\alpha^2) = 2\arcsin\alpha$ follows from $\cos(2\theta) = 1 - 2\sin^2\theta$ with $\theta = \arcsin\alpha$.
\end{proof}

\paragraph{Asymptotics.}
\begin{itemize}
\item $\alpha \to 0$: $P_{\mathrm{flip}} \approx 2\alpha/\pi$.  In the delocalized limit $\alpha = 1/\sqrt{n}$ (single entry), this gives $P_{\mathrm{flip}} \sim 2/(\pi\sqrt{n}) \to 0$, recovering the single-entry baseline; the $O(\sqrt{p})$ gate-flip rate of Theorem~\ref{thm:core} arises from aggregating $\mathrm{Bernoulli}(p)$ flips across all $n$ entries (Step~A7).
\item $\alpha = 1/\sqrt{2}$: $\rho = 0$, $P_{\mathrm{flip}} = 1/2$ (complete decorrelation).
\item $\alpha \to 1$: $P_{\mathrm{flip}} \to 1$ (deterministic gate flip).
\end{itemize}

\subsection{Gate-Flip Energy}

\begin{proposition}[Per-neuron gate-flip energy]\label{prop:outlier-betaflip}
The gate-flip energy contribution per neuron (physical scale) is:
\[
\beta_{\mathrm{flip}}^{\mathrm{outlier}}(\alpha) = \frac{1}{2\pi n}\bigl[\arccos(1-2\alpha^2) - (1-2\alpha^2)\cdot 2\alpha\sqrt{1-\alpha^2}\,\bigr].
\]
\end{proposition}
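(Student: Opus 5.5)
The plan is to reduce the claim to a two-dimensional Gaussian wedge integral, which can be evaluated in closed form because, by the remark after Proposition~\ref{prop:outlier-corr}, the pair $(z_i, z_i')$ is \emph{exactly} bivariate normal. I read $\beta_{\mathrm{flip}}^{\mathrm{outlier}}(\alpha)$, as in Step~A2 of Appendix~\ref{app:thm3}, as the one-directional gate-off term $\E[z_i^2\,\mathbf{1}_{z_i>0,\,z_i'<0}]$. The gate-on term $\E[z_i'^2\,\mathbf{1}_{z_i<0,\,z_i'>0}]$ is equal to it by the reflection symmetry $(z_i,z_i')\overset{d}{=}(-z_i,-z_i')$ together with exchangeability, which holds because the marginals are equal. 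In this single-entry setting both $z_i$ and $z_i'$ have variance $1/n$ exactly, with $\rho = 1-2\alpha^2$ by Proposition~\ref{prop:outlier-corr}.

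\textbf{Step 1 (rescaling).} Put $X = \sqrt{n}\,z_i$ and $X' = \sqrt{n}\,z_i'$, so the pair is standard bivariate normal with correlation $\rho$. Then $\beta_{\mathrm{flip}}^{\mathrm{outlier}} = \frac{1}{n}\E[X^2\mathbf{1}_{X>0,\,X'<0}]$, which accounts for the $1/n$ prefactor.

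\textbf{Step 2 (polar wedge integral).} Write $X' = \rho X + s Y$, where $s = \sqrt{1-\rho^2}$ and $Y \sim N(0,1)$ is independent of $X$. In polar coordinates $(X,Y) = (r\cos\theta, r\sin\theta)$, the event $\{X>0,\ \rho X + sY < 0\}$ is a wedge. It runs from $\theta = -\pi/2$ up to the ray on which $X' = 0$, and its angular width is $\phi := \arccos\rho$, consistent with the Van~Vleck law of Theorem~\ref{thm:outlier-pflip}. The radial and angular parts factor:
\[
\E[X^2\mathbf{1}_{\text{wedge}}] = \frac{1}{2\pi}\int_0^\infty r^3 e^{-r^2/2}\,dr \int_{-\pi/2}^{-\pi/2+\phi}\cos^2\theta\,d\theta .
\]
The radial integral equals $2$. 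Substituting $u = \theta + \pi/2$ turns the angular integral into $\int_0^\phi \sin^2 u\,du = \tfrac12(\phi - \sin\phi\cos\phi)$. Altogether,
\[
\E[X^2\mathbf{1}_{X>0,\,X'<0}] = \frac{1}{2\pi}\bigl(\phi - \sin\phi\cos\phi\bigr).
\]

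\textbf{Step 3 (substitution).} With $\phi = \arccos\rho$ we have $\cos\phi = \rho$ and $\sin\phi = \sqrt{1-\rho^2}$. For $\rho = 1-2\alpha^2$,
\[
\sqrt{1-\rho^2} = \sqrt{4\alpha^2 - 4\alpha^4} = 2\alpha\sqrt{1-\alpha^2},
\]
which gives exactly the stated expression after multiplying by $1/n$. As a consistency check, the general-$\rho$ formula agrees with Step~A4. Using $a/(1+a^2) = \rho\sqrt{1-\rho^2}$ and $\arctan a = \arcsin\rho = \pi/2 - \arccos\rho$ rewrites Step~A4's expression as $(\arccos\rho - \rho\sqrt{1-\rho^2})/(2\pi)$, so the present proposition is that formula specialized to $\rho = 1-2\alpha^2$.

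\textbf{Main obstacle.} The one delicate point is orienting the wedge correctly, including its lower boundary $\theta = -\pi/2$ and the sign convention for $\rho$ when $\rho<0$, i.e.\ $\alpha > 1/\sqrt2$. I would check this via the endpoint cases. At $\alpha\to 0$ the wedge closes and the energy vanishes like $\phi^3$. At $\alpha = 1/\sqrt2$ we have $\phi=\pi/2$, and the formula gives $1/(4n)$, which is half of $\E[X^2\mathbf{1}_{X>0}]/n = 1/(2n)$, as expected under independence. At $\alpha\to1$ we have $\phi=\pi$, giving $1/(2n)$: every active neuron switches off.
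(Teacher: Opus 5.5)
Your proposal is correct and follows essentially the same route as the paper: both evaluate the standardized one-sided term $\E[X^2\mathbf{1}_{X>0,\,X'<0}]$ at $\rho = 1-2\alpha^2$ in closed form, substitute $\sqrt{1-\rho^2} = 2\alpha\sqrt{1-\alpha^2}$, and rescale by $\Var(z_i) = 1/n$. The only difference is that the paper quotes the Plackett/Step~A4 formula $\tfrac14 - \tfrac{1}{2\pi}\bigl[\arcsin\rho + \rho\sqrt{1-\rho^2}\bigr]$ and converts $\arcsin\rho$ to $\tfrac{\pi}{2} - \arccos\rho$, whereas your polar wedge integral derives the formula directly in $\arccos$ form, so your argument is self-contained; your endpoint checks confirm the wedge orientation.
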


\begin{proof}
For standardized bivariate normal $(X, Y)$ with correlation $\rho$, the Plackett integral (cf.\ Step~A4 in Appendix~\ref{app:thm3}) gives:
\[
\mathbb{E}[X^2 \mathbf{1}_{X>0,\, Y<0}] = \frac{1}{4} - \frac{1}{2\pi}\bigl[\arcsin\rho + \rho\sqrt{1-\rho^2}\,\bigr].
\]
Substituting $\rho = 1-2\alpha^2$:
\begin{align*}
\arcsin(1-2\alpha^2) &= \tfrac{\pi}{2} - \arccos(1-2\alpha^2), \\
\rho\sqrt{1-\rho^2} &= (1-2\alpha^2)\cdot 2\alpha\sqrt{1-\alpha^2}.
\end{align*}
Hence:
\[
\mathbb{E}[X^2 \mathbf{1}_{X>0,Y<0}] = \frac{1}{2\pi}\bigl[\arccos(1-2\alpha^2) - (1-2\alpha^2)\cdot 2\alpha\sqrt{1-\alpha^2}\,\bigr].
\]
Scaling by $\operatorname{Var}(z_i) = 1/n$ yields the physical-scale result.
\end{proof}

\paragraph{Small-$\alpha$ expansion.}  Using $\arccos(1-2\alpha^2) = 2\arcsin\alpha \approx 2\alpha + \alpha^3/3$ and $(1-2\alpha^2)\cdot 2\alpha\sqrt{1-\alpha^2} \approx 2\alpha - 5\alpha^3$:
\[
\beta_{\mathrm{flip}}^{\mathrm{outlier}} \approx \frac{1}{2\pi n}\cdot\frac{16\alpha^3}{3} = \frac{8\alpha^3}{3\pi n}.
\]
Setting $p = \alpha^2$ formally recovers the Step~A4 result $8p^{3/2}/(3\pi)$ on unit-variance scale.

\subsection{Per-Flip Leverage Ratio}

\begin{theorem}[Outlier leverage ratio]\label{thm:outlier-leverage}
Define $R(\alpha, n) = E_{\mathrm{outlier}}/E_{\mathrm{non\text{-}outlier}}$ as the ratio of post-ReLU energy change from one outlier flip to one non-outlier flip.  Then:
\[
R(\alpha, n) = \frac{f(1-2\alpha^2)}{f(1-2/n)},
\]
where $f(\rho) = 1 - \rho + \frac{1}{\pi}\bigl(\rho\arccos\rho - \sqrt{1-\rho^2}\bigr)$ is the post-ReLU energy function for bivariate normal with correlation $\rho$.
\end{theorem}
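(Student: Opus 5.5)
The plan is to reduce both energies to one universal function of the correlation coefficient, using the exact bivariate normality from Proposition~\ref{prop:outlier-corr} and the remark after it. Fix the neuron $i$ whose row contains the flipped entry. The flip changes only $z_i$, so the whole post-ReLU energy change is $\E[(\Delta a_i)^2]$ with $\Delta a_i=\ReLU(z_i')-\ReLU(z_i)$. If the statement is read after the $W_2$ projection instead, Step~A1 multiplies both energies by the same factor $d_{\mathrm{out}}/m$, which cancels in the ratio.

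\textbf{Step 1 (energy as a function of $\rho$).} The proof of Proposition~\ref{prop:outlier-corr} works for any single entry $j$ with $|\xhat_j|=\alpha_j$. Writing $z_i=A+B$ and $z_i'=-A+B$ shows that $(z_i,z_i')$ is exactly bivariate normal, with common variance $1/n$ and correlation $\rho_j=1-2\alpha_j^2$. Rescale to $(X,Y)=\sqrt n\,(z_i,z_i')$, a standard bivariate normal pair. By positive homogeneity of ReLU,
\[
\E[(\Delta a_i)^2]=\tfrac1n\,\E\bigl[(\ReLU(X)-\ReLU(Y))^2\bigr].
\]
I then expand the square:
\[
\E\bigl[(\ReLU(X)-\ReLU(Y))^2\bigr]=2\,\E[\ReLU(X)^2]-2\,\E[\ReLU(X)\ReLU(Y)].
\]
The first term uses $\E[\ReLU(X)^2]=1/2$. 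The second uses the ReLU product formula from Step~A5:
\[
\E[\ReLU(X)\ReLU(Y)]=\tfrac{1}{2\pi}\bigl[\sqrt{1-\rho^2}+\rho(\tfrac\pi2+\arcsin\rho)\bigr].
\]
Substituting $\arcsin\rho=\tfrac\pi2-\arccos\rho$, the two $\rho/2$ contributions combine into $-\rho$, and the expression collapses to
\[
1-\rho+\tfrac1\pi\bigl(\rho\arccos\rho-\sqrt{1-\rho^2}\bigr)=f(\rho).
\]
So the energy from flipping entry $j$ is exactly $f(1-2\alpha_j^2)/n$.

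\textbf{Step 2 (specialize and divide).} For the outlier entry $j^*$, take $\alpha_{j^*}=\alpha$, giving $E_{\mathrm{outlier}}=f(1-2\alpha^2)/n$. For a non-outlier entry, the delocalized reference value is $\xhat_j^2=1/n$, giving $\rho=1-2/n$ and $E_{\mathrm{non\text{-}outlier}}=f(1-2/n)/n$. The common factor $1/n$ cancels, which yields $R(\alpha,n)=f(1-2\alpha^2)/f(1-2/n)$. As a sanity check I would note that $f(\rho)\ge 0$ with equality only at $\rho=1$, so the denominator is positive for every finite $n$.

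\textbf{Main obstacle.} The algebra is routine, so the delicate part is the modelling convention in Step~2. The claimed exactness relies on treating the non-outlier entry as having $\xhat_j^2$ exactly $1/n$. In reality the non-outlier coordinates only satisfy $O(1/\sqrt n)$ and must share the residual mass $1-\alpha^2$. I would state explicitly that $E_{\mathrm{non\text{-}outlier}}$ is defined at the reference amplitude $1/\sqrt n$. For a general non-outlier amplitude $\beta$, I would instead note that the same derivation gives the ratio $f(1-2\alpha^2)/f(1-2\beta^2)$, which is the form used for the $R(\alpha)/R(\beta)$ comparison in Exp~E.

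Finally, as a consistency remark, I would expand near $\rho=1$. Setting $\rho=1-2\varepsilon$ gives $\arccos\rho\approx 2\sqrt\varepsilon+\varepsilon^{3/2}/3$ and $\sqrt{1-\rho^2}\approx 2\sqrt\varepsilon-\varepsilon^{3/2}$, so $f(1-2\varepsilon)\approx 2\varepsilon-\tfrac{8}{3\pi}\varepsilon^{3/2}$. This matches the Step~A6 total with $p\to\varepsilon$. It also gives $R\approx n\alpha^2\bigl(1-\tfrac{4\alpha}{3\pi}\bigr)$, reproducing the leverage formula quoted in the introduction.
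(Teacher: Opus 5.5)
Your proposal is correct and follows the paper's proof. Both identify $f(\rho)$ as the unit-variance post-ReLU energy $\E[(\ReLU(X)-\ReLU(Y))^2]$ via the Price/Cho--Saul product formula; you use the $\arcsin$ form from Step~A5, the paper the equivalent $\sin\theta+(\pi-\theta)\cos\theta$ form. Both then scale by the common variance $1/n$, substitute $\rho=1-2\alpha^2$ and the reference non-outlier value $\rho=1-2/n$, and cancel the $1/n$. Your explicit remark that $\xhat_j^2=1/n$ is a reference convention, not one consistent with the unit-norm constraint once $\alpha=\Theta(1)$, is a useful clarification the paper leaves implicit.
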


\begin{proof}
For standard bivariate normal $(X,Y)$ with correlation $\rho$, the ReLU energy difference is:
\[
f(\rho) \;=\; \mathbb{E}[(\mathrm{ReLU}(X) - \mathrm{ReLU}(Y))^2] \;=\; 1 - \frac{1}{\pi}\bigl(\sin\theta + (\pi - \theta)\cos\theta\bigr),
\]
where $\theta = \arccos\rho$, which simplifies to the stated form.  Since both $z_i$ and $z_i'$ have variance $1/n$, the physical energy is $(1/n)\cdot f(\rho)$.  The ratio cancels the $1/n$ factor.  For a non-outlier entry with $|\hat x_j|^2 = 1/n$, the correlation is $\rho_{\mathrm{non}} = 1 - 2/n$.
\end{proof}

\paragraph{Leading-order expansion.}  Expanding $f(1-2\alpha^2)$ for small $\alpha$:
\[
f(1-2\alpha^2) = 2\alpha^2\Bigl(1 - \frac{4\alpha}{3\pi} + O(\alpha^2)\Bigr).
\]
For the non-outlier denominator, $f(1-2/n) = \frac{2}{n}\bigl(1 - \frac{4}{3\pi\sqrt{n}} + O(1/n)\bigr)$.  In the regime $\alpha \gg 1/\sqrt{n}$ (i.e., the outlier amplitude significantly exceeds the isotropic baseline $\alpha_0 = 1/\sqrt{n}$), the $O(1/\sqrt{n})$ denominator correction is negligible relative to the $O(\alpha)$ numerator correction, giving:
\begin{equation}\label{eq:leverage-ratio}
\boxed{\;R(\alpha, n) \;=\; n\alpha^2\Bigl(1 - \frac{4\alpha}{3\pi}\Bigr) + O(n\alpha^4) + O(\alpha^2/\!\sqrt{n}\,).\;}
\end{equation}
At leading order, $R \approx n\alpha^2$.  The correction $-4\alpha/(3\pi)$ arises from the nonlinear ReLU gate-flip interaction (the pre-ReLU energy ratio is exactly $n\alpha^2$ with no correction).  For $\alpha$ near $\alpha_c = 1/\sqrt{n}$ (the crossover where $R \approx 1$), the $O(\alpha^2/\sqrt{n})$ term contributes ${\sim}1\%$ at $n = 2048$.

\paragraph{Consistency with Theorem~\ref{thm:core}.}  Main text Step~A6 gives total post-ReLU energy $E(p) = 2p(1 - 4\sqrt{p}/(3\pi) + O(p))$ on unit-variance scale.  Setting $p = \alpha^2$: $E_{\mathrm{outlier}} = 2\alpha^2(1 - 4\alpha/(3\pi))$, matching $f(1-2\alpha^2)$.

\subsection{Numerical Predictions}

Table~\ref{tab:outlier-leverage} gives predictions for $n = 2048$ (TinyLlama hidden dimension).

\begin{table}[H]
\centering
\caption{Predicted outlier leverage ratio $R(\alpha, 2048)$ from Theorem~\ref{thm:outlier-leverage}.}
\label{tab:outlier-leverage}
\begin{tabular}{@{}ccccc@{}}
\toprule
$\alpha$ & Energy $\alpha^2$ & $P_{\mathrm{flip}}$ & $R$ (exact) & $R$ via Eq.~\eqref{eq:leverage-ratio} \\
\midrule
0.05 & 0.25\% & 3.2\% & 5.0 & 5.0 \\
0.10 & 1.0\% & 6.4\% & 19.5 & 19.6 \\
0.20 & 4.0\% & 12.8\% & 73 & 75 \\
0.30 & 9.0\% & 19.5\% & 162 & 161 \\
0.50 & 25\% & 33.3\% & 404 & 404 \\
\bottomrule
\end{tabular}
\end{table}

\subsection{Phase Crossover: Perturbative to Non-Perturbative}

The crossover occurs at the crossover scale
\[
\alpha_c = \frac{1}{\sqrt{n}},
\]
where a single outlier flip has the same impact as a single non-outlier flip ($R(\alpha_c, n) \approx 1$).  For $n = 2048$, $\alpha_c \approx 0.022$.

\begin{itemize}
\item \textbf{Delocalized regime} ($\alpha \ll \alpha_c$): Single-entry flip perturbs $z_i$ by $O(1/n) \ll \sigma_z = 1/\sqrt{n}$.  Gate-flip probability $\sim 2\alpha/\pi \to 0$.  This is the perturbative setting of Theorem~\ref{thm:core}.

\item \textbf{Outlier regime} ($\alpha \gg \alpha_c$): Single-entry flip perturbs $z_i$ by $O(\alpha/\sqrt{n}) \sim \sigma_z$.  Gate-flip probability $\sim (2/\pi)\arcsin\alpha = O(1)$.  Gate-flip energy dominates smooth energy.
\end{itemize}

The transition is a smooth crossover (no critical exponent or discontinuity): $R = n\alpha^2$ grows continuously from $R = 1$ at $\alpha = \alpha_c$ to $R \sim n$ at $\alpha \sim 1$.

\subsection{Connection to Experiments}

\paragraph{Theory vs.\ observation.}  Theorem~\ref{thm:outlier-leverage} predicts per-entry $R \in [20, 162]$ for extreme outlier amplitudes ($\alpha \in [0.1, 0.3]$, comprising ${\sim}16\%$ of outlier dimensions at TinyLlama layer~12).  The aggregate prediction, $n \cdot \mathbb{E}_{\mathrm{emp}}[\alpha^2] \approx 7$--$10$, is directly confirmed by the noise-floor sweep (Exp~E, Table~\ref{tab:noise-floor}): at $p \leq 0.5\%$ (linear-response regime), count-matched NLL leverage stabilizes at ${\sim}10\times$.  The $37.6\times$ NLL leverage at $p = 5\%$ (20 seeds; IQR: $[30, 46]$) reflects an empirically observed ${\sim}4\times$ multi-flip interaction inflation.

\paragraph{Measured $\alpha$ distribution.}  Direct measurement of $|\xhat_k|$ at TinyLlama layer~12 ($n = 2048$, 20{,}480 tokens) reveals the top-5\% outlier dimensions have $\alpha$ median $0.024$, mean $0.047$, and range $[0.017, 0.26]$ (P90 $= 0.12$, P95 $= 0.13$).  Only ${\sim}16\%$ of outlier dimensions exceed $\alpha = 0.1$; the bulk have $\alpha \in [0.02, 0.06]$.  The isotropic baseline is $\alpha_0 = \sqrt{2/(\pi n)} = 0.018$.  The aggregate count-matched NLL leverage at linear response (${\sim}10\times$ at $p \leq 0.5\%$; Exp~E) equals $n \cdot \mathbb{E}_{\mathrm{emp}}[\alpha^2]$; the elevated $37.6\times$ NLL at $p = 5\%$ reflects multi-flip interaction inflation dominated by the heavy-tailed $\alpha^2$ weighting of extreme outlier dimensions.

\paragraph{All-column gap resolved by log-loss.}  The all-column $\Delta$PPL ratio ($1{,}265\times$ median) compares flipping \emph{all} outlier-column weights (${\sim}2.4$M) against \emph{all} non-outlier-column weights (${\sim}46$M).  The column-group energy theory predicts $R_{\mathrm{col}} \leq 19$ for $\gamma \leq 0.5$ (\S\ref{sec:two-pop-gap}).  Since $\mathrm{PPL} = \exp(\mathrm{NLL})$, comparing PPL ratios conflates energy-level effects with the exponential nonlinearity of the perplexity metric itself.  Converting to log-loss (NLL $= \ln\mathrm{PPL}$):
\begin{itemize}
\item \emph{All-column NLL ratio}: $5.0\times$ (median, IQR $[4.8, 5.4]$), which falls \emph{within} $R_{\mathrm{col}} \leq 19$.
\item \emph{Count-matched NLL ratio}: $37.6\times$ (median, IQR $[30, 46]$) at $p = 5\%$; the noise-floor sweep (Exp~E) shows this includes ${\sim}4\times$ multi-flip inflation---at $p \leq 0.5\%$, leverage stabilizes at ${\sim}10\times = n \cdot \mathbb{E}_{\mathrm{emp}}[\alpha^2]$.
\item \emph{PPL compression factor}: The $1{,}265\times$ PPL ratio is $251\times$ inflated relative to the $5.0\times$ NLL ratio, confirming that PPL nonlinearity---not missing physics---accounts for the apparent gap.
\end{itemize}

The single-layer theory captures the correct \emph{scaling law} ($R \propto n\alpha^2$), the fundamental mechanism (gate-flip amplification under delocalization violation), and---as confirmed by both count-matched and log-loss analyses---the correct \emph{magnitude} of outlier leverage.  Multi-layer propagation (O1b) remains open but is no longer needed to explain the all-column observation.

\paragraph{Additional validation (Exps~E--G).}  Three experiments in Section~\ref{sec:outlier-exp} further validate the $R \propto n\alpha^2$ scaling: (i)~a noise-floor sweep (Exp~E, Table~\ref{tab:noise-floor}) confirms that count-matched NLL leverage stabilizes at ${\sim}10\times$ for $p \leq 0.5\%$, consistent with the per-entry ratio, while the inflated $41\times$ at $p = 5\%$ is attributable to multi-flip interactions; (ii)~activation-level perturbation energy (Exp~F) yields Spearman$(\alpha^2, \|\Delta y\|^2) = 0.955$ with weight column norms nearly uniform ($1.04\times$), confirming that the energy gap is driven by $\alpha^2$, not weight structure; (iii)~column-flip $\Delta$NLL (Exp~G) yields Spearman$(\alpha^2, \Delta\text{NLL}) = 0.927$ with log-log slope $0.78$, indicating sub-quadratic attenuation by downstream nonlinear processing.  The sub-quadratic effective exponent ($\alpha^{1.57}$ vs.\ predicted $\alpha^2$) is consistent with multi-layer compression of large perturbations and does not contradict the single-layer $R \propto n\alpha^2$ law.

\section{Two-Population Outlier Model}\label{app:two-pop}

This appendix extends the single-entry leverage theory of Appendix~\ref{app:outlier} to a \emph{population-level} model, deriving column-group leverage formulas relevant to Exp~D (Section~\ref{sec:outlier-exp}).

\subsection{Setup}

Partition the $n$ input dimensions into $k$ outlier and $n-k$ non-outlier dimensions:
\begin{itemize}
\item \textbf{Outlier}: $|\hat x_j| = \alpha$ for $j \in \mathcal{O}$, $|\mathcal{O}| = k$
\item \textbf{Non-outlier}: $|\hat x_j| = \beta$ for $j \notin \mathcal{O}$
\end{itemize}
with unit-norm constraint $k\alpha^2 + (n-k)\beta^2 = 1$.

Define the key parameters:
\begin{align}
\eta &= k/n \quad \text{(outlier column fraction)}, \\
\gamma &= k\alpha^2 \quad \text{(outlier energy fraction)}.
\end{align}
For Exp~D: $\eta = 0.05$ (top 5\% columns).

\subsection{Per-Flip Leverage Under Uniform Flips}

From Theorem~\ref{thm:outlier-leverage}, flipping a single entry $W_{1,ij}$ in column~$j$ produces post-ReLU energy proportional to $\hat{x}_j^2$. Define the damage fraction from outlier flips under uniform $p$-flips:
\[
F_{\mathrm{outlier}} = \gamma.
\]
The fraction of total damage from outlier columns \emph{equals} the outlier energy fraction, regardless of the detailed activation structure. Per-flip, however, outlier columns have amplified impact: each outlier flip has leverage $n\alpha^2 = \gamma/\eta$ relative to a delocalized entry.

\subsection{Column-Group Leverage}\label{sec:column-group}

Exp~D does not perform uniform $p$-flips; it compares flipping \emph{all} outlier-column weights versus \emph{all} non-outlier-column weights. This is a collective perturbation whose analysis requires the bivariate-Gaussian post-ReLU energy function.

\paragraph{Setup.} For neuron~$i$, decompose the pre-activation as $z_i = A_i + B_i$ where:
\begin{align}
A_i &= \textstyle\sum_{j \in \mathcal{O}} W_{1,ij}\hat{x}_j \sim N(0, \gamma/n), \\
B_i &= \textstyle\sum_{j \notin \mathcal{O}} W_{1,ij}\hat{x}_j \sim N(0, (1{-}\gamma)/n),
\end{align}
with $A_i \perp B_i$ (independence by disjoint index sets).

\paragraph{Case A (outlier flip):} Flip all $k$ outlier-column weights, negating $A_i$. The perturbed pre-activation is $z_i' = -A_i + B_i$. The correlation between $z_i = A_i + B_i$ and $z_i' = -A_i + B_i$ is:
\[
\rho_A = \frac{\Var(B_i) - \Var(A_i)}{\Var(z_i)} = \frac{(1{-}\gamma) - \gamma}{1} = 1 - 2\gamma.
\]

\paragraph{Case B (non-outlier flip):} Flip all $(n{-}k)$ non-outlier-column weights, negating $B_i$:
\[
\rho_B = \frac{\Var(A_i) - \Var(B_i)}{\Var(z_i)} = 2\gamma - 1.
\]

Note $\rho_B = -\rho_A$: the two perturbations are ``complementary'' in correlation space.

\paragraph{Post-ReLU energy function.} For bivariate Gaussian $(X, Y)$ with equal variances $\sigma^2$ and correlation $\rho$:
\begin{equation}\label{eq:f-rho}
f(\rho) := \frac{\E[\|\ReLU(X) - \ReLU(Y)\|^2]}{\sigma^2} = 1 - \rho + \frac{1}{\pi}\!\left(\rho\arccos\rho - \sqrt{1-\rho^2}\right).
\end{equation}
This follows from the Price--Cho--Saul formula. Key values:
\begin{itemize}
\item $f(1) = 0$ (perfect correlation: no perturbation)
\item $f(0) = 1 - 1/\pi \approx 0.682$
\item $f(-1) = 2$ (anti-correlation: maximum perturbation)
\end{itemize}

\paragraph{Column-group leverage ratio.} The per-flip leverage (accounting for flip-count asymmetry) is:
\begin{equation}\label{eq:R-col}
R_{\mathrm{col}} = \frac{1-\eta}{\eta} \cdot \frac{f(1-2\gamma)}{f(2\gamma-1)}.
\end{equation}

\begin{table}[!ht]
\centering
\caption{Column-group leverage $R_{\mathrm{col}}$ for $\eta = 0.05$ as a function of energy concentration~$\gamma$.}
\label{tab:two-pop}
\begin{tabular}{@{}ccccccc@{}}
\toprule
$\gamma$ & $\rho_A$ & $\rho_B$ & $f(\rho_A)$ & $f(\rho_B)$ & $f(\rho_A)/f(\rho_B)$ & $R_{\mathrm{col}}$ \\
\midrule
0.10 & $+0.80$ & $-0.80$ & 0.173 & 0.973 & 0.178 & 3.4 \\
0.20 & $+0.60$ & $-0.60$ & 0.322 & 0.922 & 0.350 & 6.6 \\
0.30 & $+0.40$ & $-0.40$ & 0.456 & 0.856 & 0.533 & 10.1 \\
0.40 & $+0.20$ & $-0.20$ & 0.575 & 0.775 & 0.742 & 14.1 \\
0.50 & $0.00$ & $0.00$ & 0.682 & 0.682 & 1.000 & \textbf{19.0} \\
\midrule
0.70 & $-0.40$ & $+0.40$ & 0.856 & 0.456 & 1.878 & 35.7 \\
0.90 & $-0.80$ & $+0.80$ & 0.973 & 0.173 & 5.626 & 106.9 \\
0.99 & $-0.98$ & $+0.98$ & 0.999 & 0.019 & 52.6 & ${\sim}1000$ \\
\bottomrule
\end{tabular}
\end{table}

\paragraph{Key observation.}\label{sec:two-pop-gap} For $\gamma \leq 0.5$, $R_{\mathrm{col}} \leq (1-\eta)/\eta = 19$ (since $\rho_A \geq 0$ and $\rho_B \leq 0$ imply $f(\rho_A) \leq f(\rho_B)$). Achieving $R_{\mathrm{col}} = 1{,}265$ (the observed all-column $\Delta$PPL ratio) from energy alone requires $\gamma \approx 0.99$---completely unrealistic for typical activations.

\subsection{Connection to Count-Matched Experiment}

The count-matched per-flip leverage ($77\times$ PPL, $37.6\times$ NLL at $p = 5\%$) isolates per-flip sensitivity from the count confound. The noise-floor sweep (Exp~E) reveals that this $p = 5\%$ value includes ${\sim}4\times$ multi-flip interaction inflation: at $p \leq 0.5\%$ (linear-response regime), the NLL leverage stabilizes at ${\sim}10\times$. This matches the aggregate prediction $n \cdot \mathbb{E}_{\mathrm{emp}}[\alpha^2] \approx 7$--$10$ from the measured outlier $\alpha$ distribution (median $0.024$, P90 $= 0.12$, max $= 0.26$; most top-5\% dimensions have $\alpha < 0.1$, but the aggregate is dominated by heavy-tailed $\alpha^2$ weighting of extreme outliers).

The apparent all-column gap ($1{,}265\times$ PPL$/19 \approx 67\times$) is accounted for by log-loss analysis: the all-column NLL ratio is only $5.0\times$ (median, IQR $[4.8, 5.4]$), well within $R_{\mathrm{col}} \leq 19$. The $251\times$ compression from PPL to NLL ratio indicates that the gap is predominantly an artifact of the exponential PPL metric rather than a failure of the energy-level theory.

\subsection{Connection to Dettmers et al.}

\citet{dettmers2022} reported that ${\sim}0.1\%$ of hidden-state dimensions carry magnitudes $50$--$100\times$ larger than average. In our notation: $\eta \approx 0.001$, $\alpha/\beta \approx 50$--$100$.

The per-entry leverage is \emph{not} simply $(\alpha/\beta)^2$; the unit-norm constraint requires:
\[
R_{\mathrm{entry}} = \frac{\gamma}{\eta} = \left(\frac{\alpha}{\beta}\right)^{\!2} \cdot (1-\gamma).
\]
For Dettmers-scale outliers ($\gamma \approx 0.7$--$0.9$), this gives $R_{\mathrm{entry}} \approx 700$--$900$ (not $2{,}500$--$10{,}000$ as a na\"ive calculation would suggest). This is consistent with Dettmers' observation that INT8 quantization fails catastrophically when outlier features exceed the representable range.

\section{Post-ReLU Ternary Error Derivation}\label{app:ternary-post}

This appendix provides the complete derivation of the post-ReLU radial fraction $\mathcal{R}^{\mathrm{ternary}}_{\mathrm{post}} \approx 0.365$ stated in Section~\ref{sec:post-relu-ternary}.

\subsection{Pre-Activation Statistics}

Let $z_i = \sum_j W_{1,ij}\hat{x}_j$ (original) and $z_i^Q = s_i \sum_j \sign(W_{1,ij})\hat{x}_j$ (quantized), where $s_i = \|w_i\|_1/n \to \sqrt{2/(\pi n)}$ by SLLN.

\paragraph{Marginal distributions.}
\[
z_i \sim N(0, \sigma_1^2), \quad \sigma_1^2 = 1/n; \qquad z_i^Q \text{ has variance } \sigma_2^2 = s_i^2 = 2/(\pi n).
\]
Note $\sigma_2 = \sqrt{2/\pi}\,\sigma_1 \approx 0.798\,\sigma_1$: the quantized model produces smaller pre-activations.

\paragraph{Joint distribution.} By the multivariate Lindeberg--Feller CLT under delocalization (A2):
\[
(z_i, z_i^Q) \xrightarrow{d} N\!\left(0, \begin{pmatrix} 1/n & 2/(\pi n) \\ 2/(\pi n) & 2/(\pi n) \end{pmatrix}\right).
\]
The covariance $\Cov(z_i, z_i^Q) = s_i \cdot \E[|W_{1,ij}|] = 2/(\pi n)$, giving:
\begin{equation}\label{eq:rho-ternary}
\rho = \frac{2/(\pi n)}{\sqrt{(1/n)\cdot 2/(\pi n)}} = \sqrt{2/\pi} \approx 0.798.
\end{equation}

\subsection{Post-ReLU Energy via Price--Cho--Saul}

Define $\phi = \arccos(\sqrt{2/\pi}) \approx 0.647\,\mathrm{rad}$.

\paragraph{Cross term.} For bivariate normals with unequal variances:
\[
\E[\ReLU(z_i)\,\ReLU(z_i^Q)] = \frac{\sigma_1\sigma_2}{2\pi}\!\left(\sin\phi + (\pi-\phi)\cos\phi\right) = \frac{S}{n},
\]
where the auxiliary constant is:
\begin{equation}\label{eq:S-def}
S = \frac{\cos\phi\,(\sin\phi + (\pi-\phi)\cos\phi)}{2\pi} \approx 0.3293.
\end{equation}

\paragraph{Per-neuron error energy.}
\begin{align}
\E[\delta a_i^2] &= \E[\ReLU(z_i)^2] + \E[\ReLU(z_i^Q)^2] - 2\,\E[\ReLU(z_i)\,\ReLU(z_i^Q)] \nonumber \\
&= \frac{\sigma_1^2}{2} + \frac{\sigma_2^2}{2} - \frac{2S}{n} = \frac{1}{n}\!\left(\frac{1}{2} + \frac{1}{\pi} - 2S\right) \approx \frac{0.1597}{n}. \label{eq:per-neuron-energy}
\end{align}

\subsection{Radial Component}

\paragraph{Per-neuron radial contribution.}
\[
\E[a_i\,\delta a_i] = \E[\ReLU(z_i)\,\ReLU(z_i^Q)] - \E[\ReLU(z_i)^2] = \frac{S - \tfrac{1}{2}}{n} \approx \frac{-0.1707}{n}.
\]
The negative sign reflects radial \emph{contraction}: $\sigma_2 < \sigma_1$.

\paragraph{Radial projection.} By SLLN, $\|a\|^2 = \sum_{i=1}^{m} \ReLU(z_i)^2 \to m \cdot \sigma_1^2/2 = m/(2n)$, so $\|a\| \to \sqrt{m/(2n)}$. Thus:
\[
\langle \delta a, \hat{a}\rangle = \frac{\sum_i a_i\,\delta a_i}{\|a\|} \to \frac{m \cdot (S-\tfrac{1}{2})/n}{\sqrt{m/(2n)}} = (S - \tfrac{1}{2})\sqrt{2m/n}.
\]
Although the intermediate quantities depend on $m/n$, the ratio $m/n$ cancels in the final radial fraction (Section~\ref{sec:assembly-G}).

\subsection{Assembly}\label{sec:assembly-G}

The post-ReLU radial fraction:
\begin{equation}\label{eq:R-post-ternary}
\mathcal{R}^{\mathrm{ternary}}_{\mathrm{post}} = \frac{\langle\delta a, \hat{a}\rangle^2}{\|\delta a\|^2} = \frac{2(m/n)(S-\tfrac{1}{2})^2}{(m/n)(\tfrac{1}{2} + \tfrac{1}{\pi} - 2S)} = \frac{2(S-\tfrac{1}{2})^2}{\tfrac{1}{2} + \tfrac{1}{\pi} - 2S} \approx \frac{0.0583}{0.1597} \approx 0.365.
\end{equation}
The factor $m/n$ cancels between numerator and denominator, making the radial fraction independent of the aspect ratio $m/n$.

Compared to the pre-ReLU value $\mathcal{R}^{\mathrm{ternary}}_{\mathrm{pre}} = 1 - 2/\pi \approx 0.363$ (exact by Bussgang orthogonality), the difference is ${\sim}0.4\%$ relative. The pre-ReLU exactness follows from $\E[\delta z \cdot z^Q] = 0$: the quantization error is $L^2$-orthogonal to the quantized output, a structural consequence of choosing $s_i = \E[|W_{ij}|]$ as the per-row scale.

\subsection{Gate-Flip Analysis}

By the Van Vleck arcsine law:
\[
P_{\mathrm{flip}} = \frac{1}{\pi}\arccos\!\left(\sqrt{2/\pi}\right) = \frac{\phi}{\pi} \approx 20.6\%.
\]
This is scale-invariant (depends only on $\rho$, not on $\sigma_1, \sigma_2$ individually). Despite this substantial gate-flip rate, the post-ReLU radial fraction remains within $0.4\%$ of its pre-ReLU value, indicating that the gate-preserved and gate-flipped contributions to the radial fraction approximately cancel.

\paragraph{Structural conjecture.} Gate-preserved neurons (${\sim}79.4\%$) contribute a ``smooth'' radial fraction $\mathcal{R}_{\mathrm{smooth}} > 1-2/\pi$ (the perturbation is a magnitude rescaling), while gate-flipped neurons (${\sim}20.6\%$) contribute $\mathcal{R}_{\mathrm{flip}} < 1-2/\pi$ (flips inject purely transverse energy). The weighted average $0.794\,\mathcal{R}_{\mathrm{smooth}} + 0.206\,\mathcal{R}_{\mathrm{flip}} \approx 0.365$ produces the observed near-equality. Verifying this decomposition requires computing conditional expectations over the four quadrants of $(z_i, z_i^Q)$ and is left as an open question.

\end{document}